\documentclass[11pt]{article}

\newif\ifmarkup
\markupfalse

\newcommand{\markupadd}[1]{\ifmarkup
\textcolor{red}{#1}\else
#1\fi
}

\usepackage[english]{babel}
\usepackage{arxiv_sty}
\usepackage[letterpaper,margin=1in]{geometry}
\usepackage[utf8]{inputenc}
\usepackage{amsmath}
\usepackage{graphicx}
\usepackage{xspace}
\usepackage{caption}

\newcommand{\citep}{\cite}

\newif\ifcomments
\commentstrue

\ifcomments
\newcommand{\jd}[1]{{\color{purple}{\textbf{JD:} #1}}}
\newcommand{\pq}[1]{{\color{brown}{\textbf{PQ:} #1}}}
\newcommand{\nz}[1]{{\color{violet}{\textbf{Zn:} #1}}}
\else
\newcommand{\jd}[1]{}
\newcommand{\pq}[1]{}
\newcommand{\nz}[1]{}
\fi
\newcommand{\Ltwo}{\mathcal{L}_{2}^{\D,\sigma}}
\newcommand{\Lsur}{\mathcal{L}_{\mathrm{sur}}^{\D,\sigma}}
\newcommand{\bLsur}{\Bar{\mathcal{L}}_{\mathrm{sur}}^{\D,\sigma}}

\newcommand{\g}{\vec g}
\newcommand{\ball}{\mathcal{B}}

\newcommand{\cWstar}{\mathcal{W}^*}

\makeatletter

\newcommand{\CS}{Cauchy-Schwarz\xspace}

\usepackage{algorithmic}

\usepackage{algorithm}

\title{Robustly Learning a Single Neuron via Sharpness}

\author{
Puqian Wang\thanks{Supported in part by NSF Award CCF-2007757.} ‖\\
UW Madison\\
{\tt pwang333@wisc.edu}\\
\and 
Nikos Zarifis\thanks{Supported in part by NSF award 2023239,  NSF Medium Award CCF-2107079, and a DARPA Learning with Less Labels (LwLL) grant.} ‖\\
UW Madison\\
{\tt zarifis@wisc.edu}\\
	\and Ilias Diakonikolas\thanks{Supported by NSF Medium Award CCF-2107079,
		NSF Award CCF-1652862 (CAREER), a Sloan Research Fellowship, and
		a DARPA Learning with Less Labels (LwLL) grant.}\\
		UW Madison\\
		{\tt ilias@cs.wisc.edu}\\
\and Jelena Diakonikolas\thanks{Supported by NSF Award CCF-2007757 and by the U.\ S.\ Office of
Naval Research under award number N00014-22-1-2348.}\\
UW Madison\\
{\tt jelena@cs.wisc.edu}
}

\begin{document}
\maketitle
\def\thefootnote{‖}\footnotetext{Equal contribution.}
\def\thefootnote{*}
\begin{abstract}
 We study the problem of learning a single neuron with respect to the $L_2^2$-loss in the presence of adversarial label noise. 
We give an efficient algorithm that, 
for a broad family of activations including ReLUs,
approximates the optimal $L_2^2$-error within a constant factor.
Our algorithm applies under much milder distributional assumptions
compared to prior work.
The key ingredient enabling our results
is a novel connection to local error bounds 
from optimization theory.
\end{abstract}

\setcounter{page}{0}
\thispagestyle{empty}
\newpage

\section{Introduction} \label{sec:intro}

We study the following learning task: Given 
labeled examples $(\x,y) \in \R^d \times \R$ from an unknown distribution $\D$, output the best-fitting 
ReLU (or other nonlinear function) with respect to square loss. This is a fundamental problem in machine learning that has been extensively studied in a number of interrelated contexts over the past two decades, including learning GLMs and neural networks.  
More specifically, letting $\sigma:\R\mapsto\R$ denote a nonlinear activation, e.g., $\sigma(t) = \mathrm{ReLU}(t) = \max\{0, t \}$,  the (population) square loss of a vector $\vec w$ is defined as the $L_2^2$ loss of the hypothesis $\sigma(\vec w \cdot \vec x)$, i.e.,  
$\Ltwo (\vec w) \triangleq \Ex_{(\x, y) \sim \D} [(\sigma(\vec w \cdot \vec x) - y)^2]$. Our learning problem is then formally defined as follows.

\begin{problem}[Robustly Learning a Single Neuron] \label{def:agnostic-learning}
Fix $\eps>0, W > 0$, and a class of distributions $\mathcal{G}$ on $\R^d$. 
Let $\sigma : \R \mapsto \R$ be an activation and $\D$ a distribution on labeled examples
$(\x,y) \in \R^d \times \R$ such that its $\x$-marginal $\D_\x$ belongs to $\mathcal{G}$.
For some $C \geq 1$, a $C$-approximate proper learner is given 
$\eps$,  $W$, and i.i.d.\ samples from $\D$ and outputs $\hat{\w}\in \R^d$ such that 
with high probability it holds 
\( \Ltwo(\hat{\w}) \leq C \, \opt  +\eps \), where  
$\opt \triangleq \min_{\|\vec w\|_2 \leq W} \Ltwo (\vec w)$ is the minimum attainable square loss.  
We use $\cWstar \triangleq \argmin_{\norm{\w}_2\leq W} \Ltwo(\vec w)$ to denote 
the set of square loss minimizers. 
\end{problem}

Problem~\ref{def:agnostic-learning} does not make realizability assumptions 
on the distribution $\D$. The labels are allowed to be arbitrary and we are interested 
in the best-fit function with respect to the $L_2^2$ error. 
This corresponds to the (distribution-specific) agnostic PAC learning 
model~\cite{Haussler:92, KSS:94}. In this paper,  
we focus on developing {\em constant factor} approximate learners, 
corresponding to the case that $C$ is a universal constant greater than one. 

The special case of Problem~\ref{def:agnostic-learning} where the labels 
are consistent with a function in 
$\mathcal{H} = \{ \sigma(\vec w \cdot \x) :\; \|\vec w \|_2 \leq W\}$ 
was studied in early work~\cite{KalaiS09,kakade2011efficient}. 
These papers gave efficient methods that succeed for any distribution 
on the unit ball and any monotone Lipschitz activation\footnote{The results in these works 
can tolerate zero mean random noise, but do not apply to the adversarial noise setting.}. 
More recently,~\cite{YS20} showed that gradient descent on the nonconvex $L_2^2$ 
loss succeeds under a natural class of distributions (again in the \emph{realizable} case) 
but fails in general. In other related work, \cite{Sol17} analyzed the case of ReLUs 
in the {realizable} setting under the Gaussian distribution 
and showed that gradient descent efficiently achieves exact recovery.

The agnostic setting is computationally challenging. 
First, even for the case that the marginal distribution on the examples is Gaussian, 
there is strong evidence that any algorithm achieving error $\opt+\eps$ 
($C=1$ in Problem~\ref{def:agnostic-learning}) 
requires $d^{\poly(1/\eps)}$ time~\cite{GoelKK19, DKZ20, GGK20, DKPZ21, DKR23}. 
Second, even if we relax our goal to constant factor approximations, 
some distributional assumptions are required: 
known NP-hardness results rule out proper learners achieving {\em any} constant factor~\cite{Sima02, MR18}. More recent work~\cite{DKMR22} has shown 
that no polynomial time constant factor {\em improper} learner exists 
(under cryptographic assumptions), 
even if the distribution is bounded. These intractability results 
motivate the design of constant factor approximate learners --- corresponding 
to $C>1$ and $C = O(1)$ --- that succeed under as mild distributional assumptions as possible. 

Prior algorithmic work in the robust setting can be classified in two categories: 
A line of work~\cite{FCG20, DKTZ22, ATV22} analyzes gradient descent-based 
algorithms on the natural nonconvex $L_2^2$ objective (possibly with regularization). 
These works show that {\em under certain distributional assumptions} 
gradient descent avoids poor local minima and converges to a good solution. 
Specifically,~\cite{DKTZ22} established that gradient descent efficiently converges 
to a constant factor approximation for a family of well-behaved continuous distributions 
(including logconcave distributions). 
The second line of work~\cite{DGKKS20} proceeds by convexifying the problem, 
namely constructing a {\em convex surrogate} whose optimal solution 
gives a good solution to the initial nonconvex problem. 
This convex surrogate was analyzed by~\cite{DGKKS20} for the 
case of ReLUs who showed that it yields a constant factor 
approximation for logconcave distributions. 

The starting point of our investigation is the observation that 
{\em all previous algorithmic works for Problem~\ref{def:agnostic-learning} 
impose fairly stringent distributional assumptions.}
These works require all of the following properties 
from the marginal distribution on examples: 
(i) anti-concentration, (ii) concentration, and (iii) anti-anti-concentration. 
Assumption (i) posits that that 
{\em every} one-dimensional (or, in some cases, constant-dimensional) projection 
of the points should not put too much mass in any interval (or ``rectangle''). 
Property (ii) means that every one-dimensional projection should be strongly concentrated 
around its mean; specifically, prior work required at least exponential concentration. 
Finally, (iii) requires that the density of every low-dimensional projection 
 is bounded below by a positive constant.

While some concentration  appears necessary, 
prior work required sub-exponential concentration, 
which rules out the important case of heavy-tailed data. 
The anticoncentration assumption (i) from prior work rules out possibly 
lower-dimensional data, while the anti-anti-concentration 
rules out discrete distributions, which naturally occur in practice.

The preceding discussion raises the following question:
\begin{center}
{\em Under what distributional assumptions can we obtain\\ 
efficient {\em constant factor} learners for Problem~\ref{def:agnostic-learning}?}
\end{center}
In this paper, we give such an algorithm that succeeds 
under minimal distributional assumptions. 
Roughly speaking, our novel assumptions 
require anti-concentration {\em only} in the direction 
of the optimal solution (aka a margin assumption) 
and allow for heavy-tailed data. Moreover, 
by removing assumption (iii) altogether, we obtain the first positive 
results for structured discrete 
distributions (including, e.g., discrete Gaussians 
and the uniform distribution over the cube).
 
In addition to its generality, our algorithm is simple --- a mini-batch SGD ---
and achieves significantly better sample complexity 
for distributions covered in prior work.

\subsection{Overview of Results} \label{ssec:results-intro}

We provide a simplified version of our distributional assumptions 
followed by our main result for ReLU activations. 

\vspace{0.4cm}
\noindent {\bf Distributional Assumptions.} 
We make only the following two distributional assumptions.
\vspace{-0.2cm}
\begin{itemize}[leftmargin=*]
\item[] {\bf Margin-like Condition:} 
There exists $\w^* \in \cWstar$ and constants 
$\gamma, \lambda >0$ such that 
\begin{equation} \label{eqn:margin}
\EE_{\x\sim\D_\x}\left[ \x\x^\top \1\lp\{\wstar \cdot\x\geq \gamma \|\wstar\|_2 \rp\} \right] \succeq \lambda \vec I \;.
\end{equation}
\item[] {\bf Concentration:} There exists non-increasing $h: \R_+\to\R_+$ satisfying $h(r) = O(r^{-5})$ such that 
for any unit vector $\bu$ and any $r\geq 1$, it holds 
$\prpr_{\x\sim\D_\x}[|\bu\cdot\x|\geq r]\leq h(r)$. 
\end{itemize}

Before we state our algorithmic result, some comments are in order. 
Condition~\eqref{eqn:margin} is an anti-concentration condition, 
reminiscent of the classical margin condition for 
halfspaces.
In comparison with prior work, our condition requires anti-concentration
{\em only} in the direction of an optimal solution --- as opposed to every direction. 
Our second condition requires that every univariate projection exhibits some concentration.
Our concentration function $h$ can even be inverse polynomial, allowing
for heavy-tailed data. In contrast, prior work only considered sub-exponential tails.
As we will see, the function $h$ affects the sample complexity of our algorithm.

As we show in Appendix~\ref{sec:distri}, our distributional assumptions subsume all previous such assumptions
considered in the literature and additionally include a range of distributions
(including heavy-tailed and discrete distributions) not handled in prior work.

A simplified version of our main result for the
special case of ReLU activations is as follows 
(see Theorem~\ref{thm:l2-fast-rate-thm-m} for a detailed more general statement):
 
\begin{theorem}[Main Algorithmic Result, Informal]\label{thm:main-informal}
Let $W = O(1)$, $\mathcal{G}$ be a class of marginal 
distributions satisfying the above distributinal assumptions, 
and $\sigma$ be the ReLU activation.
There exists a sample-efficient and sample-linear time algorithm
that outputs a hypothesis $\hat{\w}$ such that, with high 
probability, $\Ltwo(\hat{\w}) = O(\opt) + \eps$. 
In particular, if the tail function $h$ is subexponential, 
namely $h(r) = e^{-\Omega(r)}$,
then the algorithm has sample complexity 
$n = \tilde{O}(d \, \polylog(1/\eps))$. 
For heavy-tailed distributions, namely for $h(r) = O(r^{-k})$ 
for some $k > 4$, the algorithm has sample complexity 
$n = \tilde{O}(d \, (1/\eps)^{2/(k-4)})$. 
The algorithm's runtime is always $O(n d)$.
\end{theorem}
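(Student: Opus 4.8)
The plan is to reduce the nonconvex $L_2^2$ problem to minimizing a convex surrogate loss $\Lsur^{\D,\sigma}$ (in the spirit of \cite{DGKKS20}) and then to argue that mini-batch SGD on this surrogate converges at the claimed rates. The surrogate should be built so that (a) it is convex in $\w$, (b) its gradient at $\w$ can be written as an expectation of a per-sample quantity that is cheap to evaluate (giving the $O(nd)$ runtime), and (c) any approximate minimizer $\hw$ of the surrogate satisfies $\Ltwo(\hw) = O(\opt) + \eps$. The heart of the argument is a \emph{local error bound} (sharpness) statement: on the ball $\|\w\|_2 \le W$, the surrogate $\Lsur$ grows at least linearly in $\mathrm{dist}(\w,\cWstar)$ away from its minimizer set, with the sharpness constant controlled by $\lambda$ and $\gamma$ from the margin-like condition. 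This is exactly where anti-concentration in the direction of $\wstar$ enters: the margin event $\{\wstar\cdot\x \ge \gamma\|\wstar\|_2\}$ together with $\EE[\x\x^\top \1\{\cdot\}] \succeq \lambda \vec I$ forces the surrogate gradient to have a nontrivial component pushing toward $\cWstar$.

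Concretely, I would proceed as follows. \textbf{Step 1 (surrogate and its basic properties).} Define $\Lsur$, verify convexity, compute $\nabla \Lsur(\w)$ as a sample expectation, and record smoothness/Lipschitz-type bounds for the stochastic gradients in terms of moments of $\|\x\|$; the concentration assumption $h(r)=O(r^{-5})$ is what makes the relevant moments finite and gives quantitative control of the variance of the mini-batch gradient estimator. \textbf{Step 2 (sharpness / local error bound).} Show there is $c = c(\lambda,\gamma,W) > 0$ with $\Lsur(\w) - \min \Lsur \ge c\,\mathrm{dist}(\w,\cWstar) - O(\eps)$ for all $\|\w\|_2 \le W$. \textbf{Step 3 (surrogate optimality $\Rightarrow$ $L_2^2$ guarantee).} Show that if $\Lsur(\hw) \le \min\Lsur + \eps'$ then $\Ltwo(\hw) = O(\opt) + \eps$; this is where $\opt$ reappears, via comparing the surrogate value at $\hw$ to its value at $\wstar$ and using Step 2 to convert the surrogate suboptimality into closeness to $\cWstar$, then transferring that closeness to the original loss using Lipschitzness of $\sigma$ and the finite second moment of $\D_\x$. \textbf{Step 4 (SGD analysis).} Run mini-batch SGD on $\Lsur$ over $\{\|\w\|_2 \le W\}$; combine the standard convex SGD guarantee with the sharpness bound from Step 2 to get \emph{linear} convergence of $\mathrm{dist}(\w_t,\cWstar)$ up to the noise floor, then translate the number of iterations/batch size into total sample complexity — $\tilde O(d\,\polylog(1/\eps))$ when $h$ is subexponential (sub-Gaussian-like gradient tails) and $\tilde O(d\,(1/\eps)^{2/(k-4)})$ when $h(r)=O(r^{-k})$ (the exponent $2/(k-4)$ coming from bounding the batch size needed to control the heavy-tailed gradient variance, with the ``$-4$'' reflecting that the gradient is quadratic in $\x$ so we lose four moments).

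I expect \textbf{Step 2 (the sharpness bound)} to be the main obstacle and the technical core of the paper. Getting a clean local error bound requires carefully lower-bounding the surrogate gap by an integral of the gradient along the segment from $\w$ to its projection onto $\cWstar$, and then lower-bounding that gradient using \emph{only} anti-concentration in the $\wstar$-direction — we cannot use anti-concentration in the direction $\w - \wstar$, which is what prior analyses implicitly relied on. The ReLU non-smoothness and the fact that $\cWstar$ need not be a singleton add further care: one must argue the error bound uniformly over the minimizer set and handle the regime where $\|\wstar\|_2$ could be small. A secondary difficulty is making Step 4 quantitative in the heavy-tailed regime, since the naive second-moment bound on the mini-batch gradient must be sharpened (e.g.\ via truncation or a median-of-means-style estimator) to extract the stated $(1/\eps)^{2/(k-4)}$ dependence rather than something weaker.
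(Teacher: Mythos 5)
Your overall route --- convexify via the surrogate $\Lsur$, prove a local error bound driven by the margin condition, and run mini-batch SGD --- is the same as the paper's. However, the two steps you yourself flag as the technical core are stated in forms that cannot be carried out as written, and fixing them is where the paper's actual work lies.

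On Step 2: you posit $\Lsur(\w)-\min\Lsur \ge c\,\mathrm{dist}(\w,\cWstar)-O(\eps)$ on the whole ball. Both the exponent and the slack are off. The correct growth exponent is $2$, not $1$ (the surrogate is smooth, so degree-one growth with negligible slack cannot hold near an interior minimizer), and the unavoidable degradation scales with $\opt$, not with $\eps$: decomposing $\nabla\Lsur(\w)=\nabla\bLsur(\w)+\nabla\Lsur(\wstar)$, the adversarial-noise term $\nabla\Lsur(\wstar)$ can have norm as large as $\sqrt{\kappa\,\opt}$, so the best provable statement is $\nabla\Lsur(\w)\cdot(\w-\wstar)\gtrsim \mu\|\w-\wstar\|_2^2-\sqrt{\kappa\,\opt}\,\|\w-\wstar\|_2$, which is vacuous inside a ball of radius $\Theta(\sqrt{\opt}/\mu)$ around $\wstar$. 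Accordingly, the paper proves quadratic sharpness of the \emph{noise-free} surrogate $\bLsur$ on all of $\ball(2\|\wstar\|_2)$ (\Cref{lem:sharpness-well-behaved}) and of $\Lsur$ only on the nonconvex set excluding that neighborhood (\Cref{cor:true-sharpness-m}); inside the excluded region one argues directly that $\Ltwo$ is already $O(\opt)$. The final $O(\opt)+\eps$ guarantee survives, but your Step 3 must be rebuilt around this excluded region and around possible minimizers of $\Lsur$ on the boundary of the ball (\Cref{cor:landscape-assuptions-m}).

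On Step 4: ``standard convex SGD $+$ sharpness'' does not yield $\polylog(1/\eps)$ samples. With a distance-independent gradient variance, linear contraction stalls at a noise floor proportional to the per-batch variance, so driving the error to $\eps$ forces batch size $\Omega(1/\eps)$ --- the paper explicitly makes this point. The exponential improvement comes from a \emph{multiplicative} variance bound: after truncating labels at $M$, the mini-batch gradient error is of order $\frac{d}{N}\big(\|\w-\wstar\|_2^2+r_\eps^2\,\opt+M^2\eps\big)$ (\Cref{app:cor:empirical-grad-variance}), so the dominant noise contracts together with $\|\w^{(t)}-\wstar\|_2$ and a fixed batch size $N=\wt{O}(d(r_\eps^2+M^2))$ works for all $T=\wt{O}(\log(1/\eps))$ iterations; only the $\eps$-dependence of $r_\eps$ and $M$ (polylogarithmic for subexponential tails, polynomial of the stated exponent for $k$-heavy tails) enters the sample complexity. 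Your moment-counting heuristic for $2/(k-4)$ is the right instinct, but the proof needs this distance-dependent variance bound rather than a generic truncation or median-of-means estimator.
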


Our algorithm is extremely simple: it amounts to mini-batch SGD on a natural
convex surrogate of the problem. As we will explain subsequently,
this convex surrogate has been studied before 
in closely related --- yet more restricted --- contexts.
Our main technical contribution lies in the analysis, which hinges on a new connection
to local error bounds from the theory of optimization. This connection is crucial for us 
in two ways: First, we leverage it to obtain the first 
constant-factor approximate learners 
under much weaker distributional assumptions. 
Second, even for distributions 
covered by prior work, the connection allows us to obtain significantly 
more efficient algorithms.

Finally, we note that our algorithmic result 
applies to a broad family of monotone activations 
(Definition~\ref{def:ub} and Assumption~\ref{assmpt:activation}), 
and can be adapted to handle non-monotone activations --- 
including GeLU~\cite{HG16} and Swish~\cite{RZL17} --- 
see Appendix~\ref{sec:non-monotone-activation-results}.

\subsection{Technical Contributions} \label{ssec:techniques}

The main algorithmic difficulty in solving Problem~\ref{def:agnostic-learning} is its non-convexity.
Indeed,  the $L_2^2$ loss is non-convex for nonlinear activations, even without noise. Of course, the presence of adversarial label noise only makes the problem even more challenging. 
At a high-level, our approach is to convexify the problem 
via an appropriate {\em convex surrogate} function 
(see, e.g.,~\cite{bartlett2006convexity}). 
In more detail, given a distribution $\D$ on labeled examples $(\x,y)$ 
and an activation $\sigma$, the surrogate $\Lsur(\w)$ is defined by
$$\Lsur(\w) = \mathbf{E}_{(\x,y)\sim\D}\bigg[\int_0^{\w\cdot\x} (\sigma(r)-y)\diff{r} \bigg] \;.$$

This function is not new. It was first defined in~\cite{AuerHW95} and subsequently
(implicitly) used in~\cite{KalaiS09, kakade2011efficient} for learning GLMs with zero mean noise.
More recently,~\cite{DGKKS20} used this convex surrogate for robustly learning ReLUs
under logconcave distributions. Roughly speaking, they showed that -- under the logconcavity
assumption -- a near-optimal solution to the (convex) optimization 
problem of minimizing $\Lsur(\w)$ 
yields a constant factor approximate learner 
for Problem~\ref{def:agnostic-learning} (for the special case of ReLU activations).

Very roughly speaking, our high-level approach is similar to that of~\cite{DGKKS20}. 
The main novelty of our contributions lies in two aspects: 
(1) The {\em generality} of the distributional assumptions under which
we obtain a constant-factor approximation, 
and (2) the sample and computational complexities 
of the associated algorithm.  
Specifically, our analysis yields a constant-factor approximate learner
under a vastly more general class of distributions\footnote{Recall that
without distributional assumptions obtaining {\em any} 
constant-factor approximate learner is NP-hard.} 
as compared to prior work, and extends to a much broader
family of activations beyond ReLUs. Moreover, even if restrict
ourselves to, e.g., logconcave distributions, 
the complexity of our algorithm is exponentially
smaller as a function of $\eps$ 
--- namely, $\polylog(1/\eps)$ as opposed to $\Omega(1/\eps^2)$.
For a more detailed comparison, see Appendix~\ref{ssec:related}.

The key technical ingredient enabling our results 
is the notion of {\em sharpness} (local error bound) 
from optimization theory, which we prove holds for our stochastic surrogate problem.
Before explaining how this comes up in our setting, we provide an overview 
from an optimization perspective.

\vspace{0.4cm}
\noindent {\bf Local Error Bounds and Sharpness.} 
Broadly speaking, given an optimization problem (P) and a ``residual'' function $r$ 
that is a measure of error of a candidate solution $\w$ to (P), 
an error bound certifies that a small residual translates into closeness 
between the candidate solution and the set of ``test'' (typically optimal) solutions $\mathcal{W}^*$ to (P). 
In particular, an error bound certifies an inequality of the form
\vspace{-0.2cm}
\begin{equation}\notag
    r(\w) \geq (\mu/\nu) \, \mathrm{dist}(\w, \cWstar)^{\nu}
\end{equation}
for some parameters $\mu, \nu >0$,  
where $\mathrm{dist}(\w, \cWstar) = \min_{\w^* \in \cWstar}\|\w - \w^*\|_2$ 
(see, e.g., the survey~\cite{pang1997error}). 
When this bound holds {\em only locally} in some neighborhood of $\cWstar,$ 
it is referred to as a \emph{local error bound}.

Local error bounds are well-studied within optimization theory, 
with the earliest result in this area being attributed to~\cite{hoffman2003approximate}, 
which provided local error bounds for systems of linear inequalities. 
The work of~\cite{hoffman2003approximate} was extended to many 
other optimization problems; see, e.g., Chapter 6 in \cite{facchinei2003finite} for an overview of classical results and \cite{bolte2017error,karimi2016linear,roulet2017sharpness,liu2022solving} and references therein for a more cotemporary overview.  
One of the most surprising early results in this area states that 
for  minimizing a convex function $f$, an inequality of the form
\begin{equation}\label{eq:sharpness-def}
    f(\w) - \min_{\mathbf{u}} f(\mathbf{u}) \geq (\mu/\nu) \, \mathrm{dist}(\w, \cWstar)^{\nu}
\end{equation}
holds generically whenever $f$ is a real analytic 
or subanalytic function~\citep{lojasiewicz1963propriete,lojasiewicz1993geometrie}. 
The main downside of this result is that the parameters $\mu, \nu$ 
are usually impossible to evaluate and, moreover, 
even when it is known that, e.g., $\nu = 2$, 
the parameter $\mu$ can be exponentially small in the dimension. 
Furthermore, local error bounds have primarily been studied in the context of \emph{deterministic} 
optimization problems, with results for stochastic problems being very rare~\cite{chen2005expected, liu2018fast}.

Perhaps the most surprising aspect of our results is that we show that the (stochastic) 
convex surrogate minimization problem not only satisfies a local error bound 
(a relaxation of \eqref{eq:sharpness-def} and a much weaker property than strong convexity; see Appendix~\ref{app:sharp}) with $\nu= 2$, 
but we are also able to characterize the parameter $\mu$ based on the assumptions 
about the activation function and the probability distribution over the data. 
More importantly, for standard activation functions such as ReLU, Swish, and GeLU 
and for broad classes of distributions (including heavy-tailed and discrete ones), 
we prove that $\mu$ is an \emph{absolute constant}. 
This is precisely what leads to the error and complexity results achieved in our work.

\vspace{0.4cm}
\noindent {\bf Robustly Learning a Single Neuron via Sharpness.} 
Our technical approach can be broken down into the following main ideas. 
As a surrogate for minimizing the square loss, 
we first consider the \emph{noise-free} convex surrogate, 
defined by 
$$\bLsur(\w; \w^*) = \EE_{\x\sim\D_\x}\left[\int_0^{\w\cdot\x} (\sigma(r)-\sigma(\wstar\cdot\x))\diff{r}\right],$$
where $\w^* \in \cWstar$ is a square-loss minimizer 
that satisfies our margin assumption. 
We keep this $\w^*$ fixed throughout the analysis and simply write $\bLsur(\w)$ instead of $\bLsur(\w; \w^*)$. 
Compared to the convex surrogate $\Lsur(\w)$ introduced earlier in the introduction, 
the noise-free convex surrogate $\bLsur(\w; \w^*)$ 
replaces the noisy labels $y$ with $\sigma(\w^* \cdot \x)$. 
Clearly, $\bLsur(\w; \w^*)$ is a function that cannot be directly optimized, 
as we lack the knowledge of $\w^*$. On the other hand, the noise-free surrogate 
relates more directly to the square loss minimization: 
we prove (Lemma~\ref{lem:sharpness-well-behaved}) 
that our distributional assumptions suffice 
for the noise-free surrogate to be sharp 
on a ball of radius $2\|\w^*\|_2$, $\ball(2\|\w^*\|_2)$; this structural result 
in turn leads to the conclusion that $\w^*$ is its unique minimizer. Hence, we can conclude that minimizing the noise-free surrogate $\bLsur(\w; \w^*)$ leads to minimizing the $L_2^2$ loss. Of course, we cannot directly minimize $\bLsur(\w; \w^*)$, as we do not know $\w^*.$ 

Had there been no adversarial label noise, 
we could stop at this conclusion, 
as there would be no difference between $\Lsur(\w)$ and $\bLsur(\w; \w^*)$ 
and we could minimize the $L_2^2$ error to any desired accuracy by minimizing $\Lsur(\w)$. 
This difference between $\Lsur(\w)$ and $\bLsur(\w; \w^*)$ is precisely 
what causes the $L_2^2$ error to only be brought down to $O(\opt) + \epsilon,$ 
where the constant in the big-Oh notation depends on the sharpness parameter $\mu.$ 
On the technical side, we prove (\Cref{cor:true-sharpness-m}) 
that $\Lsur(\w)$ is also sharp w.r.t.~the same $\w^*$ as $\bLsur(\w; \w^*)$ 
and with the sharpness parameter $\mu$ of the same order, 
but \emph{only on a nonconvex subset} of the ball $\ball(2\|\w^*\|_2)$, 
which excludes a neighborhood of $\w^*.$ This turns out to be sufficient to relate minimizing $\Lsur(\w)$ 
to minimizing the $L_2^2$ loss (\Cref{cor:landscape-assuptions-m}). 

What we argued so far is sufficient for 
ensuring that  minimizing the surrogate loss $\Lsur(\w)$ 
leads to the claimed bound on the $L_2^2$ loss. 
However, it is not sufficient for obtaining the claimed sample and computational complexities, 
and there are additional technical hurdles that can only be handled 
using the specific structural properties of our resulting optimization problem. 
In particular, using solely smoothness and sharpness of the objective 
(even if the sharpness held on the entire region 
over which we are optimizing), 
would only lead to complexities scaling with $\frac{1}{\epsilon},$ 
using standard results from stochastic convex optimization. 
However, the complexity that we get is \emph{exponentially better}, 
scaling with $\mathrm{polylog}(\frac{1}{\epsilon})$. 
This is enabled by the refined variance bound 
for the stochastic gradient estimate 
(see \Cref{app:cor:empirical-grad-variance}), 
which, unlike in standard stochastic optimization settings 
(where we get a fixed upper bound), 
scales with ${\opt} + \|\w - \w^*\|_2^2$.\footnote{Similar variance bound assumptions have been made in the more recent literature on stochastic optimization; see, e.g., Assumption 4.3(c) in~\cite{bottou2018optimization}. We note, however, that our guarantees hold with high probability (compared to the more common expectation guarantees) and that the bulk of of our technical contribution lies in proving that such a variance bound holds, rather than in analyzing SGD under such an assumpton.} 
This property enables us to construct high-accuracy gradient estimates 
using mini-batching, which further leads to the improved linear 
rates within the (nonconvex) region where the surrogate loss is sharp. 
To complete the argument, we further show that the excluded region 
on which the sharpness does not hold does not negatively 
impact the overall complexity, as within it  
the target approximation guarantee for the $L_2^2$ loss holds.

\subsection{Notation} \label{sec:prelims}

For $n \in \Z_+$, we denote by $[n]$ the set $\{1, \ldots, n\}$.  
We use lowercase boldface letters for vectors
and uppercase bold letters for matrices. 
For $\bx \in \R^d$ and $i \in [d]$, $\bx_i$ denotes the
$i^\mathrm{th}$ coordinate of $\bx$, and $\|\bx\|_2 \eqdef (\littlesum_{i=1}^d {\bx_i}^2)^{1/2}$ denotes the
$\ell_2$-norm of $\bx$. 
We  use $\bx \cdot \by $ for the standard inner product of $\bx, \by \in \R^d$
and $ \theta(\bx, \by)$ 
for the angle between $\bx, \by$.  
We use $\1_\event$ for the
characteristic function of the set/event $\event$, 
i.e., $\1_\event(\x)= 1$ if $\x\in \event$ 
and $\1_\event(\x)= 0$ if $\x\notin \event$. 
We denote by $\ball(r) = \{\bu : \norm{\bu}_2\leq r\}$ 
the $\ell_2$-ball of radius $r$.
We use the standard asymptotic notation 
$\wt{O}(\cdot)$ and $\wt{\Omega}(\cdot)$
to omit polylogarithmic factors in the argument.  
We write $E \gtrsim F$ for two nonnegative
expressions $E$ and $F$ to denote that there exists some universal constant $c > 0$
(independent of the variables or parameters on which $E$ and $F$ depend) such that $E \geq c \, F$.
We use $\Ex_{X\sim \D}[X]$ for the expectation of random variable $X$ 
according to the distribution $\D$ and $\pr{\mathcal{E}}$ 
for the probability of event $\mathcal{E}$. For simplicity
of exposition, we may omit the distribution 
when it is clear from the context.  For $(\x,y)$
distributed according to $\D$, 
we denote by $\D_\x$  the marginal distribution of $\x$.

\section{Landscape of Noise-Free Surrogate}\label{sec:sharpness}

We start by defining the class of activations 
and the distributional assumptions under which our results apply. 
We then establish our first structural result, 
showing that these conditions suffice for sharpness 
of the noise-free surrogate.

\subsection{Activations and Distributional Assumptions}

The main assumptions used throughout this paper to prove sharpness results are summarized below.

\begin{definition}[Monotonic Unbounded Activations,~\cite{DKTZ22}] \label{def:ub}
Let $\sigma: \R \mapsto \R$ be non-decreasing  
and let $\alpha, \beta > 0$. We say that $\sigma$ is (monotonic) $(\alpha, \beta)$-unbounded if 
(i) $\sigma$ is $\alpha$-Lipschitz; and (ii) $\sigma'(t)\geq \beta$ for all $t>0$. 
\end{definition}

The above class contains a range of popular activations, 
including the ReLU (which is $(1, 1)$-unbounded), 
and the Leaky ReLU with parameter $0\leq \lambda\leq \frac{1}{2}$, i.e., 
$\sigma(t) = \max\{\lambda t, (1-\lambda)t\}$ (which is is $(1-\lambda, 1-\lambda)$-unbounded).

Our results apply for the following class of activations.

\begin{assumption}[Controlled Activation]\label{assmpt:activation}
The activation function $\sigma:\R \to \R$ is $(\alpha, \beta)$-unbounded, 
for some positive parameters $\alpha\geq 1, \beta\in(0,1)$, 
and it holds that $\sigma(0)=0$.
\end{assumption}

The assumption on the activation is important both for the convergence analysis 
of our algorithm and for proving the sharpness property of the surrogate loss.

We can now state our distributional assumptions.

\begin{assumption}[Margin]\label{assmpt:margin}
There exists $\w^* \in \cWstar$ and parameters $\gamma, \lambda \in(0,1]$ such that 
$$\EE_{\x\sim\D_\x}\lp[\x\x^\top \1\lp\{ \wstar \cdot \x \geq \gamma \|\wstar\|_2 \rp\}\rp]\succeq \lambda \vec I. $$
\end{assumption}

We note that in order to obtain a constant-factor approximate learner,
the parameters $\gamma$ and $\lambda$ in \Cref{assmpt:margin} should be 
dimension-independent constants. 

\begin{assumption}[Concentration]\label{assmpt:concentration}
There exists a non-increasing $h: \R_+\to\R_+$ satisfying 
$h(r)\leq  Br^{-(4+\rho)}$ for some parameters $B\geq 1$ and $1\geq \rho>0$, 
such that for any $\bu\in \ball(1)$ and any $r\geq 1$, 
it holds $\prpr[|\bu\cdot\x|\geq r]\leq h(r)$.
\end{assumption}

The concentration property enables us to control the moments of $|\bu\cdot\x|$, 
playing an important role when we bound the variance 
of the gradient of the empirical surrogate loss. 

\vspace{-0.5cm}

\

\subsection{Key Assumptions Suffice for Sharpness}
We now prove that Assumptions~\ref{assmpt:activation}--\ref{assmpt:concentration} 
suffice to guarantee that the noise-free surrogate loss is sharp. 
We provide a proof sketch under the simplifying assumption that $\|\wstar\|_2=1$. 
The full proof can be found in \Cref{app:proof-of-lem:sharpness-well-behaved}.

\begin{lemma}\label{lem:sharpness-well-behaved}
Suppose that Assumptions~\ref{assmpt:activation}--\ref{assmpt:concentration} hold.
Then the noise-free surrogate loss $\bLsur$ is $\Omega(\lambda^2\gamma\beta\rho/B)$-sharp in the ball $\ball(2\|\w^*\|_2)$, i.e., $\forall \w \in \ball(2\|\w^*\|_2)$,
\begin{equation*}
        \nabla \bLsur(\w)\cdot(\w-\wstar)\gtrsim \lambda^2\gamma\beta\rho/B\norm{\w - \wstar}_2^2\; .
    \end{equation*}
\end{lemma}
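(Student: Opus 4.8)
The goal is to lower-bound $\nabla \bLsur(\w)\cdot(\w-\wstar)$ by a constant multiple of $\|\w-\wstar\|_2^2$ for every $\w\in\ball(2\|\wstar\|_2)$. Differentiating under the integral sign, one has $\nabla\bLsur(\w) = \EE_{\x}[(\sigma(\w\cdot\x)-\sigma(\wstar\cdot\x))\x]$, so the quantity of interest is
\begin{equation*}
\nabla\bLsur(\w)\cdot(\w-\wstar) = \EE_{\x}\Big[\big(\sigma(\w\cdot\x)-\sigma(\wstar\cdot\x)\big)\big((\w-\wstar)\cdot\x\big)\Big].
\end{equation*}
Since $\sigma$ is non-decreasing, each summand is nonnegative: writing $a = \w\cdot\x$, $b = \wstar\cdot\x$, monotonicity gives $(\sigma(a)-\sigma(b))(a-b)\geq 0$. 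So the whole expression is an average of nonnegative terms, and the task reduces to showing the average is not too small — i.e., there is a set of $\x$ of non-negligible mass on which the summand is genuinely bounded below by $\|\w-\wstar\|_2^2$ times a constant (up to the direction-of-$\x$ issues).

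First I would reduce to the event guaranteed by the margin assumption, $E = \{\wstar\cdot\x \geq \gamma\|\wstar\|_2\}$ (taking $\|\wstar\|_2 = 1$ in the sketch). On this event I want to use the lower bound $\sigma'(t)\geq\beta$ for $t>0$. The delicate point is that $\w\cdot\x$ and $\wstar\cdot\x$ need not both be positive, so I cannot naively write $\sigma(a)-\sigma(b)\geq\beta(a-b)$. I would split according to whether $(\w-\wstar)\cdot\x$ is large in magnitude compared to $\gamma$: if $|(\w-\wstar)\cdot\x|$ is small (say $\leq \gamma/2$) then on $E$ both $a,b$ stay positive (bounded below by $\gamma/2$) and the clean bound $(\sigma(a)-\sigma(b))(a-b)\geq\beta((\w-\wstar)\cdot\x)^2$ applies; if $|(\w-\wstar)\cdot\x|$ is large, then $a-b$ and $\sigma(a)-\sigma(b)$ have the same sign and I can still extract a contribution, using that $\sigma$ increases by at least $\beta\cdot(\text{length of the positive part of }[b,a])$, which is at least a constant since one endpoint is $\geq\gamma$. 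Either way, after intersecting with $E$ I get a bound of the form $\EE_\x[((\w-\wstar)\cdot\x)^2\,\1_E]\cdot(\text{const}\cdot\beta\gamma)$, possibly truncated on the event $|(\w-\wstar)\cdot\x|\leq R$ for a suitable threshold $R$ to keep the "small-deviation" regime in force.

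Next, the margin condition $\EE_\x[\x\x^\top\1_E]\succeq\lambda\vec I$ gives $\EE_\x[((\w-\wstar)\cdot\x)^2\,\1_E]\geq\lambda\|\w-\wstar\|_2^2$. The remaining issue is the truncation: I must show $\EE_\x[((\w-\wstar)\cdot\x)^2\,\1_E\,\1\{|(\w-\wstar)\cdot\x|>R\}]$ is only a small fraction of $\lambda\|\w-\wstar\|_2^2$. Here the concentration assumption enters: for the unit vector $\bu = (\w-\wstar)/\|\w-\wstar\|_2$, $\prpr[|\bu\cdot\x|\geq r]\leq h(r) = O(r^{-(4+\rho)})$, so the tail contribution $\EE[(\bu\cdot\x)^2\1\{|\bu\cdot\x|>R/\|\w-\wstar\|_2\}]$ decays polynomially and can be made $\leq\lambda/2$ by choosing $R$ a suitable constant (note $\|\w-\wstar\|_2\leq 3$ on the ball, so $R/\|\w-\wstar\|_2$ is bounded below by a constant times $R$). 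Integrating the tail $\int r\cdot r^{-(4+\rho)}\diff r$ produces the $\rho$ and $B$ dependence, and optimizing/balancing the two regimes yields the stated $\Omega(\lambda^2\gamma\beta\rho/B)$ rate (the extra factor $\lambda$ and the exponent on $\gamma$ come from how large $R$ must be taken relative to $\lambda$).

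\textbf{Main obstacle.} The crux is handling the sign mismatch between $\w\cdot\x$ and $\wstar\cdot\x$ cleanly — i.e., carefully defining the truncation threshold $R$ and the case split so that on the "good" event the chain $\sigma$ is evaluated on positive arguments and $\sigma'\geq\beta$ can be invoked, while simultaneously ensuring the discarded "bad" event (large $|(\w-\wstar)\cdot\x|$, or the part where $\w\cdot\x\leq 0$) carries negligible second moment by concentration. Getting the constants to combine into exactly $\lambda^2\gamma\beta\rho/B$ rather than something weaker requires choosing $R\asymp\lambda\gamma$ (roughly) so that the margin event's second-moment lower bound $\lambda\|\w-\wstar\|_2^2$ dominates the truncation loss; this balancing is the quantitatively sensitive part of the argument.
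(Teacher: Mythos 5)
Your proposal follows essentially the same route as the paper's proof: restrict to the margin event, handle the sign mismatch between $\w\cdot\x$ and $\wstar\cdot\x$ by a case split plus a truncation, lower-bound the truncated second moment via the margin condition $\EE[\x\x^\top\1_E]\succeq\lambda\vec I$, and control the discarded tail by Cauchy--Schwarz together with the fourth-moment and tail bounds from the concentration assumption (the paper truncates on $\w\cdot\x>-2r_0\|\wstar\|_2$ rather than on $|(\w-\wstar)\cdot\x|\leq R$, but the two variants are interchangeable and yield the same computation). One correction to your final parenthetical: the threshold must be taken \emph{large}, $R\asymp B/(\lambda\rho)$, so that $h(R)\lesssim\lambda^2\rho/B$ and the truncated tail eats at most half of the margin lower bound $\lambda\|\w-\wstar\|_2^2$ --- not $R\asymp\lambda\gamma$; the factor $\gamma$ enters only linearly through the bound $|\wstar\cdot\x|\geq\gamma$ on the margin event, and the resulting sharpness constant is $\beta\gamma\lambda/R\asymp\lambda^2\gamma\beta\rho/B$.
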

\begin{proof}[Proof Sketch of \Cref{lem:sharpness-well-behaved}]
Observe that $\nabla\bLsur(\w) = \E{(\sigma(\w\cdot\x) - \sigma(\wstar\cdot\x))\x}$. Using the fact that $\sigma$ is non-decreasing, it holds that $\nabla \bLsur(\w)\cdot(\w-\wstar)=\EE_{\x\sim\D_\x}[|\sigma(\w\cdot\x)-\sigma(\wstar\cdot\x)|
|\w\cdot\x-\wstar\cdot\x|]$. Denote $\mathcal E_m=\{\wstar\cdot\x\geq \gamma\}$. Using the fact that every term inside the expectation is nonnegative, we can further bound $\nabla \bLsur(\w)\cdot(\w-\wstar)$ from below by
\begin{align*}
    \nabla \bLsur(\w)\cdot(\w-\wstar) 
\geq \EE_{\x\sim\D_\x}[|\sigma(\w\cdot\x)-\sigma(\wstar\cdot\x)|
|\w\cdot\x-\wstar\cdot\x|\1_{\mathcal E_m}(\x)] 
\;.
\end{align*}
Since $\sigma$ is $(\alpha,\beta)$-unbounded, we have that $\sigma'(t)\geq \beta$ for all $t\in(0,\infty)$. By the mean value theorem, we can show that for $t_2\geq t_1\geq 0$, we have $|\sigma(t_1)-\sigma(t_2)| \geq \beta|t_1-t_2|$. Additionally, if $t_1\geq 0$ and $t_2\leq 0$, then $|\sigma(t_1)-\sigma(t_2)| \geq \beta t_1$.
Therefore, by combining the above, and denoting the event $\{\x: \w\cdot\x\leq 0,\,\wstar\cdot\x\geq \gamma\}$ as  $\event_0$, we get
\begin{equation}\label{eq:sharp-1}
\begin{aligned}
    \nabla \bLsur(\w)\cdot(\w-\wstar)
& \geq \beta\EE_{\x\sim\D_\x}[(\w\cdot\x-\wstar\cdot\x)^2\1{\{\w\cdot\x>0,\,\mathcal E_m(\x)\}}]  \\
    & + \beta\underbrace{\E{|\wstar\cdot\x||\w\cdot\x-\wstar\cdot\x|\1_{\event_0}(\x)}}_{(Q)}.
\end{aligned}
\end{equation}
We show the term $(Q)$ can be bounded below by a quantity proportional to: $\E{(\w\cdot\x-\wstar\cdot\x)^2\1_{\mathcal E_0}(\x)}$.
To this end, we establish the following claim. 
\begin{claim}\label{clm:expectation-bound}
   For $r_0\geq 1$, define the event $\event_1 = \event_1(r_0)=\{\x: -2r_0<\vec w\cdot \x\leq0,\mathcal E_m(\x)\}$. It holds 
    $(Q)\geq (\gamma/(3r_0))\E{(\w\cdot\x - \wstar\cdot\x)^2 \1_{\event_1}(\x)}\;.$ 
\end{claim}
\begin{proof}[Proof of \Cref{clm:expectation-bound}]
Since $\event_1\subseteq\event_0$, it holds that
    $(Q)\geq \E{|\wstar\cdot\x||\w\cdot\x-\wstar\cdot\x|\1_{\event_1}(\x)}$.
Restricting $\x$ on the event $\event_1$, it holds that $|\w\cdot \x|\leq 2(r_0/\gamma)|\wstar\cdot\x|$. Thus,\begin{equation*}
        \w^*\cdot \x - \w\cdot \x = |\w^*\cdot \x| + |\w \cdot \x| \leq (1+ 2r_0/\gamma)|\w^* \cdot \x|.
    \end{equation*}
By \Cref{assmpt:margin} we have that $\gamma\in(0,1]$, therefore we get that $|\w^* \cdot \x|\geq \gamma/(\gamma + 2r_0)\geq \gamma/(3r_0)$, since $r_0\geq 1$.
Taking the expectation of $|\wstar\cdot \x||\w\cdot\x - \wstar\cdot\x|$ with $\x$ restricted on event $\event_1$, we obtain
    \begin{align*}
 (Q) \geq \E{|\wstar\cdot \x||\w\cdot\x - \wstar\cdot\x| \1_{\event_1}(\x)} \geq  \gamma/(3r_0)\E{(\w\cdot\x - \wstar\cdot\x)^2 \1_{\event_1}(\x)}\;,
    \end{align*}
as desired. \end{proof}

Combining \Cref{eq:sharp-1} and \Cref{clm:expectation-bound}, we get that 
\begin{align}\label{eq:final-bound}
    \nabla \bLsur(\w)\cdot(\w-\wstar)  \geq\frac{\beta\gamma}{3r_0}\EE_{\x\sim\D_\x}[(\w\cdot\x-\wstar\cdot\x)^2 \1{\{\w\cdot\x > -2r_0,\,\mathcal E_m(\x)\}}],
\end{align}
where in the last inequality we used the fact that $1 \geq \gamma/(3r_0)$ (since  $\gamma\in(0,1]$ and $r_0\geq 1$). To complete the proof, we need to show that, for an appropriate choice of $r_0$, the probability of the event $\{\x: \w\cdot\x>-2r_0,\,\wstar\cdot\x>\gamma\}$ 
is close to the probability of the event $\{\x:\wstar 
 \cdot\x\geq \gamma\}$. Given such a statement, the lemma follows from \Cref{assmpt:margin}.
Formally, we show the following claim.

\begin{claim}\label{clm:approx-error}
Let $r_0 \geq 1$
such that $h(r_0)\leq \lambda^2\rho/(20B)$.
Then, for all $\w\in\ball(2\norm{\w^*}_2)$, we have that 
\begin{align*}
    \EE_{\x\sim\D_\x}[(\w\cdot\x-\wstar\cdot\x)^2\1{\{\w\cdot\x > -2r_0 ,\,\mathcal E_m(\x)\}}] \geq (\lambda/2) \|\wstar-\vec w\|_2^2\;.
\end{align*}
\end{claim}
\noindent Since $h(r) \leq B/r^{4+\rho}$ and $h(r)$ is decreasing, 
such an $r_0$ exists and we can always take $r_0\geq 1$.

Combining \Cref{eq:final-bound} and \Cref{clm:approx-error}, we get:
    \begin{equation*}
        \nabla \bLsur(\w)\cdot(\w - \w^*)\gtrsim \frac{\gamma\lambda\beta}{r_0}\|\vec w-\wstar\|_2^2.
    \end{equation*}
 To complete the proof of \Cref{lem:sharpness-well-behaved}, it remains to choose $r_0$ appropriately. By \Cref{clm:approx-error}, we need to select $r_0$ to be sufficiently large so that $h(r_0)\leq \lambda^2\rho/(20B)$. By \Cref{assmpt:concentration}, we have that $h(r)\leq B/r^{4 + \rho}$. Thus, we can choose $r_0=5B/(\lambda\rho)$, by our assumptions. \end{proof}

\section{Efficient Constant-Factor Approximation }\label{sec:fast-rate-for-sharp-objective}

We now outline our main technical approach, including the algorithm, its analysis, connections between the $L_2^2$ loss and the two (noisy and noise-free) surrogates, and the role of sharpness. 
For space constraints, this section contains simplified proofs and proof sketches, while the full technical details are deferred to \Cref{app:section3}.

\subsection{The Landscape of Surrogate Loss}
We start this section by showing that the landscape of surrogate loss connects with the error of the true loss.

\begin{theorem}\label{cor:landscape-assuptions-m}
Let $\D$ be a distribution supported on $\R^d\times \R$ and 
let $\sigma:\R\mapsto\R$ be an $(\alpha,\beta)$-unbounded activation.
Fix $\wstar \in \mathcal W^*$ and suppose that $\D_\x$ satisfies \Cref{assmpt:margin,assmpt:concentration} with respect to $\wstar$. 
Furthermore, let $C>0$ be a sufficiently small absolute constant 
and let $\bar{\mu}=C \lambda^2\gamma \beta \rho/B$. 
Then, for any $\eps>0$ and $\hat{\w}\in \ball(2\|\w^*\|_2)$, 
so that $\Lsur(\hat{\w})-\inf_{\w\in\ball(2\|\w^*\|_2)}\Lsur(\w) \leq \eps$, it holds
$\Ltwo(\hat{\w})\leq O((\alpha B/(\rho\Bar{\mu}))^2 )(\Ltwo(\wstar) + \alpha\eps).$
\end{theorem}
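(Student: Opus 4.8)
My approach would be to connect the three objects — the $L_2^2$ loss $\Ltwo$, the noisy surrogate $\Lsur$, and the noise-free surrogate $\bLsur(\cdot;\wstar)$ — through sharpness, and then convert a distance bound $\|\hw-\wstar\|_2$ into an $L_2^2$-error bound via Lipschitzness of $\sigma$. The crux is that we already know (Lemma~\ref{lem:sharpness-well-behaved}) that $\bLsur$ is $\bar\mu$-sharp on $\ball(2\|\wstar\|_2)$ with $\bar\mu \asymp \lambda^2\gamma\beta\rho/B$, which (by convexity of $\bLsur$, since its gradient is monotone) forces $\wstar$ to be the unique minimizer of $\bLsur$ on that ball. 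So the real work is to control the gap between $\Lsur$ and $\bLsur$.

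\textbf{Step 1 (surrogate difference = linear functional).} Subtracting the integrands, $\Lsur(\w) - \bLsur(\w;\wstar) = \EE[(\sigma(\wstar\cdot\x)-y)(\w\cdot\x)] = \vec b\cdot\w$ where $\vec b = \EE[(\sigma(\wstar\cdot\x)-y)\x]$. Hence $\Lsur$ and $\bLsur$ differ by an \emph{affine} term, so $\Lsur(\w)-\Lsur(\wstar) = \bLsur(\w) - \bLsur(\wstar) + \vec b\cdot(\w-\wstar)$. Using $\nabla\bLsur(\wstar)=0$ isn't quite enough because $\vec b$ need not vanish; instead I would use $\|\vec b\|_2 \le (\EE[(\sigma(\wstar\cdot\x)-y)^2])^{1/2}(\EE[\|\x\|_2^2\cdot\text{something}])^{1/2}$ — more carefully, bound $\vec b\cdot(\w-\wstar)$ by Cauchy–Schwarz as $\sqrt{\opt}$ times a concentration-controlled moment of $|(\w-\wstar)\cdot\x|$, which by Assumption~\ref{assmpt:concentration} is $\lesssim \|\w-\wstar\|_2$. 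This gives $|\vec b\cdot(\w-\wstar)| \lesssim \sqrt{\opt}\,\|\w-\wstar\|_2$.

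\textbf{Step 2 (near-minimizer of $\Lsur$ is near $\wstar$).} Let $\hw$ satisfy $\Lsur(\hw) \le \inf_{\ball(2\|\wstar\|_2)}\Lsur + \eps \le \Lsur(\wstar)+\eps$. Then $\bLsur(\hw)-\bLsur(\wstar) \le \eps + |\vec b\cdot(\hw-\wstar)| \lesssim \eps + \sqrt{\opt}\,\|\hw-\wstar\|_2$. On the other hand, sharpness plus convexity of $\bLsur$ gives a \emph{quadratic} lower bound $\bLsur(\hw)-\bLsur(\wstar) \gtrsim \bar\mu\,\|\hw-\wstar\|_2^2$ (integrate the sharpness inequality along the segment from $\wstar$ to $\hw$, which stays in the ball). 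Combining, $\bar\mu\,\|\hw-\wstar\|_2^2 \lesssim \eps + \sqrt{\opt}\,\|\hw-\wstar\|_2$, and solving this quadratic inequality in $\|\hw-\wstar\|_2$ yields $\|\hw-\wstar\|_2^2 \lesssim \opt/\bar\mu^2 + \eps/\bar\mu$.

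\textbf{Step 3 ($L_2^2$ bound).} Finally, $\Ltwo(\hw) = \EE[(\sigma(\hw\cdot\x)-y)^2] \le 2\,\EE[(\sigma(\hw\cdot\x)-\sigma(\wstar\cdot\x))^2] + 2\,\Ltwo(\wstar)$; and by $\alpha$-Lipschitzness, $\EE[(\sigma(\hw\cdot\x)-\sigma(\wstar\cdot\x))^2] \le \alpha^2\EE[((\hw-\wstar)\cdot\x)^2] \lesssim \alpha^2\|\hw-\wstar\|_2^2$ (again using Assumption~\ref{assmpt:concentration} to bound the second moment of a unit projection by a constant — here the $k>4$ tail gives a finite, $B$-dependent bound). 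Substituting Step 2: $\Ltwo(\hw) \lesssim \Ltwo(\wstar) + \alpha^2(\opt/\bar\mu^2 + \eps/\bar\mu)$, which after absorbing $\alpha, B, \rho$ factors into the stated constant $O((\alpha B/(\rho\bar\mu))^2)$ gives $\Ltwo(\hw) \le O((\alpha B/(\rho\bar\mu))^2)(\opt + \alpha\eps)$.

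\textbf{Main obstacle.} The delicate point is Step 1: showing the surrogate difference interacts benignly with sharpness. One must be careful that $\vec b\cdot(\w-\wstar)$ is controlled by $\sqrt{\opt}\,\|\w-\wstar\|_2$ \emph{with the right moment of $\|\x\|$} — this is where Assumption~\ref{assmpt:concentration} (and the restriction to $\ball(2\|\wstar\|_2)$, ensuring $\|\w-\wstar\|_2$ is bounded) is used, and getting the $\rho$ and $B$ dependence to match the claimed constant requires tracking the concentration-function moments precisely. A secondary subtlety is justifying the quadratic lower bound on $\bLsur(\hw)-\bLsur(\wstar)$ from the pointwise sharpness inequality: one integrates $\frac{d}{dt}\bLsur(\wstar+t(\hw-\wstar))$ and uses monotonicity of $t\mapsto \nabla\bLsur(\wstar+t(\hw-\wstar))\cdot(\hw-\wstar)$, so that the sharpness bound evaluated near $t=1$ dominates — this needs the full segment to lie in $\ball(2\|\wstar\|_2)$, which holds since $\|\hw\|_2\le 2\|\wstar\|_2$.
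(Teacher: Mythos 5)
Your proposal is correct, but it takes a genuinely different route from the paper. The paper first proves that the \emph{noisy} surrogate $\Lsur$ is itself sharp on the nonconvex set $\ball(2\|\wstar\|_2)\setminus\{\w:\|\w-\wstar\|_2^2\lesssim \Ltwo(\wstar)/\bar{\mu}^2\}$ (\Cref{cor:true-sharpness-m}, via the decomposition $\nabla\Lsur=\nabla\bLsur+\nabla\Lsur(\wstar)$), and then converts the optimality gap into a gradient bound using smoothness of $\Lsur$; because that conversion only works for interior minimizers, the paper must separately handle the case where $\Lsur$ is minimized on $\partial\ball(2\|\wstar\|_2)$ via the first-order optimality condition. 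You bypass all of this: your observation that $\Lsur(\w)-\bLsur(\w;\wstar)=\nabla\Lsur(\wstar)\cdot\w$ is exactly affine lets you transfer the \emph{function-value} gap from $\Lsur$ to $\bLsur$ at the cost of a $\sqrt{(B/\rho)\opt}\,\|\hw-\wstar\|_2$ term (Cauchy--Schwarz plus \Cref{claim:bound-H(0)}), and integrating the sharpness inequality of \Cref{lem:sharpness-well-behaved} along the segment from $\wstar$ to $\hw$ (which stays in the ball, so no case analysis on the boundary is needed) upgrades it to the quadratic-growth bound $\bLsur(\hw)-\bLsur(\wstar)\geq(\bar{\mu}/2)\|\hw-\wstar\|_2^2$; solving the resulting quadratic inequality gives $\|\hw-\wstar\|_2^2\lesssim (B/(\rho\bar{\mu}^2))\opt+\eps/\bar{\mu}$, which after the Lipschitz step yields a bound at least as strong as the stated one (your $\eps$-dependence is in fact slightly better, $\eps/\bar{\mu}$ versus $\eps/\bar{\mu}^2$, which is absorbed by the $O(\cdot)$ and the $\alpha\eps$ slack in the statement). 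What each approach buys: yours is shorter, needs neither the smoothness of $\Lsur$ nor the boundary/interior dichotomy, and uses only the noise-free sharpness lemma; the paper's route, on the other hand, establishes \Cref{cor:true-sharpness-m} as a standalone structural fact about $\Lsur$ that is reused verbatim in the SGD convergence analysis of \Cref{thm:l2-fast-rate-thm-m}, so the extra work there does double duty. The only points you should make fully explicit in a write-up are (i) the integration step, namely that sharpness at $\w_t=\wstar+t(\hw-\wstar)$ gives $\nabla\bLsur(\w_t)\cdot(\hw-\wstar)\geq\bar{\mu}\,t\,\|\hw-\wstar\|_2^2$ so the integral over $t\in[0,1]$ produces the factor $\bar{\mu}/2$ (no monotonicity of the directional derivative is actually needed), and (ii) that $\inf_{\ball(2\|\wstar\|_2)}\Lsur\leq\Lsur(\wstar)$ since $\wstar$ lies in the ball.
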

\begin{proof}
For this proof, we assume for ease of presentation that $\Exx{\vec x \vec x^\top}\	\preceq  \vec I $ and $B,\rho,\alpha=1$.
Denote $\mathcal{K}$ as the set of $\hat{\w}$ such that $\hat{\w}\in \ball(2\|\w^*\|_2)$ and $\Lsur(\hat{\w})-\inf_{\w\in\ball(2\|\w^*\|_2)}\Lsur(\w) \leq \eps$.

Next observe that the set of minimizers of the loss $\Lsur$ 
inside the ball $\ball(2\|\w^*\|_2)$ is convex. 
Furthermore, the set $\ball(2\|\w^*\|_2)$ is compact. 
Thus, for any point $\w'\in \ball(2\|\w^*\|_2)$ that minimizes $\Lsur$ 
it will either hold that $\|\nabla \Lsur(\w')\|_2=0$ 
or $\w'\in \partial \ball(2\|\w^*\|_2)$. 
Let $\mathcal{W}_\mathrm{sur}^*$ be the set of minimizers of $\Lsur$.

We first show that if there exists a minimizer 
$\w'\in \mathcal{W}_\mathrm{sur}^*$ such that 
$\w'\in \partial \ball(2\|\w^*\|_2)$, 
then any point $\w$ inside the set $\ball(2\|\w^*\|_2)$ 
gets error proportional to $\Ltwo(\wstar)$. 
Observe for such point $\w'$, by the necessary condition of optimality, 
\begin{equation}\label{eq:contradiction-m}
    \nabla \Lsur(\w')\cdot(\w'-\w)\leq 0\;,
\end{equation}
for any $\w \in \ball(2\|\w^*\|_2)$.
Using \Cref{cor:true-sharpness-m}, we get that either 
$\nabla \Lsur(\w')\cdot(\w'-\wstar)\geq (\Bar{\mu}/2)\|\w'-\wstar\|_2^2$
or $\w'\in \{\w:\|\w-\wstar\|_2^2\leq (20/\bar{\mu}^2)\Ltwo(\w^*)\}$. 
But \Cref{eq:contradiction-m} contradicts with 
$\nabla \Lsur(\w')\cdot(\w'-\wstar)\geq (\Bar{\mu}/2)\|\w'-\wstar\|_2^2>0$, 
since $\w'\in \partial \ball(2\norm{\w^*}_2)$, $\norm{\w'}_2 = 2\norm{\w^*}_2$; 
hence $\w'\neq \w^*$. So it must be the case that 
$\w'\in \{\w:\|\w-\wstar\|_2^2\leq (20/\bar{\mu}^2)\Ltwo(\w^*)\}$. 
Again, we have that $\w'\in \partial \ball(2\|\w^*\|_2)$, 
therefore $\|\w'-\wstar\|_2\geq \|\wstar\|_2$. 
Hence, $(20/\bar{\mu}^2)\Ltwo(\w^*)\geq \|\wstar\|_2^2$. 
Therefore, for any $\w\in \ball(2\|\w^*\|_2)$, we have
\begin{align*}
\Ltwo(\w) = \Ey{(\sigma(\w\cdot\x) - y)^2} \leq 2\Ltwo(\wstar) + \norm{\w - \w^*}_2^2= O(1/\Bar{\mu}^2)\Ltwo(\wstar)\;,
\end{align*}
where  we used the fact that $\E{\x\x^\top} \preceq \vec I$ 
and that $\sigma$ is $1$-Lipschitz. Since the inequality above holds 
for any $\w\in\ball(2\norm{\w^*}_2)$, it will also be true 
for $\hat{\w}\in\mathcal{K}\subseteq \ball(2\norm{\w^*}_2)$.
It remains to consider the case where the minimizers 
$\mathcal{W}_\mathrm{sur}^*$ are strictly inside 
the $\ball(2\|\w^*\|_2)$. Note that $\Lsur(\w)$ is $1$-smooth.
Therefore, for any $\hat{\w}\in \mathcal K$
it holds $\|\nabla\Lsur(\hat{\w})\|_2^2\leq 2\eps$. 
By \Cref{cor:true-sharpness-m} (stated and proved below), we get that 
either $\|\hat{\w}-\wstar\|_2^2\leq (1/\Bar{\mu}^2) \Ltwo(\wstar)$ 
or that $\sqrt{2\eps}\geq (\bar{\mu}/2)\|\hat{\w}-\wstar\|_2$. 
Therefore, we obtain that 
$\|\hat{\w}-\wstar\|_2^2\leq (1/\Bar{\mu}^2) (\Ltwo(\wstar)+\eps)$.
\end{proof}

The proof of \Cref{cor:landscape-assuptions-m} required the following proposition 
which shows that if the current vector $\vec w$ is sufficiently 
far away from the true vector $\wstar$, 
then the gradient of the surrogate loss has a large component 
in the direction of $\vec w-\wstar$; in other words, the surrogate loss is sharp.

\begin{proposition}
\label{cor:true-sharpness-m}
Let $\D$ be a distribution supported on $\R^d\times \R$ and let $\sigma:\R\mapsto\R$ be an $(\alpha,\beta)$-unbounded activation.
    Suppose that $\D_\x$ satisfies \Cref{assmpt:margin,assmpt:concentration} and let $C>0$ be a sufficiently small absolute constant and let $\bar{\mu}=C \lambda^2\gamma \beta \rho/B$. Fix $\wstar \in \mathcal W^*$ and let $S=\ball(2\|\w^*\|_2)-\{\w:\|\w-\wstar\|_2^2\leq (20B/(\rho\bar{\mu}^2))\Ltwo(\w^*)\}$. Then, the surrogate loss $\Lsur$ is $(\bar{\mu}/2)$-sharp in  $S$, i.e.,
\begin{equation*}
        \nabla \Lsur(\w)\cdot(\w-\wstar)\geq (\bar{\mu}/2)\norm{\w - \wstar}_2^2,\;\; \forall \w \in S.
    \end{equation*}
\end{proposition}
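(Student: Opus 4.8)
The plan is to reduce the claim to the sharpness of the \emph{noise-free} surrogate $\bLsur$, which \Cref{lem:sharpness-well-behaved} already gives on the entire ball $\ball(2\|\wstar\|_2)\supseteq S$. The key observation is that the gradients of the two surrogates differ by a vector that does not depend on $\w$: since $\nabla\Lsur(\w) = \EE_{(\x,y)\sim\D}[(\sigma(\w\cdot\x)-y)\x]$ and $\nabla\bLsur(\w) = \EE_{\x\sim\D_\x}[(\sigma(\w\cdot\x)-\sigma(\wstar\cdot\x))\x]$, we have $\nabla\Lsur(\w) = \nabla\bLsur(\w) + \vec e$ for every $\w$, where $\vec e \triangleq \EE_{(\x,y)\sim\D}[(\sigma(\wstar\cdot\x)-y)\x] = \nabla\Lsur(\wstar)$ (the last identity because $\nabla\bLsur(\wstar)=\vec 0$). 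Consequently $\nabla\Lsur(\w)\cdot(\w-\wstar) = \nabla\bLsur(\w)\cdot(\w-\wstar) + \vec e\cdot(\w-\wstar)$, so it suffices to show that on $S$ the error term $\vec e\cdot(\w-\wstar)$ is dominated in absolute value by half of the noise-free sharpness lower bound.

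Concretely I would proceed in three steps. First, apply \Cref{lem:sharpness-well-behaved} to get $\nabla\bLsur(\w)\cdot(\w-\wstar)\gtrsim(\lambda^2\gamma\beta\rho/B)\|\w-\wstar\|_2^2$ for all $\w\in\ball(2\|\wstar\|_2)$; choosing the absolute constant $C$ in $\bar\mu=C\lambda^2\gamma\beta\rho/B$ small enough makes this at least $\bar\mu\|\w-\wstar\|_2^2$. Second, bound the error term by \CS:
\begin{equation*}
|\vec e\cdot(\w-\wstar)| \le \sqrt{\EE_{(\x,y)\sim\D}[(\sigma(\wstar\cdot\x)-y)^2]}\;\sqrt{\EE_{\x\sim\D_\x}[(\x\cdot(\w-\wstar))^2]} \le \sqrt{\opt}\,\sqrt{2B}\,\|\w-\wstar\|_2\;,
\end{equation*}
where the first factor is $\sqrt{\Ltwo(\wstar)}=\sqrt{\opt}$ because $\wstar\in\cWstar$, and for the second factor I would use \Cref{assmpt:concentration} to show $\EE_{\x\sim\D_\x}[\x\x^\top]\preceq 2B\,\vec I$ via $\EE_{\x\sim\D_\x}[(\bu\cdot\x)^2] = \int_0^\infty 2r\,\prpr_{\x\sim\D_\x}[|\bu\cdot\x|\ge r]\diff{r} \le 1 + 2B\int_1^\infty r^{-(3+\rho)}\diff{r} \le 2B$ for every unit vector $\bu$. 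Third, invoke the definition of $S$: for $\w\in S$ we have $\|\w-\wstar\|_2^2 > (20B/(\rho\bar\mu^2))\Ltwo(\wstar) \ge (20B/\bar\mu^2)\opt$ (using $\rho\le 1$), hence $\sqrt{\opt} < \bar\mu\|\w-\wstar\|_2/\sqrt{20B}$, so $|\vec e\cdot(\w-\wstar)| < \sqrt{1/10}\,\bar\mu\|\w-\wstar\|_2^2 < (\bar\mu/2)\|\w-\wstar\|_2^2$. Combining the three steps gives $\nabla\Lsur(\w)\cdot(\w-\wstar) \ge \bar\mu\|\w-\wstar\|_2^2 - (\bar\mu/2)\|\w-\wstar\|_2^2 = (\bar\mu/2)\|\w-\wstar\|_2^2$ for all $\w\in S$, as claimed.

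I expect the only real work to be the constant bookkeeping: the small constant $C$ in $\bar\mu$, the constant $20$ in the description of the excluded ball defining $S$, and the constant hidden in $\EE_{\x\sim\D_\x}[\x\x^\top]\preceq O(B)\,\vec I$ all have to be chosen consistently so that the error term is at most half the sharpness term, and one must be careful in the general (non-normalized) regime where $\|\wstar\|_2$ and $\alpha$ are not $1$. This is routine but should be done carefully. In the normalized regime $B=\rho=\alpha=1$ with $\EE_{\x\sim\D_\x}[\x\x^\top]\preceq\vec I$ (as in the simplified exposition of the surrounding results), the whole argument collapses to the one-line estimate $\bar\mu t^2 - \sqrt{\opt}\,t \ge \tfrac12\bar\mu t^2$ for $t=\|\w-\wstar\|_2 > 2\sqrt{\opt}/\bar\mu$, and I do not foresee any conceptual obstacle beyond this.
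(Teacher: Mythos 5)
Your proposal is correct and follows essentially the same route as the paper: decompose $\nabla\Lsur(\w)=\nabla\bLsur(\w)+\nabla\Lsur(\wstar)$, lower-bound the noise-free part via \Cref{lem:sharpness-well-behaved}, bound the cross term $\nabla\Lsur(\wstar)\cdot(\w-\wstar)$ by \CS together with a second-moment bound from \Cref{assmpt:concentration}, and absorb it using the exclusion of the ball of radius $\sqrt{(20B/(\rho\bar\mu^2))\Ltwo(\wstar)}$ around $\wstar$. The only cosmetic difference is your second-moment constant ($2B$ versus the paper's $5B/\rho$), which does not affect the argument.
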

\begin{proof}
For this proof, we 
assume for ease of presentation that $\Exx{\vec x \vec x^\top}\	\preceq  \vec I $ and $\kappa, B,\rho,\alpha=1$.
 We show that $\nabla \Lsur(\w)\cdot (\w-\wstar)$ is bounded away from zero.
    We decompose the gradient into two parts, i.e., $\nabla \Lsur(\w)=(\nabla \Lsur(\w)-\nabla \Lsur(\wstar))+\nabla \Lsur(\wstar)$.
  First, we bound $\nabla \Lsur(\wstar)$ in the direction $\w-\wstar$, which yields
    \begin{align*}
\nabla \Lsur(\wstar)\cdot(\w-\wstar)
\geq- 
\sqrt{\Ltwo(\wstar)}\|\w-\wstar\|_2 \;,   
\end{align*}
where we used the \CS inequality and that $\EE_{\x\sim\D_\x}[\x\x^\top]\preceq \vec I$. It remains to bound the remaining term. Note that $(\nabla \Lsur(\w)-\nabla \Lsur(\wstar))=\nabla \bLsur(\w)$.
Using the fact that $\bLsur(\w)$ is $\bar{\mu}$-sharp for any $\vec w\in S$ from \Cref{lem:sharpness-well-behaved}, it holds that 
    $\nabla\bLsur(\w)\cdot(\w-\wstar)\geq \bar{\mu}\|\w-\wstar\|_2^2\;.$
    Combining everything together, we get the claimed result. \end{proof}

\subsection{Fast Rates
for $L_2^2$ Loss Minimization}\label{sec:fast-rates}

In this section, we proceed to show that when the surrogate loss is sharp, applying batch Stochastic Gradient Descent (SGD) on the empirical surrogate loss obtains a $C$-approximate parameter $\hat{\w}$ of the $L_2^2$ loss in linear time. To be specific, consider the following iteration update
\begin{equation}\label{alg:sgd}
    \w^{(t+1)} = \argmin_{\w\in\ball(W)} \big\{\w\cdot\g^{(t)} + (1/(2\eta))\norm{\w - \w^{(t)}}^2_2\big\},
\end{equation}
where $\eta$ is the step size and $\g^{(t)}$ is the empirical gradient of the surrogate loss, i.e.,
$
    \g^{(t)} = \frac{1}{N}\sum_{j=1}^N (\sigma(\w^{(t)}\cdot\x(j))-y(j))\x(j).
$
The algorithm is summarized in \Cref{alg:gd-3}. 

\begin{algorithm}[tb]
   \caption{Stochastic Gradient Descent on Surrogate Loss}
   \label{alg:gd-3}
\begin{algorithmic}
   \STATE {\bfseries Input:} Iterations: $T$, sample access from $\D$, batch size $N$, step size $\eta$, bound $M$. Initialize: $\vec w^{(0)} \gets \vec 0$.
\FOR{$t=1$ {\bfseries to} $T$}
\STATE Draw $N$ samples $\{(\x(j), y(j))\}_{j=1}^N\sim\D$.
   \STATE For each $j\in[N]$, $y(j)\gets\sgn(y(j))\min(|y(j)|,M)$. \STATE $\vec g^{(t)} \gets\frac{1}{N}\sum_{j=1}^N (\sigma(\w^{(t)}\cdot\x(j))-y(j))\x(j).$ \STATE $\vec w^{(t+1)} \gets \vec w^{(t)}-\eta \vec{g}^{(t)}$.
\ENDFOR
   \STATE {\bfseries Output:} The weight vector $\w^{(T)}$.
\end{algorithmic}
\end{algorithm}

We define the helper functions $H_2$ and $H_4$ as follows:
\begin{gather*}
    H_2(r) \triangleq \max_{\bu\in\ball(1)} \E{(\bu\cdot\x)^2\1\{|\bu\cdot\x|\geq r\}},\\
    H_4(r) \triangleq \max_{\bu\in\ball(1)} \E{(\bu\cdot\x)^4\1\{|\bu\cdot\x|\geq r\}}.
\end{gather*}

Now we state our main theorem.

\begin{theorem}[Main Algorithmic Result]\label{thm:l2-fast-rate-thm-m}
    Fix $\eps, W>0$ and suppose \Cref{assmpt:activation,assmpt:margin,assmpt:concentration} hold.
Let $\mu:=\mu(\lambda,\gamma,\beta,\rho,B)$ be a sufficiently small constant multiple of $\lambda^2\gamma\beta\rho/B$, and let $M = \alpha W H_2^{-1}(\eps/(4\alpha^2 W^2))$.
Further, choose parameter $r_\eps$ large enough so that $H_4(r_\eps)$ is a sufficiently small constant multiple of $\eps$.
Then after
       $ T = \wt{\Theta}\left((B^2\alpha^2/(\rho^2\mu^2))\log\lp(W/ \eps\rp)\right)$
iterations with batch size $N = \Omega(dT(r_\eps^2 + \alpha^2M^2))$, \Cref{alg:gd-3} converges to a point $\w^{(T)}$ such that 
    $\Ltwo(\vec w^{(T)}) = O\lp((B^2\alpha^2/(\rho^2 \mu^2))\rp)\opt +\eps\;,$ 
    with probability at least $2/3$. 
    
\end{theorem}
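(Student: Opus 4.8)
The plan is to run the analysis of projected SGD (iteration~\eqref{alg:sgd}) on the truncated empirical surrogate loss, treating the sharpness of $\Lsur$ on the nonconvex region $S$ from Proposition~\ref{cor:true-sharpness-m} as the driving ``error-bound'' property that replaces strong convexity. First I would set up the decomposition of the squared distance: with $\w^{(t+1)}$ the prox/SGD update restricted to $\ball(W)$, the standard one-step inequality gives $\|\w^{(t+1)} - \w^*\|_2^2 \le \|\w^{(t)} - \w^*\|_2^2 - 2\eta\, \g^{(t)}\cdot(\w^{(t)} - \w^*) + \eta^2 \|\g^{(t)}\|_2^2$. I would split $\g^{(t)} = \nabla\Lsur(\w^{(t)}) + (\g^{(t)} - \nabla\Lsur_M(\w^{(t)})) + (\nabla\Lsur_M(\w^{(t)}) - \nabla\Lsur(\w^{(t)}))$, where $\Lsur_M$ is the population surrogate with labels truncated at $M$; the truncation bias term is controlled by the choice $M = \alpha W H_2^{-1}(\eps/(4\alpha^2 W^2))$ so that it contributes at most $O(\eps)$ to the inner product, and the stochastic term is controlled by the refined variance bound (\Cref{app:cor:empirical-grad-variance}), which gives $\|\g^{(t)} - \nabla\Lsur_M(\w^{(t)})\|_2^2 \lesssim \frac{1}{N}\, d\,(r_\eps^2 + \alpha^2 M^2)\cdot(\opt + \|\w^{(t)} - \w^*\|_2^2)$ with high probability after the mini-batch averaging. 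The key point is that the variance is proportional to $\opt + \|\w^{(t)}-\w^*\|_2^2$ rather than a fixed constant, which is exactly what allows the linear (geometric) rate.

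Next I would invoke Proposition~\ref{cor:true-sharpness-m}: for $\w^{(t)} \in S$ we have $\nabla\Lsur(\w^{(t)})\cdot(\w^{(t)}-\w^*) \ge (\bar\mu/2)\|\w^{(t)}-\w^*\|_2^2$. Plugging this in, along with smoothness to bound $\|\nabla\Lsur(\w^{(t)})\|_2^2 \lesssim \|\w^{(t)}-\w^*\|_2^2 + \opt$, and choosing $N = \Omega(dT(r_\eps^2 + \alpha^2 M^2))$ large enough and $\eta$ a small constant multiple of $1/(\text{smoothness})$, the per-step recursion becomes $\|\w^{(t+1)}-\w^*\|_2^2 \le (1 - c\eta\bar\mu)\|\w^{(t)}-\w^*\|_2^2 + O(\eta\opt) + O(\eta\eps)$ with high probability, for some absolute constant $c$. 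Unrolling for $T = \wt\Theta((B^2\alpha^2/(\rho^2\mu^2))\log(W/\eps))$ steps drives $\|\w^{(T)}-\w^*\|_2^2$ down to $O((1/\bar\mu)\opt) + \eps$ (the floor being governed by the $\opt/\bar\mu$ and $\eps$ terms, matching the $O(B^2\alpha^2/(\rho^2\mu^2))$ factor in the statement after unwinding $\bar\mu = \Theta(\lambda^2\gamma\beta\rho/B)$). The final conversion from $\|\w^{(T)}-\w^*\|_2^2$ to $\Ltwo(\w^{(T)})$ is the same elementary step used in the proof of Theorem~\ref{cor:landscape-assuptions-m}: $\Ltwo(\w^{(T)}) \le 2\opt + \|\w^{(T)}-\w^*\|_2^2\,\|\Exx{\x\x^\top}\|_{\mathrm{op}}$ using that $\sigma$ is $\alpha$-Lipschitz.

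The one subtlety that needs care is that sharpness only holds on the nonconvex set $S = \ball(2\|\w^*\|_2) \setminus \{\w : \|\w-\w^*\|_2^2 \le (20B/(\rho\bar\mu^2))\opt\}$, so the iterates could in principle wander into the excluded ball around $\w^*$ or outside $\ball(2\|\w^*\|_2)$. For the outer radius: since $\w^{(0)}=\vec 0$ and $\|\w^*\|_2 \le W$, an induction shows the iterates stay in $\ball(2\|\w^*\|_2)$ as long as the error terms are small relative to $\|\w^*\|_2$, or alternatively one works inside $\ball(W) \supseteq$ the relevant region and uses that the prox step is nonexpansive toward $\w^*$; in the regime where the recursion is contracting, $\|\w^{(t)}-\w^*\|_2$ is non-increasing up to the additive floor, so it cannot escape. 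For the inner excluded ball: once $\w^{(t)}$ enters $\{\w : \|\w-\w^*\|_2^2 \le (20B/(\rho\bar\mu^2))\opt\}$ we are already at a point with $\Ltwo(\w^{(t)}) = O(B^2/(\rho^2\bar\mu^2))\opt + O(\eps)$ by the same elementary bound, so the target guarantee already holds; and a short argument shows the iterates cannot leave this ball by more than the additive error per step, so the guarantee is maintained for all subsequent iterates. Assembling the contraction argument on $S$ with this ``absorbing ball'' argument for the complement, and tracking the failure probability of the high-probability variance bound over all $T$ batches via a union bound (so each batch needs failure probability $\le 1/(3T)$, absorbed into the $\wt{\Theta}$ and the batch size), completes the proof.

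I expect the main obstacle to be rigorously handling the interaction between the nonconvex sharp region $S$ and the excluded ball around $\w^*$ — i.e., arguing cleanly that the SGD trajectory either contracts geometrically while in $S$ or is ``trapped'' in a region where the desired $L_2^2$ guarantee already holds, uniformly and with the right high-probability bookkeeping — rather than the mechanics of the SGD recursion itself, which is standard once the $\opt + \|\w-\w^*\|_2^2$ variance bound is in hand.
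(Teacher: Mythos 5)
Your proposal follows essentially the same route as the paper's proof: the same one-step decomposition of $\|\w^{(t+1)}-\w^*\|_2^2$ into a sharpness-driven contraction term and an estimation-error term, the same refined variance bound scaling with $\opt+\|\w^{(t)}-\w^*\|_2^2$, the same induction keeping the iterates in $\ball(2\|\w^*\|_2)$ until the excluded ball is reached, and the same union bound and final Lipschitz conversion to $\Ltwo$. The only discrepancy is cosmetic: after Young's inequality the additive term in the recursion is $O(\eta(\opt+\eps)/\mu)$, so the floor is $O(\opt/\mu^2)$ rather than the $O(\opt/\bar\mu)$ you wrote (this is exactly the constant in the theorem statement), and your explicit ``absorbing ball'' treatment of iterates that enter the region where sharpness fails is, if anything, spelled out more carefully than in the paper.
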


As shown in \Cref{cor:landscape-assuptions-m}, when we find a vector $\hat{\w}$ 
that minimizes the surrogate loss, then this $\hat{\w}$ is itself a $C$-approximate 
solution of \Cref{def:agnostic-learning}. However, minimizing the surrogate loss 
can be expensive in sample and computational complexity. 
\Cref{cor:true-sharpness-m} says that we can achieve strong-convexity-like rates, 
as long as we are far away from a minimizer of the $L_2^2$ loss. 
Roughly speaking, we show that at each iteration $t$, 
it holds $\|\w^{(t+1)} - \w^*\|_2^2\leq C\|\w^{(t)} - \w^*\|_2^2 + \opt$, 
where $0<C<1$ is some constant depending on the parameters 
$\alpha, \beta, \mu$, $\rho$, and $B$. 
Then $\|\w^{(t)} - \w^*\|_2$ contracts fast 
as long as $\|\w^{(t)} - \w^*\|_2^2> (1/(1-C))\opt$. When this condition fails, 
we have converged to a point that achieves $O(\opt)$ $L_2^2$ error.

The following lemma states that we can truncate 
the labels $y$ to $y' \leq M$, where $M$ is a parameter depending on $\D_\x$. 
The proof can be found in \Cref{app:full-version-sec-4.2}.

\begin{lemma}\label{lem:y-bounded-by-M-m}
Let  $M=\alpha W H_2^{-1}(\eps/(4\alpha^2 W^2))$ and $y' = \sgn(y)\min(|y|, M)$.  
Then we have that $\Ey{(\sigma(\wstar\cdot\x) - y')^2} = \opt+\eps$\;.    
\end{lemma}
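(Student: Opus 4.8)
The plan is to show that replacing $y$ by its truncation $y'$ can only decrease the square loss on the event $\{|\sigma(\wstar\cdot\x)|\leq M\}$ where the prediction itself is bounded by $M$, while the error introduced on the complementary tail event is controlled by $H_2$ and the choice of $M$. Abbreviate $a:=\sigma(\wstar\cdot\x)$ and, assuming $\wstar\neq 0$ (the case $\wstar=0$ being trivial since then $a\equiv 0$ and truncation only shrinks $|y|$), set $\bu:=\wstar/\|\wstar\|_2$, a unit vector. Since $\sigma$ is $\alpha$-Lipschitz with $\sigma(0)=0$ by \Cref{assmpt:activation} and $\|\wstar\|_2\leq W$, we have $|a|\leq\alpha|\wstar\cdot\x|\leq\alpha W|\bu\cdot\x|$; consequently $a^2\leq\alpha^2W^2(\bu\cdot\x)^2$ and $\{|a|>M\}\subseteq\{|\bu\cdot\x|\geq M/(\alpha W)\}$. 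Recall also that $\opt=\Ltwo(\wstar)=\Ey{(a-y)^2}$. The starting point is the split
\[
\Ey{(a-y')^2}=\Ey{(a-y')^2\,\1\{|a|\leq M\}}+\Ey{(a-y')^2\,\1\{|a|>M\}},
\]
and I would bound the two terms separately.

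For the first term I would observe that on $\{|a|\leq M\}$ truncation does not increase the summand pointwise: if $|y|\leq M$ then $y'=y$ and nothing changes, whereas if $|y|>M$ then $y'=\sgn(y)M$ lies between $a\in[-M,M]$ and $y$ on the real line, so $|a-y'|\leq|a-y|$. Hence $(a-y')^2\,\1\{|a|\leq M\}\leq(a-y)^2$ holds pointwise, and the first term is at most $\Ey{(a-y)^2}=\opt$.

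For the second term, on $\{|a|>M\}$ we have $|y'|\leq M<|a|$, so $|a-y'|\leq|a|+M\leq 2|a|$, giving $(a-y')^2\leq 4a^2$ there. Combining this with $a^2\leq\alpha^2W^2(\bu\cdot\x)^2$, the inclusion $\{|a|>M\}\subseteq\{|\bu\cdot\x|\geq M/(\alpha W)\}$, and the definition of $H_2$ as a maximum over $\bu\in\ball(1)$,
\[
\Ey{(a-y')^2\,\1\{|a|>M\}}\leq 4\alpha^2W^2\,\E{(\bu\cdot\x)^2\,\1\{|\bu\cdot\x|\geq M/(\alpha W)\}}\leq 4\alpha^2W^2\,H_2(M/(\alpha W)).
\]
By the choice $M=\alpha W\,H_2^{-1}(\eps/(4\alpha^2W^2))$ one has $M/(\alpha W)=H_2^{-1}(\eps/(4\alpha^2W^2))$, so $H_2(M/(\alpha W))\leq\eps/(4\alpha^2W^2)$ and the right-hand side is at most $\eps$. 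Adding the two bounds yields $\Ey{(a-y')^2}\leq\opt+\eps$, which is the asserted bound (the ``$=$'' in the statement being read as ``$\leq$'').

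The only point that I would take care to state precisely is the meaning of $H_2^{-1}$: take $H_2^{-1}(v):=\inf\{r\geq 1:H_2(r)\leq v\}$, which is finite because $H_2(r)\to 0$ as $r\to\infty$ under \Cref{assmpt:concentration} (the polynomial tail bound on $h$ in fact gives a polynomial rate of decay for $H_2$). Since $H_2$ need not be continuous — for instance when $\D_\x$ is discrete — the cleanest formulation is to let $M$ be any value with $\alpha^2W^2H_2(M/(\alpha W))\leq\eps/4$, the displayed formula merely recording one admissible choice; with that convention the computation above goes through verbatim. Apart from this bookkeeping the lemma is entirely elementary, and in particular no quantitative decay rate of $h$ is needed here — that role is played later, in the variance estimates for the stochastic gradient.
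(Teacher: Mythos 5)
Your proof is correct and follows essentially the same route as the paper's: split the expectation according to whether the clean prediction is within the truncation range, use that truncation is a contraction toward $\sigma(\wstar\cdot\x)$ on the good event to bound that part by $\opt$, and bound the tail part by $4\alpha^2W^2 H_2(M/(\alpha W))\leq\eps$ via the Lipschitz bound $|\sigma(\wstar\cdot\x)|\leq\alpha W|\bu\cdot\x|$. The only (immaterial) difference is that you condition on $\{|\sigma(\wstar\cdot\x)|\leq M\}$ while the paper conditions on the slightly smaller event $\{|\wstar\cdot\x|\leq M/\alpha\}$, and your remarks on reading ``$=$'' as ``$\leq$'' and on defining $H_2^{-1}$ as a generalized inverse are consistent with the paper's intent.
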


\Cref{lem:y-bounded-by-M-m} allows us to assume that 
$|y|\leq M$.

\begin{proof}[Proof Sketch of \Cref{thm:l2-fast-rate-thm-m}]
For this sketch, we will assume for ease of notation 
that $B,\rho,\alpha=1$ and that $\Exx{\vec x \vec x^\top}\	\preceq  \vec I   $.
The blueprint of the proof is to show that \Cref{alg:gd-3} minimizes $\|\w - \w^*\|_2$ efficiently, in terms of both the sample complexity and the iteration complexity. To be specific, we show that at each iteration, $\|\w^{(t+1)} - \w^*\|_2^2 \leq (1 - C)\|\w^{(t)} - \w^*\|_2^2 + (\mathrm{small\; error})$, where $0<C<1$. The key technique is to exploit the sharpness property of the surrogate loss, which we have already proved in \Cref{cor:true-sharpness-m}. 

     To this aim, we study the difference of $\|\w^{(t+1)}-\wstar\|_2^2$ and $\|\w^{(t)}-\wstar\|_2^2$. 
     We remind the reader that for convenience of notation, 
     we denote the empirical gradients as the following
$
         \g^{(t)} = \frac{1}{N}\sum_{j=1}^N (\sigma(\w^{(t)} \cdot \x(j)) - y(j))\x(j),
         \g^* = \frac{1}{N}\sum_{j=1}^N (\sigma(\w^* \cdot \x(j)) - y(j))\x(j).$
     Moreover, we denote the noise-free empirical gradient by $\Bar{\g}^{(t)}$, i.e.,
    $
         \Bar{\g}^{(t)} = \g^{(t)} - \g^* $.
     Plugging in the iteration scheme $\w^{(t+1)} = \w^{(t)} - \eta \g^{(t)}$ 
     while expanding the squared norm, we get
\begin{align*}
        \norm{\w^{(t+1)}-\wstar}^2_2
        & =\underbrace{\norm{\w^{(t)}-\wstar}^2_2 - 2\eta\nabla \Lsur(\w\tth)\cdot(\w^{(t)}-\wstar)}_{Q_1}\\
        &\quad \underbrace{-2\eta(\g^{(t)}-\nabla \Lsur(\w\tth))\cdot(\w^{(t)}-\wstar)+\eta^2\norm{\g^{(t)}}^2_2}_{Q_2}\;.
    \end{align*}
Observe that we decomposed the right hand side into two parts, 
the true contribution of the gradient $(Q_1)$ and the estimation error $(Q_2)$.  
In order to utilize the sharpness property of surrogate loss 
at the point $\w^{(t)}$, the conditions
    \begin{gather}
\w^{(t)}\in\ball(2\|\w^*\|_2)\,\text{ and }\nonumber\\
        \,\w^{(t)}\in\{\w:\|\w^{(t)} - \w^*\|_2^2\geq 20/\Bar{\mu}^2\opt\}\label{eq:condition-w^t-in-ball(2||w*||_2)-m}
    \end{gather}
    need to be satisfied. For the first condition, recall that 
    we initialized $\vec w^{(0)}=\vec 0$; hence, \Cref{eq:condition-w^t-in-ball(2||w*||_2)-m} is valid for $t=0$. By induction, it suffices to show that 
    assuming $\w^{(t)}\in\ball(2\|\w^*\|_2)$ holds, 
    we have $\|\w^{(t+1)} - \w^*\|_2\leq (1-C)\|\w^{(t)} - \w^*\|_2$ for some constant $0<C<1$. Thus, we assume temporarily that \Cref{eq:condition-w^t-in-ball(2||w*||_2)-m} 
    is true at iteration $t$, and we will show in the remainder of the proof 
    that $\|\w^{(t+1)} - \w^*\|_2\leq  (1-C)\|\w^{(t)} - \w^*\|_2$ 
    until we arrived at some final iteration $T$. Then, by induction, 
    the first part of \Cref{eq:condition-w^t-in-ball(2||w*||_2)-m} 
    is satisfied at each step $t\leq T$. For the second condition, 
    note that if it is violated at some iteration $T$, 
    then $\|\w^{(T)} - \w^*\|_2^2= O(\opt)$ 
    implying that this would be the solution we are looking for 
    and the algorithm could be terminated at $T$. 
    Therefore, whenever $\|\w^{(t)}-\w^*\|_2^2$ is far away from $\opt$, 
    the prerequisites of \Cref{cor:true-sharpness-m} are satisfied 
    and  $\Lsur$ is sharp.

    For the first term $(Q_1)$, using  that $\nabla \Lsur(\w^{(t)})$ 
    is $\mu$-sharp by \Cref{cor:true-sharpness-m},
we immediately get a sufficient decrease at each iteration, i.e., 
    $\|\w^{(t+1)} - \w^*\|_2^2\leq (1 - C)\|\w^{(t)} - \w^*\|_2^2$. 
    Namely, applying \Cref{cor:true-sharpness-m}, we get
    \begin{align*}
        (Q_1) = \norm{\w^{(t)}-\wstar}^2_2 - 2\eta\nabla\Lsur(\w^{(t)})\cdot(\w^{(t)}-\wstar) \leq (1 - 2\eta \mu)\norm{\w^{(t)} - \w^*}_2^2\;,
    \end{align*}
   where $\mu = C\lambda^2\gamma\beta$ for some sufficiently small constant $C$.

    Now it suffices to show that $(Q_2)$ can be bounded above by $C'\|\w^{(t)} - \w^*\|_2^2$, where $C'$ is a parameter depending on $\eta$ and $\mu$ that can be made comparatively small. Formally, we show the following claim.
    \begin{claim}\label{claim:bound-Q2}
      Suppose $\eta\leq 1$. Fix $r_\epsilon\geq 1$ such that $H_4(r_\epsilon)$ 
      is a sufficiently small constant multiple of $\eps$. 
      Choosing $N$ to be a sufficiently large constant multiple of $(d/\delta)(r_\eps^2 + M^2)$, then we have with probability at least $1-\delta$
        \begin{equation*}
            (Q_2) \leq ((3/2)\eta \mu + 8\eta^2)\norm{\w^{(t)} - \wstar}_2^2 + ({8\eta}/{\mu})(\opt + \epsilon)\;.
        \end{equation*}
    \end{claim}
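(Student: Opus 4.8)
The plan is to decompose $(Q_2)$ into the cross term $-2\eta(\g^{(t)}-\nabla\Lsur(\w^{(t)}))\cdot(\w^{(t)}-\wstar)$ and the squared-norm term $\eta^2\|\g^{(t)}\|_2^2$, and bound each separately using a concentration argument for the empirical gradient together with the refined variance bound advertised in the technical overview (the one scaling with $\opt + \|\w-\wstar\|_2^2$ rather than an absolute constant). First I would recall $\g^{(t)} = \g^* + \bar\g^{(t)}$, where $\bar\g^{(t)}$ is the noise-free empirical gradient, and correspondingly write $\nabla\Lsur(\w^{(t)}) = \nabla\Lsur(\wstar) + \nabla\bLsur(\w^{(t)})$; the point is that $\bar\g^{(t)}$ concentrates around $\nabla\bLsur(\w^{(t)})$ with a deviation controlled by $\|\w^{(t)}-\wstar\|_2$ (since each summand $(\sigma(\w^{(t)}\cdot\x(j)) - \sigma(\wstar\cdot\x(j)))\x(j)$ has norm $\lesssim |\w^{(t)}\cdot\x(j) - \wstar\cdot\x(j)|\,\|\x(j)\|_2$ by $\alpha$-Lipschitzness), while $\g^*$ concentrates around $\nabla\Lsur(\wstar)$ with a deviation governed by $\opt$ and the truncation level $M$ (using Lemma~\ref{lem:y-bounded-by-M-m} and $|y|\le M$).

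The key step is the matrix/vector concentration for the empirical gradient: conditioned on the (deterministic-in-this-round) iterate $\w^{(t)}$, the sum $\frac1N\sum_j \phi_j \x(j)$ with $\phi_j = \sigma(\w^{(t)}\cdot\x(j)) - y(j)$ has bounded per-sample second moment because $\E[\phi_j^2 (\bu\cdot\x)^2]$ splits, via $(a+b)^2\le 2a^2+2b^2$, into a noise-free part $\lesssim \E[(\w^{(t)}\cdot\x - \wstar\cdot\x)^2(\bu\cdot\x)^2] \lesssim \|\w^{(t)}-\wstar\|_2^2$ (here I would use the Cauchy--Schwarz / fourth-moment control afforded by Assumption~\ref{assmpt:concentration} together with the $H_4$ truncation at level $r_\eps$, which is exactly why $r_\eps$ with $H_4(r_\eps)\lesssim\eps$ and $N \gtrsim (d/\delta)(r_\eps^2 + M^2)$ are needed), plus an optimal-error part $\lesssim \opt$. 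Then a standard covariance-concentration bound (vector Bernstein, or the matrix argument in \Cref{app:cor:empirical-grad-variance}) gives, with probability $\ge 1-\delta$ over the batch, $\|\g^{(t)} - \nabla\Lsur(\w^{(t)})\|_2^2 \lesssim \tfrac{d}{N\delta}\big(r_\eps^2\|\w^{(t)}-\wstar\|_2^2 + M^2 + \opt\big) + \eps$, and by choosing $N$ a large enough constant multiple of $(d/\delta)(r_\eps^2 + M^2)$ this becomes $\le c\,\mu^2\|\w^{(t)}-\wstar\|_2^2 + c(\opt + \eps)$ for a small absolute constant $c$ we can tune.

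Given that, I would finish as follows. For the cross term, $-2\eta(\g^{(t)}-\nabla\Lsur(\w^{(t)}))\cdot(\w^{(t)}-\wstar) \le 2\eta\|\g^{(t)}-\nabla\Lsur(\w^{(t)})\|_2\|\w^{(t)}-\wstar\|_2 \le \tfrac{\eta\mu}{2}\|\w^{(t)}-\wstar\|_2^2 + \tfrac{2\eta}{\mu}\|\g^{(t)}-\nabla\Lsur(\w^{(t)})\|_2^2$ by Young's inequality, and plugging in the concentration bound and using $\|\w^{(t)}-\wstar\|_2 \le 3\|\wstar\|_2 = O(1)$ (since $\w^{(t)}\in\ball(2\|\wstar\|_2)$) this is at most $\eta\mu\|\w^{(t)}-\wstar\|_2^2 + \tfrac{4\eta}{\mu}(\opt+\eps)$. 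For the squared-norm term, write $\|\g^{(t)}\|_2^2 \le 2\|\g^{(t)}-\nabla\Lsur(\w^{(t)})\|_2^2 + 2\|\nabla\Lsur(\w^{(t)})\|_2^2$; the first piece is again absorbed, and for the second, smoothness of $\Lsur$ ($1$-smoothness under the normalization $\E[\x\x^\top]\preceq\vec I$) plus $\nabla\Lsur(\wstar)\cdot v \ge -\sqrt{\opt}\|v\|_2$ (Cauchy--Schwarz, as in the proof of \Cref{cor:true-sharpness-m}) give $\|\nabla\Lsur(\w^{(t)})\|_2^2 \lesssim \|\w^{(t)}-\wstar\|_2^2 + \opt$, so $\eta^2\|\g^{(t)}\|_2^2 \le 8\eta^2\|\w^{(t)}-\wstar\|_2^2 + \tfrac{C\eta}{\mu}(\opt+\eps)$ after adjusting constants. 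Summing the cross-term and squared-norm contributions yields $(Q_2) \le (\tfrac32\eta\mu + 8\eta^2)\|\w^{(t)}-\wstar\|_2^2 + \tfrac{8\eta}{\mu}(\opt+\eps)$, as claimed. The main obstacle is the refined variance bound in the key step — proving that the per-sample second moment genuinely scales as $\|\w^{(t)}-\wstar\|_2^2 + \opt$ (not a fixed constant) requires carefully handling the truncation at $r_\eps$ to kill the heavy-tail contribution while keeping the dependence on $\|\w^{(t)}-\wstar\|_2$, and then feeding this into a high-probability (not in-expectation) vector concentration inequality with the right $d/N$ dependence.
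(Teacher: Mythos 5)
Your proposal is correct and follows essentially the same route as the paper: the same split of $(Q_2)$ into the cross term and $\eta^2\|\g^{(t)}\|_2^2$, Young's inequality with parameter $\mu$, the decomposition of $\g^{(t)}-\nabla\Lsur(\w^{(t)})$ into the noise-free deviation (controlled by $\|\w^{(t)}-\wstar\|_2^2$) and the deviation at $\wstar$ (controlled by $\opt+\eps$ via the $r_\eps$/$M$ truncations and Markov), with the batch size $N\gtrsim (d/\delta)(r_\eps^2+M^2)$ absorbing everything into the stated constants. The only cosmetic difference is that you bound $\|\g^{(t)}\|_2^2$ via $2\|\g^{(t)}-\nabla\Lsur(\w^{(t)})\|_2^2+2\|\nabla\Lsur(\w^{(t)})\|_2^2$ while the paper uses $2\|\bar\g^{(t)}\|_2^2+2\|\g^*\|_2^2$; both reduce to the same concentration and Lipschitz/moment estimates.
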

        
    \begin{proof}
        Observe that by the inequality $\bx\cdot\by\leq (\mu/2)\norm{\bx}_2^2 + (1/(2\mu))\norm{\by}_2^2$ applied to the inner product $(\g^{(t)}-\nabla \Lsur(\w\tth))\cdot(\w^{(t)}-\wstar)$, we get
\begin{align}
            (Q_2)\leq \frac{\eta}{\mu}\norm{\g^{(t)}-\nabla \Lsur(\w\tth)}_2^2 + \eta\mu\norm{\w^{(t)} - \wstar}_2^2 + 2\eta^2\norm{\bar{\g}^{(t)}}^2_2 + 2\eta^2\norm{\g^*}^2_2 \;,\nonumber  
        \end{align}
        where $\mu$ is the sharpness parameter and we used the definition that $\Bar{\g}\tth = \g\tth - \g^*$ in the first inequality. 

        Note that 
 $\|\g^{(t)}-\nabla \Lsur(\w\tth)\|_2^2\leq
            2\|\bar{\g}^{(t)} - \nabla\bLsur(\w^{(t)})\|_2^2+2\|\g^{*} - \nabla\Lsur(\wstar)\|_2^2$,
        since we have $\bLsur(\w\tth) = \Lsur(\w\tth) - \Lsur(\w^*)$. Thus, it holds
        \begin{align}\label{eq:main-thm-claim-I2-m}
        (Q_2)&\leq \eta\mu\norm{\w^{(t)} - \wstar}_2^2              +2\eta^2\norm{\bar{\g}^{(t)}}^2_2 + 2\eta^2\norm{\g^*}^2_2 \nonumber\\
        &\quad +(2\eta/\mu)\left(\norm{\bar{\g}^{(t)} - \nabla\bLsur(\w^{(t)})}_2^2 + \norm{\g^{*} - \nabla\Lsur(\wstar)}_2^2\right)  \;.
        \end{align}  
        Furthermore, using standard concentration tools, 
        it can be shown that when $N\geq C d(r_\eps^2+M^2)/\delta$
        where $C$ is a sufficiently large absolute constant, 
        with probability at least $1-\delta$, it holds       
        \begin{align*}
            \|\bar{\g}^{(t)} - \nabla\bLsur(\w^{(t)})\|_2^2&
            \leq (\mu^2/4) \|\w^{(t)} - \w^*\|_2^2, \\ \|\bar{\g}^{(t)}\|_2^2 &\leq 4\|\w^{(t)} - \wstar\|_2^2,
        \end{align*}
        and 
       $
        \norm{\g^*-\nabla\Lsur(\wstar)}_2^2\leq\opt +\epsilon $, $   \norm{\g^*}_2^2\leq 2\opt + \eps$ (see \Cref{app:lem:empirical-grad-approx-population} and \Cref{app:cor:bound-norm-g*-gt} for details).
        It remains to plug these bounds back into \Cref{eq:main-thm-claim-I2-m}.
\end{proof}

Combining the upper bounds on $(Q_1)$ and $(Q_2)$ and choosing $\eta = {\mu}/{32}$, we have:
    \begin{equation}\label{eq:final-eq}
       \norm{\w^{(t+1)} - \wstar}_2^2
         \leq \;\lp(1 - \mu^2/128\rp)\norm{\w^{(t)} - \w^*}_2^2+ (1/4)\big(\opt + \eps\big)\;. 
    \end{equation}
    When $\|\w^{(t)} - \w^*\|_2^2\geq (64/\mu^2)(\opt + \epsilon)$,
in other words when $\w^{(t)}$ is still away from the minimizer $\w^*$, it further holds with probability $1-\delta$:
\begin{equation}\label{eq:||w^(t+1)-w*|| <= (128/mu^2 + 1/alpha^2)opt-when-t<T-m}
       \norm{\w^{(t+1)}-\wstar}^2_2 \leq (1 - \mu^2/256)\norm{\w^{(t)} - \wstar}_2^2,
   \end{equation}
 which proves the sufficient decrease of $\|\w^{(t)} - \w^*\|_2^2$ that we proposed at the beginning. 

Let $T$ be the first iteration such that $\w^{(T)}$ satisfies $\|\w^{(T)} - \w^*\|_2^2\leq (64/\mu^2)(\opt + \epsilon)$. Recall that we need \Cref{eq:condition-w^t-in-ball(2||w*||_2)-m} for every $t\leq T$ to be satisfied to implement sharpness. The first condition is satisfied naturally for $\|\w^{(t+1)}-\wstar\|^2_2\leq \norm{\wstar}_2^2$ as a consequence of \Cref{eq:||w^(t+1)-w*|| <= (128/mu^2 + 1/alpha^2)opt-when-t<T-m} (recall that $\w^{(0)} = 0$). For the second condition, when $t+1\leq T$, we have
$    \norm{\w^{(t+1)} - \w^*}_2^2\geq ({64}/{\mu^2})(\opt + \epsilon)$,
hence the second condition also holds.

When $t\leq T$, the contraction of $\|\w^{(t)} - \w^*\|_2^2$ indicates a linear convergence of SGD. Since $\w^{(0)}=0$, $\|\w^*\|_2\leq W$, it holds $\|\w^{(t)} - \w^*\|_2^2\leq (1 - \mu^2/256)^t\|\w^{(0)} - \w^*\|_2^2\leq \exp(-t\mu^2/256) W^2$. Thus, to generate  $\w^{(T)}$ such that $\|\w^{(T)} - \w^*\|_2^2\leq (64/\mu^2)(\opt + \epsilon)$, it suffices to run \Cref{alg:gd-3} for
$   T= \wt{\Theta}((1/\mu^2)\log\lp(W/\eps\rp)\big)$
iterations. Recall that at each step $t$ the contraction $\|\w^{(t+1)}-\wstar\|^2_2 \leq (1 - \mu^2/256)\|\w^{(t)} - \wstar\|_2^2$ holds with probability $1 - \delta$, thus the union bound implies $\|\w^{(T)} - \w^*\|_2^2\leq (64/\mu^2)(\opt + \epsilon)$ holds with probability $1 - T\delta$.
   Moreover, as $ \Ltwo(\vec w^{(T)}) \lesssim \|\vec w^{(T)}-\wstar\|_2^2,$ if $\|\w^{(T)} - \w^*\|_2^2\leq (64/\mu^2)(\opt + \epsilon)$, then 
$\Ltwo(\vec w^{(T)})  = O(1/\mu^2)(\opt+ \eps).$
   Letting $\delta = 1/(3T)$ completes the proof.
\end{proof}

\vspace{-0.3cm} \section{Conclusion}
\vspace{-0.1cm} We provided an efficient constant-factor approximate learner 
for the problem of agnostically learning a single neuron over structured classes of distributions. Notably, our algorithmic result applies under much milder distributional assumptions as compared to prior work.
Our results are obtained by leveraging a sharpness property 
(a local error bound) from optimization theory that we prove 
holds for the considered problems. This property is crucial both to establishing a constant factor approximation and to obtaining  improved sample complexity and runtime.  An interesting direction for future work is to explore whether sharpness can be leveraged to obtain positive results 
for other related learning problems.
\vspace{-0.25cm} \paragraph{Acknowledgement. }
JD thanks Alexandre 
D’Aspremont and Jérôme Bolte for a useful 
discussion on local error bounds.

\newpage

\bibliographystyle{alpha}
\bibliography{mydb}
\appendix
\newpage

\section*{Appendix} \label{sec:app}

\paragraph{Organization. }
The appendix is organized as follows: 
In \Cref{app:sharp}, we provide some remarks on the sharpness property we have been using throughout the paper. In \Cref{ssec:related}, we provide additional detailed comparison 
with prior work. In \Cref{app:section3} and \Cref{app:section4}, we present 
the full contents of \Cref{sec:sharpness} and \Cref{sec:fast-rate-for-sharp-objective} respectively, providing supplementary lemmas and completing the omitted proofs in the main body. \Cref{sec:distri} shows that there are many natural distributions 
satisfying \Cref{assmpt:margin} and \Cref{assmpt:concentration}. 
Finally, in \Cref{sec:non-monotone-activation-results}, we show that our results 
extend to certain non-monotonic distributions, 
including GeLUs~\cite{HG16} and Swish~\cite{RZL17}.

\paragraph{Additional Notation. }
Some additional notation  we use here is listed below.
Given a distribution $\D$ on $\R^d\times\R$, we use $\{(\x(j),y(j))\}_{j=1}^N$ to denote $N$ i.i.d.\ samples from $\D$.
We slightly abuse the notation and denote by $\vec e_i$ the $i^{\mathrm{th}}$ standard basis vector in $\R^d$.  
The notation $[\cdot]_+$ is used for the positive part of the argument, i.e.,  $[\cdot]_+ = \max\{\cdot, 0\}.$ For a vector $\x=(\x_1,\cdots,\x_n),$ $[\cdot]_+$ is applied element-wise: $[\x]_+ := ([\x_1]_+,\cdots,[\x_n]_+).$ 
For nonnegative expressions $E, F$ 
we write $E \gg F$ to denote $E \geq C \, F$, where
$C>0$ is a \emph{sufficiently large} universal constant 
(independent of the parameters of $E$ and $F$). 
The notation $\ll$ is defined similarly.

\section{Remarks about Sharpness} \label{app:sharp}

We recall the formal definition of sharpness, already mentioned
in the introduction.

\begin{definition}[Sharpness]\label{def:sharpness-formal}
Given a function $f: \mathcal{C}\mapsto \R$ 
where $\mathcal{C}\subseteq \R^d$, 
suppose the set of its minimizers $\mathcal{Z^*} = \argmin_{\bz\in\mathcal{C}}f(\bz)$ is closed and not empty. Let $f^* = \min_{\bz\in\mathcal{C}} f(\bz)$. 
We say that $f$ is $\mu$-sharp, for some $\mu>0$, 
if the following inequality holds:
\begin{equation*}
f(\bz) - f^*\geq \frac{\mu}{2} \mathrm{dist}(\bz, \mathcal{Z}^*)^2,\,\forall \bz\in\R^d,
\end{equation*}
where $\mathrm{dist}(\bz, \mathcal{Z}^*) = 
\min_{\bz^*\in\mathcal{Z}^*} \norm{\bz - \bz^*}_2$.
\end{definition}

\begin{remark} \label{rem:sharp}
{\em We will slightly abuse the name of sharpness 
to refer to sharpness-like properties. For example, if a function satisfies
\begin{equation}\label{eq:sharpness-like}
    \nabla f(\bz)\cdot(\bz - \bz^*)\geq \mu\norm{\bz - \bz^*}_2^2,
\end{equation}
for some $\bz^*\in\mathcal{Z}^*$, then we say $f$ is $\mu$-sharp. 
This is due to the fact that when $f$ is a convex function, 
it holds $f(\bz) - f^*\leq \nabla f(\bz)\cdot(\bz - \bz^*)$, 
hence \Cref{def:sharpness-formal} implies \Cref{eq:sharpness-like}. 
Thus, \Cref{eq:sharpness-like} can be viewed as a milder property of sharpness.}
\end{remark}

Compared to strong convexity, sharpness is a milder condition. 
Indeed, for any $\mu$-strongly-convex function $f$, 
if $\bz^*\in\argmin_{\bz\in\R^d} f(\bz)$ then 
$f(\bz) - f^* \geq \nabla f(\bz^*)\cdot(\bz - \bz^*) 
+ \frac{\mu}{2}\norm{\bz - \bz^*}_2^2 \geq \frac{\mu}{2}\norm{\bz - \bz^*}_2^2$; therefore, $f$ is $\mu$-sharp. However, the opposite does not hold in general. 
For example, consider $f:\R\mapsto\R$ defined by $f(z) = z^2$ 
if $z\geq 0$ and $f(z) = 0$ otherwise, whose set of minimizers on $\R$ 
is $\mathcal{Z}^* = (-\infty, 0]$ and $f^* = 0$. 
Thus, if $z\geq0$, then $f(z) - f^*\geq \mathrm{dist}(z,\mathcal{Z}^*)^2 = z^2$ 
and if $z<0$, we have $f(z) - f^* = 0 = \mathrm{dist}(z,\mathcal{Z}^*)^2$. 
Therefore, $f$ is $2$-sharp but it is not strongly convex.

\section{Additional Comparison to Prior Work}\label{ssec:related}

Here we provide additional technical comparison to prior work that did not appear in the main body, due to space limitations.

\paragraph{Comparison with \cite{FCG20}.} The work of \cite{FCG20} studies the problem of learning ReLU (and other nonlinear) activations and shows that gradient descent on the $L_2^2$ loss converges to a point achieving error $K\sqrt{\opt}$. The parameter $K$ depends on the maximum norm of the points $\x$, and can depend on the dimension $d$. Specifically, even for the basic case that the marginal distribution on examples is the standard normal distribution, the parameter $K$ scales (polynomially) with $d$. That is, \cite{FCG20} does not provide constant factor approximate learners in this setting.

\paragraph{Comparison with \cite{DGKKS20}.} The work of \cite{DGKKS20} studies the problem of learning ReLU activations using the same surrogate loss we consider in this work. Our work differs from \cite{DGKKS20} in two key aspects. The first aspect concerns the generality and strengh of results; the second aspect concerns the techniques.

In terms of the results themselves, the algorithm given in 
\cite{DGKKS20} is restricted to the case of ReLUs (while we handle a broader family of activations). More importantly,
the distributional assumptions of \cite{DGKKS20} are much stricter than ours,
--- focusing on logconcave distributions --- 
whereas we handle broader classes of distributions, 
including heavy tailed and discrete distributions, 
not covered by any prior work (see also \Cref{sec:distri}). 
Informally, what allows us to handle broader classes of distributions 
is our focus on proving the sharpness property (as opposed to strong convexity), which is a much milder property. 
Further, we show that it suffices for this property to hold 
only in a small region (ball of radius $2\|\w^*\|_2$) 
and for the (impossible to evaluate) noise-free surrogate loss. Another remark is that \cite{DGKKS20} assume that the (corrupted) labels 
are bounded, not fully capturing the agnostic setting. By contrast, our analysis can handle unbounded labels, 
i.e., we do not make further assumptions about the noise. 
Finally, even if we restrict our focus to the class of logconcave 
distributions, our algorithm has sample complexity scaling with 
$\polylog(1/\eps)$, as opposed to $1/\eps^2$ in \cite{DGKKS20}.

The second and more important difference lies in the techniques 
that are used in each work.
\cite{DGKKS20} optimizes the surrogate loss directly and 
shows that finding a point with a small gradient of the surrogate loss 
leads to the small $L_2^2$ error. More specifically, the requirement in 
\cite{DGKKS20} is that the gradient is sufficiently small so that the 
optimality gap of the surrogate loss is of the order $\epsilon$. This statement is similar to the result we show in \Cref{cor:landscape-assuptions-m}. 
Crucially, while we utilize the gradients of the surrogate loss 
in the algorithm and in the analysis, we never impose 
a requirement that the optimality gap of the surrogate loss 
is of the order $\epsilon.$ Instead, we show that as long as 
the gradient is larger than order-$\sqrt{\opt+\eps}$, 
sharpness holds and linear convergence rate applies. 
On the other hand, when the gradient is of the order $\sqrt{\opt+\eps}$ 
or smaller, we argue that the candidate solution that 
the algorithm maintains is already an $(O(\opt) + \epsilon)$-approximate solution in terms of the $L_2^2$ error. This approach further 
enables us to be agnostic in the value of $\opt.$ 
Notably, if $\eps \ll \opt$ and we were to require 
that the algorithm finds a solution with either the gradient 
of the order $\sqrt{\eps}$ or the optimality gap $\eps,$ 
we would need to optimize the surrogate loss within a region where the 
sharpness does not necessarily hold. Without sharpness, 
only sublinear rates of convergence apply, and the number 
of iterations increases to order-$\frac{1}{\eps}$. 
Thus, leveraging the structural properties that 
we prove in this work is crucial to obtaining 
the exponential improvements in sample 
and computational complexities.   

Finally, \cite{DGKKS20} requires the 
surrogate loss to be strongly convex to 
connect the small gradient condition with 
the small $L_2^2$ error. This makes the argument 
rather straightforward, compared with what is used in our work. 
For the sake of discussion, assume that $\Lsur$ is $1$-
strongly convex and the distribution is 
isotropic. Furthermore, denote by $\wstar$ 
the minimizer of the $L_2^2$ loss and by $\w'$ the 
minimizer of $\Lsur$. The property that 
$\Lsur$ is strongly convex implies that 
$\|\nabla \Lsur(\wstar)-\nabla \Lsur(\vec w')\|_2^2\geq \|\wstar-\vec w'\|_2^2$; 
furthermore, it can be shown that $\|\nabla 
\Lsur(\wstar)\|_2^2\leq {\Ltwo(\wstar)} $. 
Therefore, because $\nabla \Lsur(\vec 
w')=\vec 0$, it immediately follows that 
$\Ltwo(\w')\lesssim \Ltwo(\wstar)$. 
Our work leverages a much weaker property than strong convexity --- sharpness --- as summarized in \Cref{cor:true-sharpness-m}. This weaker property turns out to be sufficient to ensure that the noise  cannot make the gradient field guide 
us far away from the optimal 
solution.

\paragraph{Comparison with \cite{DKTZ22}.} The work of \cite{DKTZ22} studies the problem of ReLU (and other unbounded activations) regression with agnostic noise. They show that for a class of well-behaved distributions (see Definition~\ref{def:well-behaved}) gradient descent on the $L_2^2$ loss converges to a point achieving $O(\opt)+\eps$ error. 
Moreover, the sample and computational complexities of their algorithm are similar to those achieved in our work (for the class of well-behaved distributions). On the other hand, the distributional assumptions used in \cite{DKTZ22} are quite strong. Specifically, the ``well-behaved'' assumption requires that the marginal distribution 
have sub-exponential concentration and anti-anti-concentration 
in every lower dimensional subspace; that is, the probability density function 
is lower bounded by a positive constant at every point. The latter assumption 
does not allow for several discrete distributions, like discrete Gaussians or 
uniform on the cube, that is handled in our work. Moreover, our work can 
additionally handle distributions with much weaker concentration properties.

\paragraph{Comparison with \cite{karmakar2022provable}.}
In a weaker noise model, the work of \cite{karmakar2022provable} considered a similar-looking --- though crucially different  --- condition for robust ReLU regression, 
namely that:
\begin{equation}\label{eq:margin-assmpt-KM22}
    \lambda_{\mathrm{min}} \lp(\E{\x\x^\top\1\{\w^*\cdot\x\geq 2\theta^*\}}\rp) = \lambda_1>0,
\end{equation}
where $\theta^*$ is the largest possible absolute value of the noise; 
in other words, $\theta^* = \sup_{(\x,y)\sim\D} |y-\sigma(\w^*\cdot\x)|$. It is worth noting that \Cref{eq:margin-assmpt-KM22} 
cannot be easily satisfied, as the noise in the agnostic model is not bounded. 
But even if the noise was bounded, this condition would give slack for a small number of distributions. For instance in the uniform on the hypercube, if $\theta^*> 1/2$, then the minimum eigenvalue is zero. Furthermore, the algorithm in that work converges to a point that achieves $O(\theta^*)$ error, instead of $O(\opt)$ error.
In contrast, we make no assumptions about the boundedness of the noise, and obtain near-optimal error in more general settings.

\paragraph{Additional Related Work.}
As mentioned in the introduction, the convex surrogate leveraged in our 
work was first defined in~\cite{AuerHW95} and then implicitly used 
in~\cite{KalaiS09, kakade2011efficient} for learning GLMs. In addition to 
these and the aforementioned works, it is worth mentioning that the same 
convex surrogate has been useful in the context of learning linear 
separators from limited information~\cite{DeDFS14} and in related game-
theoretic settings~\cite{DeDS17, DiakonikolasPPS22}.

\section{Full Version of \Cref{sec:sharpness}}\label{app:section3}

\paragraph{Discussion about the Parameters in \Cref{assmpt:activation,assmpt:concentration,assmpt:margin}}
If an activation $\sigma$ is $(\alpha',\beta')$-bounded, 
then it is also $(\alpha,\beta)$-bounded 
for $\alpha\geq \alpha'$ and $\beta\leq \beta'$. 
This justifies the convention $\alpha\geq 1$ and $\beta\leq 1$ in \Cref{assmpt:activation}.
If $\sigma(0)\neq 0$, we can generate new labels $y'$ by subtracting 
$\sigma(0)$ from $y$ and consider the activation $\sigma_0(t) = \sigma(t) - \sigma(0)$.
Similar reasoning justifies the conventions 
$\lambda,\gamma \in(0,1]$ in \Cref{assmpt:margin} 
and $B\geq 1$, $\rho\leq 1$ in \Cref{assmpt:concentration}.

\subsection{Proof of \Cref{lem:sharpness-well-behaved}}\label{app:proof-of-lem:sharpness-well-behaved}

For convenience, we restate the lemma followed by its detailed proof.

\begin{lemma}
Suppose that Assumptions~\ref{assmpt:activation}--\ref{assmpt:concentration} hold.
Then the noise-free surrogate loss $\bLsur$ is $\Omega(\lambda^2\gamma\beta\rho/B)$-sharp in the ball $\ball(2\|\w^*\|_2)$, i.e., $\forall \w \in \ball(2\|\w^*\|_2)$ we have
\begin{equation*}
        \nabla \bLsur(\w)\cdot(\w-\wstar)\gtrsim \lambda^2\gamma\beta\rho/B\norm{\w - \wstar}_2^2\; .
    \end{equation*}
\end{lemma}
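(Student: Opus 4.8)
\textbf{Proof plan for Lemma~\ref{lem:sharpness-well-behaved}.}
The plan is to carry out the argument already outlined in the proof sketch, filling in the two claims, and to note at the end that the general case reduces to the normalized one. Assume first $\|\wstar\|_2 = 1$ (for general $\wstar$ the identical argument applies after rescaling the margin threshold to $\gamma\|\wstar\|_2$ and the radius to $2\|\wstar\|_2$). Starting from $\nabla\bLsur(\w) = \E{(\sigma(\w\cdot\x) - \sigma(\wstar\cdot\x))\x}$ and using that $\sigma$ is non-decreasing, so that $\sigma(\w\cdot\x) - \sigma(\wstar\cdot\x)$ and $\w\cdot\x - \wstar\cdot\x$ share the same sign, I would rewrite $\nabla\bLsur(\w)\cdot(\w-\wstar) = \E{|\sigma(\w\cdot\x)-\sigma(\wstar\cdot\x)|\,|\w\cdot\x-\wstar\cdot\x|}$, a nonnegative quantity, and discard everything outside the margin event $\mathcal{E}_m = \{\wstar\cdot\x \geq \gamma\}$. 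On $\mathcal{E}_m$ I would invoke $(\alpha,\beta)$-unboundedness: when $\w\cdot\x > 0$ both arguments are nonnegative, so the integrand is $\geq \beta(\w\cdot\x-\wstar\cdot\x)^2$; when $\w\cdot\x \leq 0$ we have $\wstar\cdot\x > 0 \geq \w\cdot\x$, so $|\sigma(\w\cdot\x)-\sigma(\wstar\cdot\x)| \geq \beta\,\wstar\cdot\x$. This produces the bound $\nabla\bLsur(\w)\cdot(\w-\wstar) \geq \beta\,\E{(\w\cdot\x-\wstar\cdot\x)^2\1\{\w\cdot\x>0,\,\mathcal{E}_m\}} + \beta(Q)$, with $(Q) = \E{|\wstar\cdot\x|\,|\w\cdot\x-\wstar\cdot\x|\,\1_{\mathcal{E}_0}}$ and $\mathcal{E}_0 = \{\w\cdot\x \leq 0\}\cap\mathcal{E}_m$.

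Next I would control the ``negative'' region (Claim~\ref{clm:expectation-bound}). Restricting further to $\mathcal{E}_1(r_0) = \{-2r_0 < \w\cdot\x \leq 0\}\cap\mathcal{E}_m$ for a parameter $r_0 \geq 1$, the inequality $|\w\cdot\x| \leq 2r_0 \leq (2r_0/\gamma)|\wstar\cdot\x|$ gives $|\w\cdot\x - \wstar\cdot\x| = \wstar\cdot\x - \w\cdot\x \leq (1 + 2r_0/\gamma)\,\wstar\cdot\x$, hence $|\wstar\cdot\x| \geq (\gamma/(3r_0))|\w\cdot\x-\wstar\cdot\x|$ (using $\gamma \leq 1 \leq r_0$), so $(Q) \geq (\gamma/(3r_0))\,\E{(\w\cdot\x-\wstar\cdot\x)^2\1_{\mathcal{E}_1(r_0)}}$. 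Since $(\{\w\cdot\x>0\}\cap\mathcal{E}_m)\cup\mathcal{E}_1(r_0) = \{\w\cdot\x>-2r_0\}\cap\mathcal{E}_m$ and $\gamma/(3r_0)\leq 1$, combining the two contributions yields
\begin{equation*}
\nabla\bLsur(\w)\cdot(\w-\wstar) \;\geq\; \frac{\beta\gamma}{3r_0}\,\E{(\w\cdot\x-\wstar\cdot\x)^2\1\{\w\cdot\x>-2r_0,\,\mathcal{E}_m\}}\;.
\end{equation*}

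It then remains to prove Claim~\ref{clm:approx-error}: for $\w\in\ball(2)$ and $r_0$ chosen with $h(r_0)\leq \lambda^2\rho/(20B)$, the truncated second moment above is $\geq (\lambda/2)\|\w-\wstar\|_2^2$. Assumption~\ref{assmpt:margin} gives $\E{(\w\cdot\x-\wstar\cdot\x)^2\1_{\mathcal{E}_m}} \geq \lambda\|\w-\wstar\|_2^2$, so it suffices to bound the discarded mass $\E{(\w\cdot\x-\wstar\cdot\x)^2\1\{\w\cdot\x \leq -2r_0\}}$ by $(\lambda/2)\|\w-\wstar\|_2^2$. Setting $\bv = (\w-\wstar)/\|\w-\wstar\|_2$ and $\bu = \w/\|\w\|_2$, on $\{\w\cdot\x\leq-2r_0\}$ we have $|\bu\cdot\x| \geq 2r_0/\|\w\|_2 \geq r_0$; by \CS the discarded mass is at most $\|\w-\wstar\|_2^2\sqrt{\E{(\bv\cdot\x)^4}\,h(r_0)}$, and integrating the tail bound $h(r)\leq Br^{-(4+\rho)}$ gives $\E{(\bv\cdot\x)^4} \leq 1 + 4B/\rho \leq 5B/\rho$, so $h(r_0)\leq\lambda^2\rho/(20B)$ makes this at most $(\lambda/2)\|\w-\wstar\|_2^2$. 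Finally, taking $r_0 = 5B/(\lambda\rho)$ satisfies $h(r_0)\leq Br_0^{-(4+\rho)}\leq Br_0^{-4}\leq \lambda^2\rho/(20B)$ (using $\lambda,\rho\leq 1\leq B$), and substituting into the displayed bound gives $\nabla\bLsur(\w)\cdot(\w-\wstar)\gtrsim (\beta\gamma\lambda^2\rho/B)\|\w-\wstar\|_2^2$, which is the claim.

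The step I expect to be the main obstacle is Claim~\ref{clm:approx-error}, and specifically the choice of $r_0$: on the one hand $r_0$ must be taken large enough that the polynomially decaying tail $h(r_0)$ falls below a $\lambda,\rho,B$-dependent threshold; on the other hand $r_0$ appears in the denominator of the sharpness constant through the factor $\beta\gamma/(3r_0)$, so an insufficiently fast tail decay would only yield a vanishing (not constant) sharpness parameter --- the $h(r)=O(r^{-(4+\rho)})$ rate is exactly what lets $r_0$ be polynomial in $1/\lambda,1/\rho,B$ while keeping the $\E{(\bv\cdot\x)^4}$ integral finite. The rest (the sign case analysis for the activation and the fourth-moment computation) is routine but must be done with the right constants so that the $\lambda^2$ (rather than $\lambda$) dependence is produced correctly.
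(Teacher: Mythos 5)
Your proof is correct and follows essentially the same route as the paper's own argument: the same sign case analysis via $(\alpha,\beta)$-unboundedness on the margin event, the same truncation event $\{-2r_0 < \w\cdot\x \leq 0\}$ with the $\gamma/(3r_0)$ factor, the same Cauchy--Schwarz plus fourth-moment bound $\E{(\bv\cdot\x)^4}\leq 5B/\rho$ to control the discarded tail, and the same choice $r_0 = 5B/(\lambda\rho)$. The constants all check out, so nothing further is needed.
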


\begin{proof}
By definition, we can write $\nabla\bLsur(\w) = \E{(\sigma(\w\cdot\x) - \sigma(\wstar\cdot\x))\x}$. Therefore, the inner product $\nabla \bLsur(\w)\cdot(\w-\wstar)$ can be written as 
\begin{align*}
    &\quad \nabla \bLsur(\w)\cdot(\w-\wstar) \\
    &= \E{(\sigma(\w\cdot\x)-\sigma(\wstar\cdot\x))(\w\cdot\x-\wstar\cdot\x)} \\
    &= \E{|\sigma(\w\cdot\x)-\sigma(\wstar\cdot\x)|
|\w\cdot\x-\wstar\cdot\x|} 
\\&\geq \E{|\sigma(\w\cdot\x)-\sigma(\wstar\cdot\x)|
|\w\cdot\x-\wstar\cdot\x|\1{\{\wstar\cdot\x\geq \gamma\norm{\wstar}_2\}}} 
\;,
\end{align*}
where the second equality is due to the non-decreasing property of $\sigma$, and the inequality is due to the fact that every term inside the expectation is nonnegative. Since $\sigma$ is $(\alpha,\beta)$-unbounded, we have that $\sigma'(t)\geq \beta$ for all $t\in[0,\infty)$. By the mean value theorem, for $t_2\geq t_1\geq 0$, we have $\sigma(t_1)-\sigma(t_2) = \sigma'(\xi)(t_1-t_2)$ for some $\xi\in(t_1,t_2)$. Thus, we obtain that $|\sigma(t_1)-\sigma(t_2)|\geq \beta|t_1-t_2|$. Additionally, if $t_1\geq 0$ and $t_2\leq 0$, then $|\sigma(t_1)-\sigma(t_2)|=|\sigma(t_1)-\sigma(0)|+|\sigma(0)-\sigma(t_2)|\geq |\sigma(t_1)-\sigma(0)| \geq \beta t_1$.
Therefore, by combining the above, we get
\begin{equation}\label{app:eq:sharp-1}
\begin{aligned}
    \nabla \bLsur(\w)\cdot(\w-\wstar) 
&\geq \beta\E{(\w\cdot\x-\wstar\cdot\x)^2\1{\{\w\cdot\x>0,\,\wstar\cdot\x>\gamma\norm{\wstar}_2\}}} \\
    &\quad + \beta\underbrace{\E{|\wstar\cdot\x||\w\cdot\x-\wstar\cdot\x|\1{\{\w\cdot\x\leq 0,\,\wstar\cdot\x\geq \gamma\norm{\wstar}_2\}}}}_{(Q)}.
\end{aligned}
\end{equation}

Denote $\event_0 = \{\x: \w\cdot\x\leq 0,\,\wstar\cdot\x\geq \gamma\norm{\wstar}_2\}$. We show that the term $(Q)$ can be bounded below by a quantity that is proportional to $\E{(\w\cdot\x-\wstar\cdot\x)^2\1{\{\w\cdot\x\leq 0,\,\wstar\cdot\x>\gamma\norm{\wstar}_2\}}}$.
To this end, we establish the following claim. 
\begin{claim}\label{app:clm:expectation-bound}
   For $r_0\geq 1$, define the event $\event_1 = \event_1(r_0)=\{\x: -2r_0\|\wstar\|_2<\vec w\cdot \x\leq0,\wstar\cdot \x\geq\gamma\|\wstar\|_2\}$. It holds 
    $(Q)\geq (\gamma/(3r_0))\E{(\w\cdot\x - \wstar\cdot\x)^2 \1_{\event_1}(\x)}\;.$ 
\end{claim}
\begin{proof}[Proof of \Cref{app:clm:expectation-bound}]
Since $\event_1\subseteq\event_0$, it holds that
    $(Q)\geq \E{|\wstar\cdot\x||\w\cdot\x-\wstar\cdot\x|\1_{\event_1}(\x)}$.
Restricting $\x$ on the event $\event_1$, it holds that $|\w\cdot \x|\leq 2(r_0/\gamma)|\wstar\cdot\x|$. Therefore, we get
    \begin{equation*}
        \w^*\cdot \x - \w\cdot \x = |\w^*\cdot \x| + |\w \cdot \x| \leq (1+ 2r_0/\gamma)|\w^* \cdot \x|.
    \end{equation*}
By \Cref{assmpt:margin} we have that $\gamma\in(0,1]$, therefore we get that $|\w^* \cdot \x|\geq \gamma/(\gamma + 2r_0)\geq \gamma/(3r_0)$, since $r_0\geq 1$.
Taking the expectation of $|\wstar\cdot \x||\w\cdot\x - \wstar\cdot\x|$ with $\x$ restricted on event $\event_1$, we obtain
    \begin{align*}
 (Q) &\geq \E{|\wstar\cdot \x||\w\cdot\x - \wstar\cdot\x| \1_{\event_1}(\x)}\geq  \gamma/(3r_0)\E{(\w\cdot\x - \wstar\cdot\x)^2 \1_{\event_1}(\x)}\;,
    \end{align*}
as desired. \end{proof}

Combining \Cref{app:eq:sharp-1} and \Cref{app:clm:expectation-bound}, we get that 
\begin{align}\label{app:eq:final-bound}
    &\quad \nabla \bLsur(\w)\cdot(\w-\wstar) \nonumber\\
    &\geq \beta\E{(\w\cdot\x - \w^*\cdot\x)^2\1\{\w\cdot\x>0, \w^*\cdot\x \geq \gamma\norm{\w^*}_2\}} \nonumber\\
    &\quad + \frac{\beta\gamma}{3r_0}\E{(\w\cdot\x - \w^*\cdot\x)^2\1{\{-2r_0\norm{\w^*}_2 < \w\cdot\x\leq 0,\,\wstar\cdot\x\geq \gamma\norm{\wstar}_2\}}} \nonumber\\
    &\geq \frac{\beta\gamma}{3r_0}\E{(\w\cdot\x-\wstar\cdot\x)^2\1{\{\w\cdot\x > -2r_0\|\wstar\|_2,\,\wstar\cdot\x>\gamma\norm{\wstar}_2\}}},
\end{align}
where in the last inequality we used the fact that $1 \geq \gamma/(3r_0)$ (since  $\gamma\in(0,1]$ and $r_0\geq 1$). To complete the proof, we need to show that, for an appropriate choice of $r_0$, the probability of the event $\{\x: \w\cdot\x>-2r_0\|\wstar\|_2,\,\wstar\cdot\x>\gamma\norm{\wstar}_2\}$ 
is close to the probability of the event $\{\x:\wstar 
 \cdot\x\geq \gamma\|\wstar\|_2\}$. Given such a statement, the lemma follows from \Cref{assmpt:margin}.
 
Formally, we show the following claim.

\begin{claim}\label{app:clm:approx-error}
Let $r_0 \geq 1$ such that $h(r_0)\leq \lambda^2\rho/(20B)$.
Then, for all $\w\in\ball(2\norm{\w^*}_2)$, we have  
that 
$$
\E{(\w\cdot\x-\wstar\cdot\x)^2\1{\{\w\cdot\x > -2r_0\|\wstar\|_2 ,\,\wstar\cdot\x>\gamma\norm{\wstar}\}}}\geq\frac{\lambda}{2}\|\wstar-\vec w\|_2^2\;.
$$
\end{claim}

\noindent Since $h(r) \leq B/r^{4+\rho}$ and $h(r)$ is decreasing, 
such an $r_0$ exists and we can always make $r_0\geq 1$.

\begin{proof}[Proof of \Cref{app:clm:approx-error}]
    By \Cref{assmpt:margin}, we have that  $\EE_{\x\sim\D_\x}\lp[(\wstar\cdot\x)^2\1\{\wstar\cdot \x\geq \gamma\|\wstar\|_2\}\rp]\geq \lambda \|\wstar\|_2^2$.
Let $\event_2=\{\vec w\cdot \x\leq -2r_0\|\wstar\|_2,\wstar\cdot \x\geq\gamma\|\wstar\|_2\}$. We have that
\begin{align*}
     &\E{(\w\cdot\x-\wstar\cdot\x)^2\1{\{\w\cdot\x > -2r_0\|\wstar\|_2,\,\wstar\cdot\x>\gamma\norm{\wstar}\}}}\\
     =&\; \EE_{\x\sim\D_\x}\lp[(\w\cdot\x-\wstar\cdot\x)^2\1\{\wstar\cdot \x\geq \gamma\|\wstar\|_2\}\rp] - \E{(\w\cdot\x-\wstar\cdot\x)^2\1_{\event_2}(\x)}\\
     \geq&\;\lambda\|\wstar-\vec w\|_2^2-\E{(\w\cdot\x-\wstar\cdot\x)^2\1_{\event_2}(\x)}.
     \end{align*}
    By the Cauchy-Schwarz inequality, we get
\begin{align*}
    \E{(\w\cdot\x-\wstar\cdot\x)^2\1_{\event_2}(\x)}&\leq  \E{(\w\cdot\x - \wstar\cdot\x)^2 \1\{\vec w\cdot \x\leq -2r_0\|\wstar\|_2\}}\\
    &\leq \|\w-\wstar\|_2^2\max_{\vec u\in \ball(1)}\sqrt{\E{(\vec u\cdot\x)^4}}\sqrt{\pr{ \vec w\cdot \x\leq -2r_0\|\wstar\|_2\}}]}\;.
\end{align*}
     Since $\w\in\ball(2\norm{\w^*}_2)$, it holds that $\w/(2\norm{\w^*}_2)\in\ball(1)$. Thus, from the concentration properties of $\D_\x$, it follows that $\pr{ \vec w\cdot \x\leq -2r_0\|\wstar\|_2\}}\leq h(r_0)$. 
     It remains to bound $\max_{\bu\in\ball(1)} \E{(\bu\cdot\x)^4}$. It is not hard to see that for distributions satisfying the concentration property of \Cref{assmpt:concentration}, $\max_{\bu\in\ball(1)} \E{(\bu\cdot\x)^4}$ as well as $\max_{\bu\in\ball(1)} \E{(\bu\cdot\x)^2}$ are at most $5B/\rho$. The proof of the following simple fact can be found in \Cref{app:proof-of-claim:bound-H(0)}.
     \begin{fact}\label{claim:bound-H(0)}
    Let $\D_\x$ be a distribution satisfying \Cref{assmpt:concentration}. Then  
    $\max_{\bu\in\ball(1)} \E{(\bu\cdot\x)^i}\leq 5B/\rho$ for $i = 2,4$.
    \end{fact}
  
Although only the bound on the $4^\mathrm{th}$ order moment is needed here, the upper bound on $\max_{\bu\in\ball(1)} \E{(\bu\cdot\x)^2}$ will also be  used in later sections.

     Therefor, by our choice of $r_0$, we have $h(r_0)\leq \frac{\lambda^2\rho}{20B}$, hence $\max_{\bu\in\ball(1)} \E{(\bu\cdot\x)^4}h(r_0)\leq \lambda^2/4$. 
Therefore, 
\[ \E{(\w\cdot\x-\wstar\cdot\x)^2\1_{\event_2}(\x)}\leq (\lambda/2)\|\w-\wstar\|_2^2\;,
\]
completing the proof of \Cref{app:clm:approx-error}.
    \end{proof}
Combining \Cref{app:eq:final-bound} and \Cref{app:clm:approx-error}, we get:
    \begin{equation*}
        \nabla \bLsur(\w)\cdot(\w - \w^*)\gtrsim \frac{\gamma\lambda\beta}{r_0}\|\vec w-\wstar\|_2^2.
    \end{equation*}
 To complete the proof, it remains to choose $r_0$ appropriately. By \Cref{app:clm:approx-error}, we need to select $r_0$ to be sufficiently large so that $h(r_0)\leq \lambda^2\rho/(20B)$. By \Cref{assmpt:concentration}, we have that $h(r)\leq B/r^{4 + \rho}$. Thus, we can choose $r_0=5B/(\lambda\rho)$, which is at least $1$ by our assumptions. This completes the proof of the lemma.

\end{proof}

\subsection{Proof of \Cref{claim:bound-H(0)}}\label{app:proof-of-claim:bound-H(0)}
We restate and prove the following fact.

\begin{fact}
    Let $\D_\x$ be a distribution satisfying \Cref{assmpt:concentration}. Then  
    $\max_{\bu\in\ball(1)} \E{(\bu\cdot\x)^i}\leq 5B/\rho$ for $i = 2,4$.
    \end{fact}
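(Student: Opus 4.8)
The plan is a standard layer-cake (tail integral) computation, using the concentration bound of \Cref{assmpt:concentration} on the range $r \geq 1$ and the trivial bound $\prpr[\cdot] \leq 1$ elsewhere. Fix any $\bu \in \ball(1)$ and $i \in \{2,4\}$. Since $(\bu\cdot\x)^i \geq 0$, I would write
\begin{align*}
\E{(\bu\cdot\x)^i} = \int_0^\infty \prpr\!\left[|\bu\cdot\x|^i \geq t\right] \diff t = \int_0^\infty i\, s^{i-1}\, \prpr\!\left[|\bu\cdot\x| \geq s\right] \diff s,
\end{align*}
where the second equality is the change of variables $t = s^i$.

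Next I would split the integral at $s = 1$. For the part $s \in [0,1]$, bound $\prpr[|\bu\cdot\x|\geq s] \leq 1$, so that $\int_0^1 i\, s^{i-1}\diff s = 1$. For the part $s \geq 1$, apply \Cref{assmpt:concentration} together with $h(s) \leq B s^{-(4+\rho)}$ to get $\prpr[|\bu\cdot\x|\geq s] \leq B s^{-(4+\rho)}$, hence
\begin{align*}
\int_1^\infty i\, s^{i-1}\, \prpr\!\left[|\bu\cdot\x|\geq s\right]\diff s \;\leq\; i B \int_1^\infty s^{\,i-5-\rho}\diff s \;=\; \frac{iB}{4+\rho-i},
\end{align*}
where the integral converges because $i \leq 4$ and $\rho > 0$ give exponent $i - 5 - \rho < -1$. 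Since $4 + \rho - i \geq \rho$ for $i \in \{2,4\}$, this is at most $iB/\rho \leq 4B/\rho$.

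Combining the two pieces gives $\E{(\bu\cdot\x)^i} \leq 1 + 4B/\rho$, and since $B \geq 1 \geq \rho$ implies $1 \leq B/\rho$, we conclude $\E{(\bu\cdot\x)^i} \leq 5B/\rho$. Taking the maximum over $\bu \in \ball(1)$ finishes the proof. There is no real obstacle here; the only point requiring a moment's care is that the tail bound of \Cref{assmpt:concentration} is only assumed for $r \geq 1$, which is why the integral must be split at $s=1$ and the low range handled by the trivial probability bound.
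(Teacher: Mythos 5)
Your proof is correct and follows essentially the same route as the paper: the layer-cake identity, a change of variables to $\int_0^\infty i s^{i-1}\prpr[|\bu\cdot\x|\geq s]\diff s$, a split at $s=1$ with the trivial bound below and $h(s)\leq Bs^{-(4+\rho)}$ above, and the final absorption of the constant $1$ into $B/\rho$ using $B\geq 1\geq\rho$. No gaps.
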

\begin{proof}
    Let $i=2$ or $4$. By \Cref{assmpt:concentration}, for any unit vector $\bu$, we have
    \begin{align*}
        \E{(\bu\cdot\x)^i} &= \int_0^\infty \pr{(\bu\cdot\x)^i\geq t}\diff{t}\\
        &=\int_0^\infty is^{i-1}\pr{|\bu\cdot\x|\geq s}\diff{s}\\
        &\leq \int_{0}^\infty is^{i-1}\min\{1, h(s)\}\diff{s}.
    \end{align*}
    By \Cref{assmpt:concentration} we have $h(s)\leq B/s^{4+\rho}$ for some $1\geq\rho>0$ and $B\geq 1$, thus it further holds
    \begin{align*}
        \E{(\bu\cdot\x)^i} &\leq \int_{0}^1 is^{i-1}\diff{s} + \int_1^\infty is^{i-1}h(s)\diff{s}\leq 1 + B\int_1^\infty is^{i-1}\frac{1}{s^{4 + \rho}}\diff{s}\leq \frac{5B}{\rho}.
    \end{align*}
\end{proof}

\section{Full Version of \Cref{sec:fast-rate-for-sharp-objective}}\label{app:section4}

\subsection{The Landscape of Surrogate Loss}\label{app:full-version-sec-4.1}

\begin{theorem}[Landscape of Surrogate Loss]\label{app:thm:landscape-for-sur-loss}
Let $\Bar{\mu}\in(0,1]$ and $\alpha,\kappa\geq 1$. Let $\D$ be a distribution supported on $\R^d\times \R$ and let $\sigma:\R\mapsto\R$ be an $(\alpha,\beta)$-unbounded activation for some $\beta>0$. 
Furthermore, assume that the maximum eigenvalue of the matrix $\EE_{\x\sim\D_\x}[\x\x^\top]$ is $\kappa$. Further, fix $\wstar \in\mathcal{W}^*$ and suppose $\bLsur$ is $\Bar{\mu}$-sharp with respect to $\wstar$ in a subset $S_1\subseteq \R^d$. Let $S_2=\{\w:\Ltwo(\w)\geq (4\alpha\kappa/\Bar{\mu})^2\Ltwo(\wstar) \}$. Then
for any $\vec w\in S_1\cap S_2$, we have 
$$\| \nabla \Lsur(\w)\|_2\leq \alpha\sqrt\kappa\norm{\w-\w^*}_2 + \sqrt{\kappa\Ltwo(\wstar)}\;,$$
and
$$\| \nabla \Lsur(\w)\|_2\geq \frac{\Bar{\mu}}{4\alpha\sqrt{\kappa}}\sqrt{\Ltwo(\w) }\;.$$
\end{theorem}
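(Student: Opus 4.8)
The plan is to decompose $\nabla\Lsur(\w)$ into the noise-free surrogate gradient $\nabla\bLsur(\w)$ and the residual $\nabla\Lsur(\wstar)$, to feed the hypothesized $\bar{\mu}$-sharpness of $\bLsur$ on $S_1$ into this decomposition, and finally to trade the resulting bound on $\|\w-\wstar\|_2$ for one on $\sqrt{\Ltwo(\w)}$ using membership in $S_2$. Throughout I use $\nabla\Lsur(\w)=\EE_{(\x,y)\sim\D}[(\sigma(\w\cdot\x)-y)\,\x]$ and the identity $\nabla\Lsur(\w)-\nabla\Lsur(\wstar)=\nabla\bLsur(\w)$.

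Two auxiliary estimates do most of the work. First, writing $\|\nabla\Lsur(\wstar)\|_2=\sup_{\|\bu\|_2=1}\EE[(\sigma(\wstar\cdot\x)-y)(\bu\cdot\x)]$ and applying Cauchy--Schwarz together with $\EE[\x\x^\top]\preceq\kappa\vec I$ and $\EE[(\sigma(\wstar\cdot\x)-y)^2]=\Ltwo(\wstar)$ gives
\[
\|\nabla\Lsur(\wstar)\|_2\le\sqrt{\kappa\,\Ltwo(\wstar)}.
\]
Second, since $\sigma$ is $\alpha$-Lipschitz, $\EE[(\sigma(\w\cdot\x)-\sigma(\wstar\cdot\x))^2]\le\alpha^2(\w-\wstar)^\top\EE[\x\x^\top](\w-\wstar)\le\alpha^2\kappa\|\w-\wstar\|_2^2$; combining this with the reverse triangle inequality in $L^2(\D)$ applied to $\sigma(\w\cdot\x)-y$ and $\sigma(\wstar\cdot\x)-y$ yields
\[
\big|\sqrt{\Ltwo(\w)}-\sqrt{\Ltwo(\wstar)}\big|\le\alpha\sqrt{\kappa}\,\|\w-\wstar\|_2.
\]
The upper bound of the theorem is the easy direction: write $\nabla\Lsur(\w)=\nabla\bLsur(\w)+\nabla\Lsur(\wstar)$, use the triangle inequality, the first display, and a Cauchy--Schwarz/Lipschitz bound on $\|\nabla\bLsur(\w)\|_2$ in terms of $\|\w-\wstar\|_2$ and $\kappa$.

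For the lower bound --- the substantive part --- I start from $\bar{\mu}$-sharpness of $\bLsur$ on $S_1$ (in the sense of \Cref{rem:sharp}), i.e.\ $\nabla\bLsur(\w)\cdot(\w-\wstar)\ge\bar{\mu}\|\w-\wstar\|_2^2$ for $\w\in S_1$. Adding $\nabla\Lsur(\wstar)\cdot(\w-\wstar)\ge-\sqrt{\kappa\,\Ltwo(\wstar)}\,\|\w-\wstar\|_2$ (Cauchy--Schwarz and the first display), and bounding the left-hand side by $\nabla\Lsur(\w)\cdot(\w-\wstar)\le\|\nabla\Lsur(\w)\|_2\|\w-\wstar\|_2$, then dividing by $\|\w-\wstar\|_2$, gives
\[
\|\nabla\Lsur(\w)\|_2\ \ge\ \bar{\mu}\,\|\w-\wstar\|_2-\sqrt{\kappa\,\Ltwo(\wstar)}.
\]
(If $\w=\wstar$, membership in $S_2$ forces $\Ltwo(\wstar)=0$ since $(4\alpha\kappa/\bar{\mu})^2>1$, and the asserted inequality reads $\|\nabla\Lsur(\w)\|_2\ge0$, which is trivial.) Now invoke the second display to replace $\|\w-\wstar\|_2$ by $\big(\sqrt{\Ltwo(\w)}-\sqrt{\Ltwo(\wstar)}\big)/(\alpha\sqrt{\kappa})$, and use $\w\in S_2$, which says exactly $\sqrt{\Ltwo(\wstar)}\le\tfrac{\bar{\mu}}{4\alpha\kappa}\sqrt{\Ltwo(\w)}$. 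Since $\bar{\mu}\le1\le\alpha\le\alpha\kappa$ we have $\tfrac{\bar{\mu}}{4\alpha\kappa}\le\tfrac14$, so $\sqrt{\Ltwo(\w)}-\sqrt{\Ltwo(\wstar)}\ge\tfrac12\sqrt{\Ltwo(\w)}$ and also $\sqrt{\kappa\,\Ltwo(\wstar)}=\sqrt{\kappa}\,\sqrt{\Ltwo(\wstar)}\le\tfrac{\bar{\mu}}{4\alpha\sqrt{\kappa}}\sqrt{\Ltwo(\w)}$; substituting both into the last display gives
\[
\|\nabla\Lsur(\w)\|_2\ \ge\ \frac{\bar{\mu}}{2\alpha\sqrt{\kappa}}\sqrt{\Ltwo(\w)}-\frac{\bar{\mu}}{4\alpha\sqrt{\kappa}}\sqrt{\Ltwo(\w)}\ =\ \frac{\bar{\mu}}{4\alpha\sqrt{\kappa}}\sqrt{\Ltwo(\w)},
\]
as claimed.

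I expect the only delicate point to be the bookkeeping that makes the $-\sqrt{\kappa\,\Ltwo(\wstar)}$ term harmless: this is precisely where the constant $(4\alpha\kappa/\bar{\mu})^2$ in the definition of $S_2$ is calibrated, and one must check that the residual contribution, together with the loss of a factor $\tfrac12$ in passing from $\|\w-\wstar\|_2$ to $\sqrt{\Ltwo(\w)}$, still leaves at least $\tfrac{\bar{\mu}}{4\alpha\sqrt{\kappa}}\sqrt{\Ltwo(\w)}$. Everything else is Cauchy--Schwarz, the $\alpha$-Lipschitzness of $\sigma$, and the eigenvalue bound $\EE[\x\x^\top]\preceq\kappa\vec I$; the $\bar{\mu}$-sharpness of $\bLsur$ enters only once, as a black box.
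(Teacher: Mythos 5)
Your proposal is correct and follows essentially the same route as the paper: the same decomposition $\nabla\Lsur=\nabla\bLsur+\nabla\Lsur(\wstar)$, the same Cauchy--Schwarz bound $\|\nabla\Lsur(\wstar)\|_2\le\sqrt{\kappa\,\Ltwo(\wstar)}$, and the same use of sharpness of $\bLsur$ plus $\alpha$-Lipschitzness to convert between $\|\w-\wstar\|_2$ and $\sqrt{\Ltwo(\w)}$; the only cosmetic difference is that the paper first packages the residual into a $(\bar\mu/2)$-sharpness statement for $\Lsur$ on an intermediate set $S'\supseteq S_2$ (\Cref{app:prop:sharpness-to-true-loss}), whereas you carry the $\sqrt{\kappa\,\Ltwo(\wstar)}$ term explicitly and absorb it at the end via the reverse triangle inequality in $L^2(\D)$, arriving at the same constant $\bar\mu/(4\alpha\sqrt{\kappa})$. (Your careful Cauchy--Schwarz for the upper bound would actually give $\alpha\kappa\|\w-\wstar\|_2$ rather than $\alpha\sqrt{\kappa}\|\w-\wstar\|_2$ for the noise-free term, but that discrepancy is present in the paper's own derivation and is not introduced by you.)
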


If we can assume that the set $S_1$ of \Cref{app:thm:landscape-for-sur-loss} is convex and that there is no local minima in the boundary of $S_1$, then by running any convex-optimization algorithm in the feasible set $S_1$, we guarantee that we converge either to a local minimum which has zero gradient or to a point inside the set $(S_2)^c$ where the true loss is sufficiently small. The next corollary shows that this is indeed the case for a distribution that satisfies \Cref{assmpt:margin,assmpt:concentration}.

\begin{corollary}
\label{app:cor:landscape-assuptions}
Let $\D$ be a distribution supported on $\R^d\times \R$ and let $\sigma:\R\mapsto\R$ be an $(\alpha,\beta)$-unbounded activation.
     Fix $\wstar \in \mathcal W^*$ and suppose that $\D_\x$ satisfies \Cref{assmpt:margin,assmpt:concentration} with respect to $\wstar$. Furthermore, let $C>0$ be a sufficiently small absolute constant and let $\bar{\mu}=C \lambda^2\gamma \beta \rho/B$. 
     Then, for any $\eps>0$ and $\hat{\w}\in \ball(2\|\w^*\|_2)$, so that $\Lsur(\hat{\w})-\inf_{\w\in\ball(2\|\w^*\|_2)}\Lsur(\w) \leq \eps$, it holds
     \[
     \Ltwo(\hat{\w})\leq O((\alpha B/(\rho\Bar{\mu}))^2 )(\Ltwo(\wstar)+\alpha\eps)\;.
     \]
\end{corollary}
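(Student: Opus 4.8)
The plan is to follow the blueprint of the proof of \Cref{cor:landscape-assuptions-m} from the main body, but now carrying all of the parameters $\alpha,\beta,\lambda,\gamma,\rho,B$ rather than normalizing them away. Write $\kappa$ for the largest eigenvalue of $\EE_{\x\sim\D_\x}[\x\x^\top]$; by \Cref{claim:bound-H(0)} we have $\kappa\le 5B/\rho$. The case $\w^*=\vec 0$ is trivial (then $\ball(2\|\w^*\|_2)=\{\vec 0\}$ and $\hat\w=\w^*$), so assume $\|\w^*\|_2>0$. Two elementary facts are used repeatedly: (i) $\Lsur$ is convex and $\alpha\kappa$-smooth, since $\nabla^2\Lsur(\w)=\EE_{\x\sim\D_\x}[\sigma'(\w\cdot\x)\,\x\x^\top]$ and $0\le\sigma'\le\alpha$; and (ii) for every $\w\in\ball(2\|\w^*\|_2)$ we have $\Ltwo(\w)\le 2\opt+2\alpha^2\kappa\|\w-\w^*\|_2^2$, from $(a-b)^2\le 2a^2+2b^2$, the $\alpha$-Lipschitzness of $\sigma$, and the definition of $\kappa$. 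Finally, the minimizer set $\mathcal{W}_{\mathrm{sur}}^*:=\argmin_{\w\in\ball(2\|\w^*\|_2)}\Lsur(\w)$ is nonempty (compactness) and convex (convexity of $\Lsur$), and by first-order optimality each $\w'\in\mathcal{W}_{\mathrm{sur}}^*$ either has $\nabla\Lsur(\w')=\vec 0$ or lies on the sphere $\partial\ball(2\|\w^*\|_2)$. I split into these two cases.

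\textbf{Case 1: some minimizer lies on the sphere.} Let $\w'\in\mathcal{W}_{\mathrm{sur}}^*\cap\partial\ball(2\|\w^*\|_2)$. First-order optimality along the feasible direction toward $\w^*$ gives $\nabla\Lsur(\w')\cdot(\w'-\w^*)\le 0$. But $\|\w'\|_2=2\|\w^*\|_2>\|\w^*\|_2$, so $\w'\ne\w^*$; hence if $\w'$ lay in the region $S$ of \Cref{cor:true-sharpness-m}, sharpness would force $\nabla\Lsur(\w')\cdot(\w'-\w^*)\ge(\bar\mu/2)\|\w'-\w^*\|_2^2>0$, a contradiction. So $\w'\notin S$, i.e.\ $\|\w'-\w^*\|_2^2\le(20B/(\rho\bar\mu^2))\opt$; combined with $\|\w'-\w^*\|_2\ge\|\w'\|_2-\|\w^*\|_2=\|\w^*\|_2$ this gives $\|\w^*\|_2^2\le(20B/(\rho\bar\mu^2))\opt$. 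Therefore every $\w\in\ball(2\|\w^*\|_2)$, and in particular $\hat\w$, satisfies $\|\w-\w^*\|_2^2\le 9\|\w^*\|_2^2=O(B/(\rho\bar\mu^2))\opt$, and fact (ii) together with $\kappa\le 5B/\rho$ yields $\Ltwo(\hat\w)=O((\alpha B/(\rho\bar\mu))^2)\opt$, which is stronger than the claimed bound.

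\textbf{Case 2: every minimizer is interior.} Then every $\w'\in\mathcal{W}_{\mathrm{sur}}^*$ has $\nabla\Lsur(\w')=\vec 0$, hence is an unconstrained minimizer of $\Lsur$, so $\inf_{\w\in\ball(2\|\w^*\|_2)}\Lsur(\w)=\min_{\w\in\R^d}\Lsur(\w)$. By $\alpha\kappa$-smoothness and convexity, $\|\nabla\Lsur(\hat\w)\|_2^2\le 2\alpha\kappa\big(\Lsur(\hat\w)-\min_{\R^d}\Lsur\big)\le 2\alpha\kappa\,\eps$. Now either $\hat\w$ lies in the excluded ball $\{\w:\|\w-\w^*\|_2^2\le(20B/(\rho\bar\mu^2))\opt\}$, or \Cref{cor:true-sharpness-m} applies at $\hat\w$ and the \CS inequality gives $(\bar\mu/2)\|\hat\w-\w^*\|_2\le\|\nabla\Lsur(\hat\w)\|_2\le\sqrt{2\alpha\kappa\,\eps}$, hence $\|\hat\w-\w^*\|_2^2\le(8\alpha\kappa/\bar\mu^2)\eps$. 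In either subcase $\|\hat\w-\w^*\|_2^2\le(20B/(\rho\bar\mu^2))\opt+(8\alpha\kappa/\bar\mu^2)\eps$, and substituting into fact (ii) and using $\kappa\le 5B/\rho$ gives $\Ltwo(\hat\w)=O((\alpha B/(\rho\bar\mu))^2)(\opt+\alpha\eps)$. (Equivalently one may invoke \Cref{app:thm:landscape-for-sur-loss} with $S_1=\ball(2\|\w^*\|_2)$ supplied by \Cref{lem:sharpness-well-behaved}: if $\hat\w\in S_2$ the gradient lower bound together with $\|\nabla\Lsur(\hat\w)\|_2^2\le 2\alpha\kappa\eps$ gives $\Ltwo(\hat\w)=O((\alpha\kappa/\bar\mu)^2)\alpha\eps$, and if $\hat\w\notin S_2$ then $\Ltwo(\hat\w)<(4\alpha\kappa/\bar\mu)^2\opt$ by definition of $S_2$.)

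The main obstacle is Case 1: one must exclude the possibility that a constrained minimizer of $\Lsur$ sits on the sphere, which is done by pitting the first-order optimality condition against the sharpness of $\Lsur$ from \Cref{cor:true-sharpness-m}, and this works precisely because $\w^*$ is strictly interior to $\ball(2\|\w^*\|_2)$. The payoff is that such a boundary minimizer can occur only when $\opt\gtrsim\|\w^*\|_2^2$, a regime in which the entire feasible ball is automatically an $O(\opt)$-approximate solution. The remaining work is routine bookkeeping, including collapsing $\kappa$ to $O(B/\rho)$ via \Cref{claim:bound-H(0)} so that the final constant depends only on $\alpha,B,\rho,\bar\mu$.
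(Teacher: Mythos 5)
Your proof is correct and follows essentially the same route as the paper's: the same split into a boundary minimizer of $\Lsur$ on $\partial\ball(2\|\w^*\|_2)$ (ruled out via first-order optimality against the sharpness of \Cref{app:cor:true-sharpness}, forcing $\|\w^*\|_2^2\lesssim (B/(\rho\bar\mu^2))\opt$) versus interior minimizers (where smoothness bounds $\|\nabla\Lsur(\hat\w)\|_2^2$ and sharpness converts this into a bound on $\|\hat\w-\w^*\|_2$), finished by the Lipschitz bound $\Ltwo(\w)\le 2\opt+2\alpha^2\kappa\|\w-\w^*\|_2^2$. Your tracking of the smoothness constant as $\alpha\kappa$ rather than $\alpha$ is in fact slightly more careful than the paper's statement and still lands on the claimed $O((\alpha B/(\rho\bar\mu))^2)(\opt+\alpha\eps)$ bound.
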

\begin{proof}[Proof of \Cref{app:cor:landscape-assuptions}]
Denote $\mathcal{K}$ as the set of $\hat{\w}$ such that $\hat{\w}\in \ball(2\|\w^*\|_2)$ and $\Lsur(\hat{\w})-\inf_{\w\in\ball(2\|\w^*\|_2)}\Lsur(\w) \leq \eps$. First, note that as claimed in \Cref{claim:bound-H(0)}, $\EE_{\x\sim\D_\x}[\x\x^\top]\preceq (5B/\rho)\vec I$ for any unit vector $\bu$ when \Cref{assmpt:concentration} holds.

Next, observe that the set of minimizers of the loss $\Lsur$ inside the ball $\ball(2\|\w^*\|_2)$ is convex. Furthermore, the set $\ball(2\|\w^*\|_2)$ is compact. Thus, for any point $\w'\in \ball(2\|\w^*\|_2)$ that minimizes $\Lsur$ it will either hold that $\|\nabla \Lsur(\w')\|_2=0$ or $\w'\in \partial \ball(2\|\w^*\|_2)$. Let $\mathcal{W}_\mathrm{sur}^*$ be the set of minimizers of $\Lsur$.

We first show that if there exists a minimizer $\w'\in \mathcal{W}_\mathrm{sur}^*$ such that $\w'\in \partial \ball(2\|\w^*\|_2)$, then any point $\w$ inside the set $\ball(2\|\w^*\|_2)$ gets error proportional to $\Ltwo(\wstar)$. Observe for such point $\hat{\w}$, by the necessary condition of optimality, it should hold
\begin{equation}\label{eq:contradiction}
    \nabla \Lsur(\w')\cdot(\w'-\w)\leq 0\;,
\end{equation}
for any $\w \in \ball(2\|\w^*\|_2)$.
Using \Cref{app:cor:true-sharpness}, we get that either $ \nabla \Lsur(\w')\cdot(\w'-\wstar)\geq (\Bar{\mu}/2)\|\w'-\wstar\|_2^2$
or $\w'\in \{\w:\|\w-\wstar\|_2^2\leq (20B/(\bar{\mu}^2\rho))\Ltwo(\w^*)\}$. But \Cref{eq:contradiction} contradicts with $ \nabla \Lsur(\w')\cdot(\w'-\wstar)\geq (\Bar{\mu}/2)\|\w'-\wstar\|_2^2>0$ since $\w'\in \partial \ball(2\norm{\w^*}_2)$, $\norm{\w'}_2 = 2\norm{\w^*}_2$ hence $\w'\neq \w^*$. So it must be the case that $\w'\in \{\w:\|\w-\wstar\|_2^2\leq (20B/(\bar{\mu}^2\rho))\Ltwo(\w^*)\}$. Again, we have that $\w'\in \partial \ball(2\|\w^*\|_2)$, therefore $\|\w'-\wstar\|_2\geq \|\wstar\|_2$. Hence, $(20B/(\bar{\mu}^2\rho))\Ltwo(\w^*)\geq \|\wstar\|_2^2\geq (1/9)\|\w - \w^*\|_2^2$ for any $\w\in\ball(2\norm{\w^*}_2)$. Therefore, for any $\w\in \ball(2\|\w^*\|_2)$, we have
\begin{align}\label{eq:app:basic}
        \Ltwo(\w) &= \Ey{(\sigma(\w\cdot\x) - y)^2}\nonumber\\
        &\leq 2\Ltwo(\wstar) +2\E{(\sigma(\w\cdot\x) - \sigma(\w^*\cdot\x))^2} \nonumber\\
        &\leq 2\Ltwo(\wstar) + 10B\alpha^2/\rho\norm{\w - \w^*}_2^2  \\
        &\leq O(B^2\alpha^2/(\Bar{\mu}^2\rho^2))\Ltwo(\wstar)\;,\nonumber
    \end{align}
where in the second inequality we used the fact that $\E{\x\x^\top} \preceq (5B/\rho)\vec I$ and $\sigma$ is $\alpha$-Lipschitz. Since the inequality above holds for any $\w\in\ball(2\norm{\w^*}_2)$, it will also be true for $\hat{\w}\in\mathcal{K}\subseteq \ball(2\norm{\w^*}_2)$.
    
It remains to consider the case where the minimizers $\mathcal{W}_\mathrm{sur}^*$ are strictly inside the $\ball(2\|\w^*\|_2)$. Note that $\Lsur(\w)$ is $\alpha$-smooth.
Therefore, we get that for any $\hat{\w}\in \mathcal K$, it holds $\|\nabla\Lsur(\hat{\w})\|_2^2\leq 2\alpha\eps$. By applying \Cref{app:cor:true-sharpness}, we get that either $\|\hat{\w}-\wstar\|_2^2\leq (20B/(\Bar{\mu}^2\rho)) \Ltwo(\wstar)$ or that $\sqrt{2\alpha\eps}\geq (\bar{\mu}/2)\|\hat{\w}-\wstar\|_2$. Therefore we get that, $\|\hat{\w}-\wstar\|_2^2\leq (20B/(\Bar{\mu}^2\rho)) (\Ltwo(\wstar)+\alpha\eps)$. Then the result follows from \Cref{eq:app:basic}.

\end{proof}

To prove \Cref{app:thm:landscape-for-sur-loss}, we need the following proposition which shows that if the current vector $\vec w$ is sufficiently far away from the true vector $\wstar$, then the gradient of the surrogate loss has a large component in the direction of $\vec w-\wstar$; in other words, the surrogate loss is sharp.

\begin{proposition} \label{app:prop:sharpness-to-true-loss} 
Let $\D$ be a distribution supported on $\R^d\times \R$ 
and let $\sigma:\R\mapsto\R$ be an $(\alpha,\beta)$-unbounded activation.
Furthermore, assume that the maximum eigenvalue of the matrix 
$\EE_{\x\sim\D_\x}[\x\x^\top]$ is $\kappa>0$.  
Fix $\wstar \in\mathcal W^*$ and suppose $\bLsur$ is $\Bar{\mu}$-sharp for some $\Bar{\mu}>0$
 with respect to $\wstar$ in a nonempty subset $S_1\subseteq \R^d$. 
Further, let $S_2=\{\w:\|\w-\wstar\|_2^2\geq 4(\kappa/\bar{\mu}^2)\Ltwo(\wstar)\}$. Then
for any $\vec w\in S_1\cap S_2$, we have \[ \nabla \Lsur(\w)\cdot (\w-\wstar )\geq (\bar{\mu}/2)\|\w-\wstar\|_2^2 \;.\]
\end{proposition}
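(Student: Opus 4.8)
The plan is to decompose the surrogate gradient as $\nabla \Lsur(\w) = \nabla \bLsur(\w) + \nabla \Lsur(\wstar)$, exactly as in the proof of \Cref{cor:true-sharpness-m}, and then control the two pieces separately when $\w$ lies in $S_1 \cap S_2$. First I would note that $\nabla \Lsur(\w) - \nabla \Lsur(\wstar) = \EE_{\x\sim\D_\x}[(\sigma(\w\cdot\x) - \sigma(\wstar\cdot\x))\x] = \nabla \bLsur(\w)$, since the noisy-label terms in the two surrogate gradients cancel. For the ``good'' piece, I would invoke the hypothesis that $\bLsur$ is $\bar\mu$-sharp on $S_1$, which (by the sharpness-like inequality, \Cref{eq:sharpness-like} / \Cref{rem:sharp}) gives $\nabla \bLsur(\w)\cdot(\w-\wstar) \geq \bar\mu\|\w-\wstar\|_2^2$ for all $\w \in S_1$.

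For the ``bad'' piece I would bound $\nabla \Lsur(\wstar)\cdot(\w-\wstar)$ from below. Writing $\nabla \Lsur(\wstar) = \EE_{(\x,y)\sim\D}[(\sigma(\wstar\cdot\x) - y)\x]$, the \CS inequality gives
\begin{equation*}
    |\nabla \Lsur(\wstar)\cdot(\w-\wstar)| \leq \|\w-\wstar\|_2 \sqrt{\big(\w-\wstar)^\top \EE[(\sigma(\wstar\cdot\x)-y)^2 \x\x^\top](\w-\wstar)/\|\w-\wstar\|_2^2}\;,
\end{equation*}
but a cleaner route is: by \CS applied to the expectation defining the inner product, $\nabla \Lsur(\wstar)\cdot(\w-\wstar) \geq -\sqrt{\EE[(\sigma(\wstar\cdot\x)-y)^2]}\cdot\sqrt{\EE[((\w-\wstar)\cdot\x)^2]} = -\sqrt{\Ltwo(\wstar)}\cdot\sqrt{(\w-\wstar)^\top\EE[\x\x^\top](\w-\wstar)}$, and using $\EE[\x\x^\top]\preceq \kappa\vec I$ this is at least $-\sqrt{\kappa\,\Ltwo(\wstar)}\,\|\w-\wstar\|_2$.

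Combining the two bounds, for $\w \in S_1$,
\begin{equation*}
    \nabla \Lsur(\w)\cdot(\w-\wstar) \geq \bar\mu\|\w-\wstar\|_2^2 - \sqrt{\kappa\,\Ltwo(\wstar)}\,\|\w-\wstar\|_2\;.
\end{equation*}
The final step is to absorb the lower-order term using the definition of $S_2$: when $\|\w-\wstar\|_2^2 \geq 4(\kappa/\bar\mu^2)\Ltwo(\wstar)$, i.e.\ $\|\w-\wstar\|_2 \geq 2\sqrt{\kappa\,\Ltwo(\wstar)}/\bar\mu$, we have $\sqrt{\kappa\,\Ltwo(\wstar)}\,\|\w-\wstar\|_2 \leq (\bar\mu/2)\|\w-\wstar\|_2^2$, hence $\nabla \Lsur(\w)\cdot(\w-\wstar) \geq (\bar\mu/2)\|\w-\wstar\|_2^2$, as claimed. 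I do not anticipate a genuine obstacle here — the only mild care needed is in writing the \CS step on the expectation (treating $(\sigma(\wstar\cdot\x)-y)$ and $((\w-\wstar)\cdot\x)$ as the two factors) and in handling the edge case $\w = \wstar$, which is excluded by $S_2$ whenever $\Ltwo(\wstar) > 0$ and is trivial otherwise; the whole argument is a two-term decomposition plus a threshold comparison.
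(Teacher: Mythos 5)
Your proposal is correct and follows essentially the same route as the paper's own proof: the same decomposition $\nabla \Lsur(\w)=\nabla \bLsur(\w)+\nabla \Lsur(\wstar)$, the same Cauchy--Schwarz bound $\nabla \Lsur(\wstar)\cdot(\w-\wstar)\geq -\sqrt{\kappa\,\Ltwo(\wstar)}\,\|\w-\wstar\|_2$ using $\EE[\x\x^\top]\preceq\kappa\vec I$, and the same absorption of the error term via the defining inequality of $S_2$.
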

\begin{proof}[Proof of \Cref{app:prop:sharpness-to-true-loss}]
 We show that $\nabla \Lsur(\w)\cdot (\w-\wstar)$ is bounded sufficiently far away from zero.
    We decompose the gradient into the noise-free part and the noisy, i.e., $\nabla \Lsur(\w)=\nabla \bLsur(\w) +\nabla \Lsur(\wstar)$. 
  First, we bound the noisy term in the direction $\w-\wstar$, which yields
    \begin{align*}
\nabla \Lsur(\wstar)\cdot(\w-\wstar)&\geq-\mathbf{E}_{(\x,y)\sim \D}[|\sigma(\wstar\cdot\x)-y||\w\cdot\x-\wstar\cdot\x|]
\\
&\geq- 
\sqrt{\Ltwo(\wstar)}\|\w-\wstar\|_2\sqrt{\kappa} \;,   
\end{align*}
where we used the \CS inequality and that $\EE_{\x\sim\D_\x}[\x\x^\top]\preceq \kappa \vec I$. 
Next, we bound the contribution of $\nabla \bLsur(\w)$ in the direction $\w-\wstar$. Using the fact that $\bLsur(\w)$ is $\bar{\mu}$-sharp for any $\vec w\in S_1$, it holds that 
    \[\nabla\bLsur(\w)\cdot(\w-\wstar)\geq \bar{\mu}\|\w-\wstar\|_2^2\;.\]
    Combining everything together we have that 
    \begin{equation*}
        \quad \nabla\Lsur(\w)\cdot(\w-\wstar)\geq \bar{\mu}\|\w-\wstar\|_2\left(\|\w-\wstar\|_2-(\sqrt \kappa/\bar{\mu})\sqrt{\Ltwo(\wstar)}\right)\;.
    \end{equation*}
    The proof is completed by taking any $\w\in S_1\cap S_2$, where $\|\w-\wstar\|_2\geq (2\sqrt\kappa/\bar{\mu})\sqrt{\Ltwo(\wstar)}$, and therefore 
    \[
       \nabla\Lsur(\w)\cdot(\w-\wstar)\geq (\bar{\mu}/2)\|\w-\wstar\|_2^2\;.
    \]
\end{proof}

\begin{corollary}
\label{app:cor:true-sharpness}
Let $\D$ be a distribution supported on $\R^d\times \R$ and let $\sigma:\R\mapsto\R$ be an $(\alpha,\beta)$-unbounded activation.
    Suppose that $\D_\x$ satisfies \Cref{assmpt:margin,assmpt:concentration} and let $C>0$ be a sufficiently small absolute constant and let $\bar{\mu}=C \lambda^2\gamma \beta \rho/B$. Fix $\wstar \in \mathcal W^*$ and let $S=\ball(2\|\w^*\|_2)-\{\w:\|\w-\wstar\|_2^2\leq \frac{20B}{\bar{\mu}^2\rho}\Ltwo(\w^*)\}$. Then, the surrogate loss $\Lsur$ is $\bar{\mu}$-sharp in  $S$, i.e.,
\begin{equation*}
        \nabla \Lsur(\w)\cdot(\w-\wstar)\geq (\bar{\mu}/2)\norm{\w - \wstar}_2^2,\;\; \forall \w \in S.
    \end{equation*}
\end{corollary}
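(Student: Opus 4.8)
The plan is to read off \Cref{app:cor:true-sharpness} as a direct instantiation of \Cref{app:prop:sharpness-to-true-loss}: all that is needed is to verify \Cref{app:prop:sharpness-to-true-loss}'s hypotheses for the concrete distributional assumptions and to check that the set $S$ in the statement sits inside the region on which \Cref{app:prop:sharpness-to-true-loss} guarantees sharpness. Concretely, I would supply three things: (i) a set $S_1$ on which the noise-free surrogate $\bLsur$ is $\bar{\mu}$-sharp with respect to $\wstar$, (ii) a value of $\kappa = \lambda_{\max}(\E{\x\x^\top})$, and (iii) the inclusion $S \subseteq S_1 \cap S_2$ for the ``far from $\wstar$'' set $S_2$ appearing in \Cref{app:prop:sharpness-to-true-loss}.

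For (i), I would take $S_1 = \ball(2\|\w^*\|_2)$. By \Cref{lem:sharpness-well-behaved}, under \Cref{assmpt:activation,assmpt:margin,assmpt:concentration} there is an absolute constant $c_0 > 0$ with $\nabla\bLsur(\w)\cdot(\w-\wstar) \ge c_0\,(\lambda^2\gamma\beta\rho/B)\,\|\w-\wstar\|_2^2$ for every $\w \in S_1$. Choosing the absolute constant $C$ in $\bar{\mu} = C\,\lambda^2\gamma\beta\rho/B$ small enough that $C \le c_0$ (and $\bar{\mu} \le 1$), this makes $\bLsur$ $\bar{\mu}$-sharp on $S_1$ in the gradient-inner-product sense of \Cref{rem:sharp}. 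For (ii), \Cref{claim:bound-H(0)} gives $\max_{\bu\in\ball(1)}\E{(\bu\cdot\x)^2} \le 5B/\rho$, hence $\E{\x\x^\top} \preceq (5B/\rho)\,\vec I$, so I may take $\kappa = 5B/\rho$.

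For (iii), \Cref{app:prop:sharpness-to-true-loss} with these choices yields $\nabla\Lsur(\w)\cdot(\w-\wstar) \ge (\bar{\mu}/2)\|\w-\wstar\|_2^2$ for all $\w \in S_1 \cap S_2$, where $S_2 = \{\w : \|\w-\wstar\|_2^2 \ge 4(\kappa/\bar{\mu}^2)\,\Ltwo(\wstar)\}$. Substituting $\kappa = 5B/\rho$ gives $4\kappa/\bar{\mu}^2 = 20B/(\rho\bar{\mu}^2)$, so the complement of $S_2$ is exactly the closed ball $\{\w : \|\w-\wstar\|_2^2 \le (20B/(\rho\bar{\mu}^2))\,\Ltwo(\w^*)\}$ that is excised from $\ball(2\|\w^*\|_2)$ in the definition of $S$; together with $S \subseteq \ball(2\|\w^*\|_2) = S_1$ this gives $S \subseteq S_1 \cap S_2$, and the corollary follows immediately.

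There is essentially no obstacle here: the substantive work --- establishing sharpness of the noise-free surrogate, and \CS control of the noise term $\nabla\Lsur(\wstar)$ in the direction $\w-\wstar$ --- has already been done in \Cref{lem:sharpness-well-behaved} and in the proof of \Cref{app:prop:sharpness-to-true-loss}. The only thing requiring care is the constant bookkeeping: choosing $C$ uniformly small so that both the $\bLsur$-sharpness constant dominates $\bar{\mu}$ and the numerical constant $20B/(\rho\bar{\mu}^2)$ in the definition of $S$ matches $4\kappa/\bar{\mu}^2$ with $\kappa = 5B/\rho$, and checking that the closed-ball exclusion in $S$ is compatible with the ``$\ge$'' in the definition of $S_2$ (it is, since points outside a closed ball have strictly larger distance).
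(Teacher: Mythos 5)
Your proposal is correct and follows exactly the paper's route: the paper's own proof of \Cref{app:cor:true-sharpness} simply notes $\kappa \le 5B/\rho$ via \Cref{claim:bound-H(0)} and then combines \Cref{app:prop:sharpness-to-true-loss} with \Cref{lem:sharpness-well-behaved}. Your write-up just makes explicit the constant bookkeeping ($4\kappa/\bar{\mu}^2 = 20B/(\rho\bar{\mu}^2)$ and the choice of $C$) that the paper leaves implicit.
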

\begin{proof}[Proof of \Cref{app:cor:true-sharpness}]
    Note that $\max_{\bu\in\ball(1)}\E{(\bu\cdot\x)^2} = \kappa\leq 5B/\rho$ as proven in \Cref{claim:bound-H(0)}. Then combining \Cref{app:prop:sharpness-to-true-loss} and \Cref{lem:sharpness-well-behaved} we get the desired result.
\end{proof}

\begin{proof}[Proof of \Cref{app:thm:landscape-for-sur-loss}]
     Using \Cref{app:prop:sharpness-to-true-loss}, we get that for any $\vec w\in S'\cap S_1$, where $S'=\{\w:\|\w-\wstar\|_2^2\geq 4(\kappa/\Bar{\mu}^2)\Ltwo(\wstar) \}$, we have that $\nabla \Lsur(\w)\cdot (\w-\wstar )\geq (\Bar{\mu}/2)\|\w-\wstar\|_2^2$. Note that 
    \begin{align*}
        \Ltwo(\w) &= \Ey{(\sigma(\w\cdot\x) - y)^2}\\
        &\leq 2\E{(\sigma(\w\cdot\x) - \sigma(\w^*\cdot\x))^2} + 2\Ey{(\sigma(\w^*\cdot\x) - y)^2}\\
        &\leq 2\alpha^2\kappa\norm{\w - \w^*}_2^2 + 2\Ltwo(\wstar) \leq 2\alpha^2\kappa\norm{\w - \w^*}_2^2 + (1/2)\Ltwo(\w)\;,
    \end{align*}
    where we used that $\Ltwo(\w)\geq 4\Ltwo(\wstar)$. Hence, it holds $4\alpha^2\kappa\norm{\w - \w^*}_2^2 \geq \Ltwo(\w)$.
    Therefore, when $\w\in S_2$, it holds that $\norm{\w - \w^*}_2^2\geq (4\alpha\kappa/\Bar{\mu})^2\Ltwo(\w)$, hence $S_2\subseteq S'$. 

    Now observe that for any unit vector $\vec v\in \R^d$, it holds $\|\nabla \Lsur(\w)\|_2\geq \vec v\cdot \nabla \Lsur(\w)$. Therefore, for any $\w\in S_1\cap S_2\subseteq S_1\cap S'$, we have
    \begin{align*}
        \|\nabla \Lsur(\w)\|_2\geq \nabla \Lsur(\w)\cdot\bigg(\frac{\w-\wstar}{\|\w-\wstar\|_2}\bigg) \geq(\Bar{\mu}/2)\norm{\w - \w^*}_2\geq \frac{\Bar{\mu}}{4\alpha\sqrt{\kappa}}\sqrt{\Ltwo(\w)}\;.
    \end{align*}
We now show that the gradient is also bounded from above.
        By definition, we have
    \begin{align*}
        \norm{\nabla \Lsur(\w)}_2 &= \bigg\|\Ey{(\sigma(\w\cdot\x) - y)\x}\bigg\|_2\\
        &\leq \bigg\|\E{(\sigma(\w\cdot\x) - \sigma(\w^*\cdot\x))\x}\bigg\|_2 + \bigg\|\Ey{(\sigma(\w^*\cdot\x) - y)\x}\bigg\|_2\\
        &\leq \max_{\norm{\bu}_2\leq 1}\E{|\sigma(\w\cdot\x) - \sigma(\w^*\cdot\x)||\bu\cdot\x|} + \max_{\norm{\bv}_2\leq 1}\Ey{|\sigma(\w^*\cdot\x) - y||\bv\cdot\x|}
    \end{align*}
    Applying \CS to the inequality above, we further get
    \begin{align*}
        \norm{\nabla \Lsur(\w)}_2 &\leq\max_{\norm{\bu}_2\leq 1}\sqrt{\E{|\sigma(\w\cdot\x) - \sigma(\w^*\cdot\x)|^2}\E{|\bu\cdot\x|^2}}\\
        &\quad + \max_{\norm{\bv}_2\leq 1}\sqrt{\Ey{|\sigma(\w^*\cdot\x) - y|^2}\E{|\bv\cdot\x|^2}}\\
        &\leq \alpha\sqrt\kappa\norm{\w-\w^*}_2 + \sqrt{\kappa\Ltwo(\wstar)}\;,
    \end{align*}
    where in the last inequality we used the fact that $\sigma$ is $\alpha$-Lipschitz and that the maximum eigenvalue of $\EE_{\x\sim\D_\x}[\x\x^\top]$ is $\kappa$.
\end{proof}

\subsection{Fast Rates
for Surrogate Loss}\label{app:full-version-sec-4.2}

In this section, we proceed to show that when the surrogate loss is sharp, then applying batch Stochastic Gradient Descent (SGD) on the empirical surrogate loss obtains a $C$-approximate parameter $\hat{\w}$ of the $L_2^2$ loss in linear time. To be specific, consider the following iteration update
\begin{equation}\label{app:alg:sgd}
    \w^{(t+1)} = \argmin_{\w\in\ball(W)} \bigg\{\w\cdot\g^{(t)} + \frac{1}{2\eta}\norm{\w - \w^{(t)}}^2_2\bigg\},
\end{equation}
where $\eta$ is the step size and $\g^{(t)}$ is the empirical gradient of the surrogate loss:
\begin{equation}
    \g^{(t)} = \frac{1}{N}\sum_{j=1}^N (\sigma(\w^{(t)}\cdot\x(j))-y(j))\x(j).
\end{equation}

The algorithm is summarized in \Cref{app:alg:gd-3}. 

\begin{algorithm}\caption{Stochastic Gradient Descent on Surrogate Loss}
   \label{app:alg:gd-3}
\begin{algorithmic}
   \STATE {\bfseries Input:} Iterations: $T$, sample access from $\D$, batch size $N$, step size $\eta$, bound $M$.
   \STATE Initialize $\vec w^{(0)} \gets \vec 0$.
\FOR{$t=1$ {\bfseries to} $T$}
\STATE Draw $N$ samples $\{(\x(j), y(j))\}_{j=1}^N\sim\D$.
   \STATE For each $j\in[N]$, $y(j)\gets\sgn(y(j))\min(|y(j)|,M)$. \STATE Calculate 
   $$\vec g^{(t)} \gets\frac{1}{N}\sum_{j=1}^N (\sigma(\w^{(t)}\cdot\x(j))-y(j))\x(j).$$ \STATE $\vec w^{(t+1)} \gets \vec w^{(t)}-\eta \vec{g}^{(t)}$.
\ENDFOR
   \STATE {\bfseries Output:} The weight vector $\w^{(T)}$.
\end{algorithmic}
\end{algorithm}

Further, for simplicity of notation, we use $\Bar{\g}^{(t)}$ to denote the empirical gradient of the noise-free surrogate loss:
\begin{equation}
    \Bar{\g}^{(t)} = \frac{1}{N}\sum_{j=1}^N (\sigma(\w^{(t)}\cdot\x(j))-\sigma(\wstar\cdot\x(j)))\x(j).
\end{equation}

In addition, we define the following helper functions $H_2$ and $H_4$.
\begin{definition}\label{app:def:H-2&H-4}
    Let $\D_\x$ be a distribution on supported on $\R^d$ that satisfies \Cref{assmpt:concentration}  we define non-negative non-increasing functions $H_2$ and $H_4$ as follows:
    \begin{gather*}
        H_2(r) \triangleq \max_{\bu\in\ball(1)} \E{(\bu\cdot\x)^2\1\{|\bu\cdot\x|\geq r\}},\\
        H_4(r) \triangleq \max_{\bu\in\ball(1)} \E{(\bu\cdot\x)^4\1\{|\bu\cdot\x|\geq r\}}.
    \end{gather*}
\end{definition}

\begin{remark}
    In particular, when $r=0$, $H_2(0)$ and $H_4(0)$ bounds from above the second and fourth moments. Recall that in \Cref{claim:bound-H(0)}, it is proved that $H_2(0),H_4(0)\leq 5B/\rho$.
\end{remark}

Now we state our main theorem.

\begin{theorem}[Main Algorithmic Result]\label{app:thm:l2-fast-rate-thm}
    Fix $\eps>0$ and $W>0$ and suppose \Cref{assmpt:activation,assmpt:margin,assmpt:concentration} hold.
Let $\mu:=\mu(\lambda,\gamma,\beta,\rho,B)$ be a sufficiently small constant multiple of $\lambda^2\gamma\beta\rho/B$, and let $M = \alpha W H_2^{-1}\big(\frac{\eps}{4\alpha^2 W^2}\big)$.
Further, choose parameter $r_\eps$ large enough so that $H_4(r_\eps)$ is a sufficiently small constant multiple of $\eps$.
Then after
    \begin{equation*}
        T = \wt{\Theta}\left(\frac{B^2\alpha^2}{\rho^2\mu^2}\log\lp(\frac{W}{ \eps}\rp)\right)
\end{equation*}
    iterations with batch size $N = \Omega(dT(r_\eps^2 + \alpha^2M^2))$, \Cref{app:alg:gd-3} converges to a point $\w^{(T)}$ such that 
    $$\Ltwo(\vec w^{(T)}) = O\lp(\frac{B^2\alpha^2}{\rho^2 \mu^2}\opt \rp)+\eps\;,$$ 
    with probability at least $2/3$. 
    
\end{theorem}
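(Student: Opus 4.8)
\textbf{Proof plan for Theorem~\ref{app:thm:l2-fast-rate-thm}.} The plan is to track the potential $\|\w^{(t)}-\wstar\|_2^2$ and show it contracts geometrically until it hits a floor of order $(\opt+\eps)/\mu^2$. First I would invoke Lemma~\ref{lem:y-bounded-by-M-m} so that we may assume the truncated labels satisfy $|y|\le M$ while only inflating $\opt$ to $\opt+\eps$; from now on all expectations involving $y$ use the truncated labels. Then, expanding the update $\w^{(t+1)}=\w^{(t)}-\eta\g^{(t)}$ and the squared norm, write
\[
\|\w^{(t+1)}-\wstar\|_2^2 = \|\w^{(t)}-\wstar\|_2^2 - 2\eta\,\nabla\Lsur(\w^{(t)})\cdot(\w^{(t)}-\wstar) + (Q_2),
\]
where $(Q_2)$ collects the stochastic error $-2\eta(\g^{(t)}-\nabla\Lsur(\w^{(t)}))\cdot(\w^{(t)}-\wstar)+\eta^2\|\g^{(t)}\|_2^2$. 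The induction hypothesis to maintain is that $\w^{(t)}\in\ball(2\|\wstar\|_2)$; since $\w^{(0)}=\vec 0$ this holds at $t=0$, and it will follow inductively once we establish contraction, because contraction forces $\|\w^{(t+1)}-\wstar\|_2\le\|\wstar\|_2$ (so $\|\w^{(t+1)}\|_2\le 2\|\wstar\|_2$). As long as additionally $\|\w^{(t)}-\wstar\|_2^2 \ge (20B/(\rho\bar\mu^2))\opt$ (the ``far-from-optimum'' regime), Corollary~\ref{app:cor:true-sharpness} / Proposition~\ref{cor:true-sharpness-m} gives $\nabla\Lsur(\w^{(t)})\cdot(\w^{(t)}-\wstar)\ge(\bar\mu/2)\|\w^{(t)}-\wstar\|_2^2$, hence the deterministic part contracts by $(1-\eta\bar\mu)$.

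The heart of the argument is bounding $(Q_2)$, which is Claim~\ref{claim:bound-Q2}. The key point — and the main obstacle — is that a crude, fixed variance bound on the stochastic gradient would only give $\eta^2\cdot(\text{const})$ additive error and force $\eta\sim\eps$, hence $1/\eps$ iterations. Instead one needs the \emph{refined} variance bound: split $\g^{(t)}-\nabla\Lsur(\w^{(t)}) = (\bar\g^{(t)}-\nabla\bLsur(\w^{(t)})) + (\g^*-\nabla\Lsur(\wstar))$, and show via concentration (Bernstein/matrix-concentration over the $N$ samples, using the moment bounds $H_2(0),H_4(0)\le 5B/\rho$ from Fact~\ref{claim:bound-H(0)} and the tail control $H_4(r_\eps)\lesssim\eps$, together with label truncation at $M$) that with probability $1-\delta$,
\[
\|\bar\g^{(t)}-\nabla\bLsur(\w^{(t)})\|_2^2 \lesssim \mu^2\|\w^{(t)}-\wstar\|_2^2,\quad \|\bar\g^{(t)}\|_2^2\lesssim\|\w^{(t)}-\wstar\|_2^2,
\]
\[
\|\g^*-\nabla\Lsur(\wstar)\|_2^2 \lesssim \opt+\eps,\quad \|\g^*\|_2^2\lesssim \opt+\eps .
\]
This is where the choice $N=\Omega(dT(r_\eps^2+\alpha^2 M^2)/\delta)$ enters: the variance of each coordinate of the gradient estimate is controlled by the fourth-moment/tail quantities, the noise-free part scales with $\|\w-\wstar\|_2^2$ (because $\sigma(\w\cdot\x)-\sigma(\wstar\cdot\x)$ is $\alpha$-Lipschitz in $(\w-\wstar)\cdot\x$ and vanishes when $\w=\wstar$), and the noisy part $\g^*$ scales with $\opt$. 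I would prove these moment/concentration statements coordinate-wise (or via a standard net argument over the sphere), and the $1/\delta$ factor in $N$ yields the failure probability $\delta$ per step. Feeding these into the Young-inequality decomposition $(\g^{(t)}-\nabla\Lsur(\w^{(t)}))\cdot(\w^{(t)}-\wstar)\le (\mu/2)\|\w^{(t)}-\wstar\|_2^2 + (1/(2\mu))\|\g^{(t)}-\nabla\Lsur(\w^{(t)})\|_2^2$ gives exactly the bound
\[
(Q_2)\le \big((3/2)\eta\mu + 8\eta^2\big)\|\w^{(t)}-\wstar\|_2^2 + (8\eta/\mu)(\opt+\eps).
\]

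Finally I would assemble the pieces. Adding the $(1-\eta\bar\mu)$ deterministic contraction and the $(Q_2)$ bound, and choosing $\eta=\mu/32$ (so $\eta\le 1$ and the $\eta^2$ terms are absorbed), one obtains, in the far-from-optimum regime $\|\w^{(t)}-\wstar\|_2^2\ge(64/\mu^2)(\opt+\eps)$, the clean recursion $\|\w^{(t+1)}-\wstar\|_2^2\le(1-\mu^2/256)\|\w^{(t)}-\wstar\|_2^2$, which also confirms the $\ball(2\|\wstar\|_2)$ invariant. Iterating from $\|\w^{(0)}-\wstar\|_2^2\le W^2$, after $T=\wt\Theta((B^2\alpha^2/(\rho^2\mu^2))\log(W/\eps))$ steps we either have already exited the far regime — in which case $\|\w^{(T)}-\wstar\|_2^2 = O((\opt+\eps)/\mu^2)$ anyway — or we have driven the potential below the floor $(64/\mu^2)(\opt+\eps)$. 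In either case $\|\w^{(T)}-\wstar\|_2^2 = O(1/\mu^2)(\opt+\eps)$, and converting back to the $L_2^2$ loss via $\Ltwo(\w^{(T)})\le 2\Ltwo(\wstar) + 2\alpha^2\kappa\|\w^{(T)}-\wstar\|_2^2$ with $\kappa\le 5B/\rho$ (Fact~\ref{claim:bound-H(0)}) and re-absorbing the $\eps$ from truncation gives $\Ltwo(\w^{(T)}) = O(B^2\alpha^2/(\rho^2\mu^2))\opt + \eps$. A union bound over the $T$ steps, with $\delta=1/(3T)$, makes the whole argument succeed with probability at least $2/3$. The main obstacle throughout is establishing the refined, solution-dependent variance bound with high probability rather than merely in expectation — everything else is bookkeeping around the sharpness inequality already proved in Corollary~\ref{app:cor:true-sharpness}.
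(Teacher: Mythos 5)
Your proposal mirrors the paper's proof essentially step for step: the same $(Q_1)/(Q_2)$ decomposition, the same induction on the ball constraint and the far-from-optimum condition, the same refined variance split into the noise-free part (scaling with $\|\w^{(t)}-\wstar\|_2^2$) and the noisy part (scaling with $\opt+\eps$), the same Young-inequality bound on $(Q_2)$, the same step size and contraction rate, and the same union bound with $\delta=1/(3T)$. The only cosmetic difference is that you invoke Bernstein-type concentration where the paper simply applies Markov to the second moment of the gradient error (which is why the batch size carries a $1/\delta$ rather than $\log(1/\delta)$ factor), but your stated $N$ is consistent with the Markov route, so the argument is the same.
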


We now provide a brief overview of the proof. As follows from \Cref{app:cor:landscape-assuptions}, when we find a vector $\hat{\w}$ that minimizes the surrogate loss, then this $\hat{\w}$ is itself a $C$-approximate solution of \Cref{def:agnostic-learning}. However, minimizing the surrogate loss can be expensive in computational and sample complexity. \Cref{app:cor:true-sharpness} says that we can achieve strong-convexity-like rates as long as we are far away from a minimizer of the $L_2^2$ loss, i.e., when $\|\w - \w^*\|_2^2\geq O(\opt)$. Roughly speaking, we would like to show that at each iteration $t$, it holds $||\w^{(t+1)} - \w^*||_2^2\leq C||\w^{(t)} - \w^*||_2^2$ where $0<C<1$ is some constant depending on the parameters $\alpha, \beta, \mu$, $\rho$ and $B$. Then since the distance from $\w^{(t)}$ to $\w^*$ contracts fast, we are able to get the linear convergence rate of the algorithm. To this end, we prove that under a sufficiently large batch size, the empirical gradient of the surrogate loss $\g^{(t)}$ approximates $\nabla \Lsur(\w^{(t)})$ with a small error. Thus, $||\w^{(t+1)} - \w^*||_2^2$ can be written as
\begin{align*}
    ||\w^{(t+1)} - \w^*||_2^2 = ||\w^{(t)} - \w^*||_2^2 - 2\eta \nabla\Lsur(\w^{(t)})\cdot(\w^{(t)} - \w^*) + (\mathrm{error}).
\end{align*}

We then apply the sharpness property of the surrogate (\Cref{app:prop:sharpness-to-true-loss}) to the inner product $\nabla\Lsur(\w^{(t)})\cdot(\w^{(t)} - \w^*)$, which as a result leads to $||\w^{(t+1)} - \w^*||_2^2 \leq (1-2\eta\mu)||\w^{(t)} - \w^*||_2^2 + (\mathrm{error})$. By choosing the parameters $\eta$ and the batch size $N$ carefully, one can show that 
$$||\w^{(t+1)} - \w^*||_2^2 \leq (1-C)||\w^{(t)} - \w^*||_2^2 + C'(\opt + \eps),$$
indicating a fast contraction $||\w^{(t+1)} - \w^*||_2^2 \leq (1-C/2)||\w^{(t)} - \w^*||_2^2$ whenever $C'(\opt + \eps)\leq (C/2)||\w^{(t)} - \w^*||_2^2$.

To prove the theorem, we provide some supplementary lemmata. The following lemma states that we can truncate the labels $y$ to $y' \leq M$, where $M$ is a parameter determined by distribution $\D_\x$.

\begin{lemma}\label{app:lem:y-bounded-by-M}
    Define $y' = \sgn(y)\min(|y|, M)$ for $M=\alpha W H_2^{-1}(\frac{\eps}{4\alpha^2 W^2})$, then:
    \begin{equation*}
        \Ey{(\sigma(\wstar\cdot\x) - y')^2} = \opt+\eps,
    \end{equation*}
    meaning that we can consider $y'$ instead of $y$ and assume $|y|\leq M$ without loss of generality, where $H_2$ was defined in \Cref{app:def:H-2&H-4}.
\end{lemma}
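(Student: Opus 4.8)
Since $\wstar \in \cWstar$ is a minimizer of the population square loss over $\ball(W)$, we have $\opt = \Ltwo(\wstar) = \EE_{(\x,y)\sim\D}[(\sigma(\wstar\cdot\x) - y)^2]$, so the claim reduces to showing that replacing $y$ by its truncation $y'$ increases the square loss of $\wstar$ by at most $\eps$; i.e.\ $\EE[(\sigma(\wstar\cdot\x) - y')^2] - \EE[(\sigma(\wstar\cdot\x) - y)^2] \le \eps$ (we follow the paper's convention of writing ``$=\opt+\eps$'' for this upper bound). The plan is to set $v \triangleq \sigma(\wstar\cdot\x)$, observe that $y' = \sgn(y)\min(|y|,M)$ is exactly the projection of $y$ onto the interval $[-M,M]$, and split the analysis according to whether $|v|\le M$.

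On the event $\{|v|\le M\}$ the point $v$ lies in the convex set $[-M,M]$, so by the (pointwise) contractivity of projection onto a convex set we have $|v-y'|\le |v-y|$, hence $(v-y')^2\le (v-y)^2$; this event contributes nothing to the gap. On the complementary event $\{|v|>M\}$ I will use the crude bound $|v-y'|\le |v|+|y'|\le |v|+M\le 2|v|$, giving $(v-y')^2\le 4v^2$. Combining the two cases and dropping a nonnegative indicator from the ``good'' term yields $\EE[(v-y')^2]\le \opt + 4\,\EE[v^2\,\1\{|v|>M\}]$.

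The remaining step is to bound $\EE[v^2\,\1\{|v|>M\}]$ via the helper function $H_2$. Since $\sigma$ is $\alpha$-Lipschitz with $\sigma(0)=0$, we have $|v|\le \alpha|\wstar\cdot\x|$, so $\{|v|>M\}\subseteq\{|\wstar\cdot\x|>M/\alpha\}$ and $v^2\le \alpha^2(\wstar\cdot\x)^2$; writing $\wstar=\norm{\wstar}_2\,\bu$ with $\bu\in\ball(1)$ and using $\norm{\wstar}_2\le W$ gives $(\wstar\cdot\x)^2\le W^2(\bu\cdot\x)^2$ and $\{|\wstar\cdot\x|>M/\alpha\}\subseteq\{|\bu\cdot\x|>M/(\alpha W)\}$. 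Hence $\EE[v^2\,\1\{|v|>M\}]\le \alpha^2 W^2\,\EE[(\bu\cdot\x)^2\,\1\{|\bu\cdot\x|\ge M/(\alpha W)\}]\le \alpha^2 W^2\,H_2\!\big(M/(\alpha W)\big)$ by the definition of $H_2$ (the case $\wstar=\vec 0$ is trivial, since then $v=0$ and the bad event is empty). Plugging in $M=\alpha W\,H_2^{-1}\!\big(\eps/(4\alpha^2W^2)\big)$ gives $H_2(M/(\alpha W))\le \eps/(4\alpha^2W^2)$, so the extra term is at most $\eps$, as desired. I expect the argument to be essentially routine; the one genuinely non-trivial point is the second case, where $\sigma(\wstar\cdot\x)$ itself exceeds the truncation level $M$ and truncating can momentarily increase the loss — this is precisely why the bound is $\opt+\eps$ rather than simply $\opt$, and why $M$ is calibrated to the second-moment tail $H_2$ rather than to any assumption on the magnitude of the noise.
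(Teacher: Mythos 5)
Your proposal is correct and follows essentially the same route as the paper's proof: split on a ``good'' event where the truncation is a non-expansive projection fixing $\sigma(\wstar\cdot\x)$ (the paper conditions on $|\wstar\cdot\x|\le M/\alpha$, you on the slightly larger event $|\sigma(\wstar\cdot\x)|\le M$, which makes no difference), bound the bad event crudely by $4\alpha^2(\wstar\cdot\x)^2$, and control the tail via $H_2$ with the stated choice of $M$. No gaps.
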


\begin{proof}[Proof of \Cref{app:lem:y-bounded-by-M}]
    Fix $M>0$, and denote $P:\R\to\R$ the operator that projects the points of $\R$ onto the interval $[-M,M]$, i.e., $P(t)=\sgn(t)\min(|t|, M)$. To prove the aforementioned claim, we split the expectation into two events: the first event is when $|\wstar\cdot \x|\leq  (M/\alpha)$ and the second when the latter is not true. Observe that in the first case, $P(\sigma(\wstar\cdot \x))=\sigma(\wstar\cdot \x)$,
hence, using the fact that $P$ is non-expansive, we get
      \begin{align*}
        &\quad \Ey{(\sigma(\wstar\cdot\x) - P(y))^2\1\{|\wstar\cdot \x|\leq (M/\alpha)\}} \\
        &= \Ey{(P(\sigma(\wstar\cdot\x)) - P(y))^2\1\{|\wstar\cdot \x|\leq (M/\alpha)\}}\\
        &\leq  \Ey{(\sigma(\wstar\cdot\x - y)^2\1\{|\wstar\cdot \x|\leq (M/\alpha)\}}
        \\&\leq \opt\;.
    \end{align*}
    It remains to bound the error in the event that $|\wstar\cdot\x |> (M/\alpha)$. In this event $\alpha |\wstar\cdot \x|\geq |P(y)|$, and so we have
         \begin{align*}
        \Ey{(\sigma(\wstar\cdot\x) - P(y))^2\1\{|\wstar\cdot \x|> (M/\alpha)\}} &\leq  
        4\alpha^2\Ey{(\wstar\cdot\x)^2\1\{|\wstar\cdot \x|> (M/\alpha)\}}\\
        &\leq  4 \alpha^2\|\wstar\|_2^2 H_2(M/(\alpha W))\leq\eps\;,
    \end{align*}
    where in the first inequality we used the standard inequality $(a+b)^2\leq 2(a^2+b^2)$ and that $\sigma$ is $\alpha$-Lipschitz hence $|\sigma(\w^*\cdot\x)| = |\sigma(\w^*\cdot\x) - \sigma(0)|\leq \alpha|\w^*\cdot\x|$.
\end{proof}

Next, we show that the difference between the empirical gradients and the population gradients of the surrogate loss can be made small by choosing a large batch size $N$. Specifically, we have:

\begin{lemma}\label{app:lem:empirical-grad-approx-population}
Suppose $N$ samples $\{(\x{(j)}, y{(j)})\}_{j=1}^N$ are drawn from $\D$ independently and suppose \Cref{assmpt:activation,assmpt:margin,assmpt:concentration} hold. Let $\g^*$ be the empirical gradient of $\Lsur$ at $\w^*$ and let $\Bar{\g}^t$ be the empirical gradient of $\bLsur(\w^t)$, i.e.,
    \begin{gather*}
        \g^* = \frac{1}{N}\sum_{j=1}^N (\sigma(\w^*\cdot\x(j))-y(j))\x(j),\\
        \Bar{\g}^{(t)} = \frac{1}{N}\sum_{j=1}^N (\sigma(\w^{(t)}\cdot\x(j))-\sigma(\wstar\cdot\x(j)))\x(j). 
    \end{gather*}
     Moreover, let $H_4(r)$ be defined as in \Cref{app:def:H-2&H-4}. Then for a fixed positive real number $r_\eps$ satisfying $H_4(r_\eps)\lesssim \epsilon$ and $r_\eps\geq 1$, we have the following bounds holds with probability at least $1 - \delta$:
    \begin{equation}\label{app:eq:||g*-nabla L*||_2}
        \norm{\g^* - \nabla\Lsur(\wstar)}_2 \lesssim\sqrt{\frac{d(r_\eps^2\opt +  \alpha^2 M^2\epsilon)}{\delta N}},
    \end{equation}
    and similarly:
    \begin{equation}\label{app:eq:||tilde g*-nabla tilde L*||_2}
        \norm{\Bar{\g}^{(t)}-\nabla\bLsur(\w^{(t)})}_2\lesssim \sqrt{\frac{\alpha^2 d B}{\delta \rho N}} \norm{\w^{(t)} - \wstar}_2.
    \end{equation}
\end{lemma}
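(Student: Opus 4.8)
The plan is to bound each of the two quantities by controlling its expected squared norm and then applying Markov's inequality. For \eqref{app:eq:||g*-nabla L*||_2}, write $\g^* - \nabla\Lsur(\wstar) = \frac1N\sum_{j=1}^N \vec Z_j$, where $\vec Z_j = (\sigma(\wstar\cdot\x(j))-y(j))\x(j) - \nabla\Lsur(\wstar)$ are i.i.d.\ and mean zero. By independence, $\EE\|\g^*-\nabla\Lsur(\wstar)\|_2^2 = \frac1N \EE\|\vec Z_1\|_2^2 \leq \frac1N\,\EE[(\sigma(\wstar\cdot\x)-y)^2\|\x\|_2^2]$. The term $\|\x\|_2^2 = \sum_{i=1}^d (\vec e_i\cdot\x)^2$, so after expanding and using Cauchy--Schwarz coordinatewise, $\EE[(\sigma(\wstar\cdot\x)-y)^2\|\x\|_2^2] \leq d\max_{\bu\in\ball(1)}\EE[(\sigma(\wstar\cdot\x)-y)^2(\bu\cdot\x)^2]$. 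To bound this last factor I would split on whether $|\bu\cdot\x|\geq r_\eps$ or not: on the event $|\bu\cdot\x|<r_\eps$ we pull out $r_\eps^2$ and are left with $r_\eps^2\,\EE[(\sigma(\wstar\cdot\x)-y)^2] = r_\eps^2\,\opt$ (recall $|y|\le M$, so one should use $\opt+\eps$ here but this is absorbed); on the complementary event, Cauchy--Schwarz gives $\EE[(\sigma(\wstar\cdot\x)-y)^2(\bu\cdot\x)^2\1\{|\bu\cdot\x|\ge r_\eps\}] \leq \sqrt{\EE[(\sigma(\wstar\cdot\x)-y)^4]}\sqrt{\EE[(\bu\cdot\x)^4\1\{|\bu\cdot\x|\ge r_\eps\}]}$; here I would use $|\sigma(\wstar\cdot\x)-y|\leq \alpha|\wstar\cdot\x| + M \lesssim \alpha M$ (using $\|\wstar\|_2\le W$ and $|y|\le M$, so the term is bounded by $O(\alpha M)$) together with $H_4(r_\eps)\lesssim\eps$ from the hypothesis, yielding a contribution $\lesssim \alpha^2 M^2\,\eps$. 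Combining, $\EE\|\g^*-\nabla\Lsur(\wstar)\|_2^2 \lesssim \frac{d(r_\eps^2\opt + \alpha^2 M^2\eps)}{N}$, and Markov's inequality applied to the squared norm gives the claimed bound with probability $\ge 1-\delta$.

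For \eqref{app:eq:||tilde g*-nabla tilde L*||_2} the argument is analogous but cleaner, since there is no noise term. Write $\Bar\g^{(t)} - \nabla\bLsur(\w^{(t)}) = \frac1N\sum_j \vec Y_j$ with $\vec Y_j = (\sigma(\w^{(t)}\cdot\x(j))-\sigma(\wstar\cdot\x(j)))\x(j) - \nabla\bLsur(\w^{(t)})$ i.i.d.\ mean zero, so $\EE\|\Bar\g^{(t)}-\nabla\bLsur(\w^{(t)})\|_2^2 \leq \frac1N\EE[(\sigma(\w^{(t)}\cdot\x)-\sigma(\wstar\cdot\x))^2\|\x\|_2^2]$. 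Since $\sigma$ is $\alpha$-Lipschitz, $(\sigma(\w^{(t)}\cdot\x)-\sigma(\wstar\cdot\x))^2 \leq \alpha^2((\w^{(t)}-\wstar)\cdot\x)^2$, so expanding $\|\x\|_2^2$ coordinatewise as before gives the bound $\frac{\alpha^2 d}{N}\max_{\bu\in\ball(1)}\EE[((\w^{(t)}-\wstar)\cdot\x)^2(\bu\cdot\x)^2] \leq \frac{\alpha^2 d}{N}\|\w^{(t)}-\wstar\|_2^2\max_{\bu,\bv\in\ball(1)}\sqrt{\EE[(\bu\cdot\x)^4]\EE[(\bv\cdot\x)^4]}$, where the last step is Cauchy--Schwarz after normalizing $\w^{(t)}-\wstar$. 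By Fact~\ref{claim:bound-H(0)}, $\max_{\bu\in\ball(1)}\EE[(\bu\cdot\x)^4]\leq 5B/\rho$, so $\EE\|\Bar\g^{(t)}-\nabla\bLsur(\w^{(t)})\|_2^2 \lesssim \frac{\alpha^2 dB}{\rho N}\|\w^{(t)}-\wstar\|_2^2$, and Markov again yields the high-probability bound.

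The main obstacle is the noisy term in \eqref{app:eq:||g*-nabla L*||_2}: one must be careful that the residual $\sigma(\wstar\cdot\x)-y$ is \emph{not} bounded a priori, and it is precisely the label truncation (Lemma~\ref{app:lem:y-bounded-by-M}) that makes $|y|\leq M$ and hence lets us use the crude bound $|\sigma(\wstar\cdot\x)-y|\lesssim \alpha M$ on the heavy-tail event while retaining the sharper $\opt$-scaling (via $\EE[(\sigma(\wstar\cdot\x)-y)^2]=\opt+\eps$) on the bulk event $|\bu\cdot\x|<r_\eps$. Getting the two-regime split to produce exactly the $r_\eps^2\opt + \alpha^2 M^2\eps$ form --- rather than a cruder $\alpha^2M^2$ that would kill the fast rate --- is the delicate point; everything else is a routine i.i.d.\ second-moment computation plus Markov. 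One should also note that, strictly speaking, $\opt$ in the statement should be read as $\opt+\eps$ after truncation, but this does not affect the asymptotics.
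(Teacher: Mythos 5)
Your overall strategy --- bounding the expected squared norm of the centered i.i.d.\ sum and applying Markov's inequality --- is exactly the paper's, and your treatment of the second bound \eqref{app:eq:||tilde g*-nabla tilde L*||_2} is correct. The gap is in the tail term of the first bound. The pointwise estimate $|\sigma(\wstar\cdot\x)-y|\leq \alpha|\wstar\cdot\x|+M\lesssim \alpha M$ is false: truncation bounds $y$ by $M$, but $|\wstar\cdot\x|$ is unbounded under \Cref{assmpt:concentration} (only its moments are controlled), so $\sigma(\wstar\cdot\x)$ admits no pointwise bound. If you instead run the Cauchy--Schwarz step you wrote, namely
$\E{(\sigma(\wstar\cdot\x)-y)^2(\bu\cdot\x)^2\1\{|\bu\cdot\x|\geq r_\eps\}}\leq \sqrt{\E{(\sigma(\wstar\cdot\x)-y)^4}}\sqrt{H_4(r_\eps)}$,
only one of the two factors is tail-restricted, so even granting a moment bound $\E{(\sigma(\wstar\cdot\x)-y)^4}\lesssim\alpha^4M^4$ you obtain $\alpha^2M^2\sqrt{H_4(r_\eps)}\lesssim\alpha^2M^2\sqrt{\eps}$ rather than $\alpha^2M^2\eps$. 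That weaker variance bound would force $N\gtrsim \alpha^2M^2d/(\delta\sqrt{\eps})$ downstream in \Cref{app:claim:bound-Q2}, spoiling the claimed sample complexity; the lemma as stated is not established by your argument.

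The paper's fix is a second splitting. It keeps $(\sigma(\wstar\cdot\x)-y)^2\leq 2\alpha^2(\wstar\cdot\x)^2+2M^2$ inside the expectation rather than bounding $(\wstar\cdot\x)^2$ pointwise. The $M^2$ part contributes $2M^2H_2(r_\eps)\leq 2M^2H_4(r_\eps)$. For the $\alpha^2(\wstar\cdot\x)^2$ part, writing $\bu_{\wstar}=\wstar/\norm{\wstar}_2$, it splits again on whether $|\bu_{\wstar}\cdot\x|\leq r_\eps$: on that sub-event one bounds $(\bu_{\wstar}\cdot\x)^2\leq r_\eps^2$ and gets $r_\eps^2H_2(r_\eps)\leq H_4(r_\eps)$, while on the complementary sub-event both $|\bu_{\wstar}\cdot\x|$ and the coordinate exceed $r_\eps$, so Cauchy--Schwarz pairs \emph{two} tail-restricted fourth moments and yields $\sqrt{H_4(r_\eps)}\cdot\sqrt{H_4(r_\eps)}=H_4(r_\eps)$ with no square-root loss. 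Combined with $\norm{\wstar}_2\leq M$, this gives the required $\lesssim\alpha^2M^2H_4(r_\eps)\lesssim\alpha^2M^2\eps$.
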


\begin{proof}[Proof of \Cref{app:lem:empirical-grad-approx-population}]
    The proof follows from a direct application of Markov's inequality and a careful bound on the variance term using the tail-bound assumptions. To be specific, by Markov's Inequality, for any $\xi>0$ it holds:
    \begin{equation*}
        \pr{\norm{\g^*-\nabla\Lsur(\wstar)}_2\geq \xi} = \pr{\norm{\g^*-\nabla\Lsur(\wstar)}_2^2 \geq \xi^2}\leq \frac{1}{\xi^2}\Ex_{(\x,y)\sim\D}\bigg[\norm{\g^*-\nabla\Lsur(\wstar)}_2^2\bigg].
    \end{equation*}
    Now for the variance term $\Ey{||\g^*-\nabla\Lsur(\wstar)||_2^2}$, recall that each sample $\x(j)$ and $y(j)$ are i.i.d., therefore, we can bound it in the following way
    \begin{align}
        &\quad\Ey{\norm{\g^*-\nabla\Lsur(\wstar)}_2^2} \nonumber\\
        &= \Ex_{(\x,y)\sim\D}\bigg[\frac{1}{N^2}\bigg\|\sum_{j=1}^N \bigg((\sigma(\wstar\cdot\x(j)) - y(j))\x(j) - \Ex_{(\x,y)\sim\D}\big[(\sigma(\wstar\cdot\x(j)) - y(j))\x(j)\big]\bigg)\bigg\|_2^2\bigg] \nonumber\\
        &= \frac{1}{N}\Ex_{(\x,y)\sim\D}\bigg[\big\| (\sigma(\wstar\cdot\x) - y)\x - \Ey{(\sigma(\wstar\cdot\x) - y)\x} \big\|_2^2\bigg] \nonumber\\
        &\leq \frac{1}{N}\Ey{\norm{(\sigma(\wstar\cdot\x) - y)\x}_2^2}, \label{app:eq:g^*-approx-nabla-f^*}
    \end{align}
    where in the second equation we used  that for any mean-zero independent random variables $\bz_j$, we have $\EE[||\sum_{j} \bz_j||_2^2] = \sum_{j}\EE[\norm{\bz_j}_2^2]$, and in the final inequality we used  that for any random variable $X$, it holds $\EE[\norm{X - \EE[X]}_2^2]\leq \EE[\norm{X}_2^2]$.

    Next, we show that $\E{\norm{(\sigma(\wstar\cdot\x) - y)\x}_2^2}$ can be bounded  above in terms of $H_2$ and $H_4$.

    \begin{claim}\label{app:claim: bound on opt^2||x||^2}
$\Ey{\norm{(\sigma(\wstar\cdot\x) - y)\x}_2^2}\lesssim d(r_\eps^2\opt + \alpha^2 M^2 H_4(r_\eps)).$
    \end{claim}

    \begin{proof}[Proof of \Cref{app:claim: bound on opt^2||x||^2}]
        To prove the claim, note that $\norm{\x}_2^2 = \sum_{i=1}^d |\x_i|^2$, therefore by linearity of expectation it holds
        \begin{equation*}
            \Ey{\norm{(\sigma(\wstar\cdot\x) - y)\x}_2^2} = \sum_{i=1}^d \Ey{(\sigma(\wstar\cdot\x) - y)^2\x_i^2}.
        \end{equation*}
        
        Thus, the goal is to bound $\Ey{(\sigma(\wstar\cdot\x) - y)^2\x_i^2}$ effectively for each entry $i$. Deploying the intuition that the probability of $|\x_i| = |\e_i\cdot\x|$ being very large is tiny since we have $\pr{|\e_i\cdot\x|>r}\leq h(r)$ and $h(r) \leq Br^{-(4 + \rho)}$ by the \Cref{assmpt:concentration}, we fix some large $r_\eps$ and bound the expectation by looking separately at the events that $|\x_i|\leq r_\eps$ and $|\x_i|> r_\eps$, i.e.,
        \begin{equation}\label{app:eq:(sigma(w^*x)-y)^2x_i^2}
        \begin{split}
            \Ey{(\sigma(\wstar\cdot\x) - y)^2\x_i^2} 
            &= \Ey{(\sigma(\wstar\cdot\x) - y)^2\x_i^2\1\{|\x_i| \leq r_\eps\}} \\&+ \Ey{(\sigma(\wstar\cdot\x) - y)^2\x_i^2\1\{|\x_i| > r_\eps\}}.
        \end{split}
        \end{equation}
        Note when conditioned on the event $|\x_i|\leq r_\eps$ the bound follows easily as:
        \begin{equation}\label{app:eq:(sigma(w^* x)-y)^2x_i^2-x_i<=r_1}
            \Ey{(\sigma(\wstar\cdot\x) - y)^2\x_i^2\1\{|\x_i| \leq r_\eps\}}\leq r_\eps^2 \Ey{(\sigma(\wstar\cdot\x) - y)^2} = r_\eps^2\opt.
        \end{equation}

        When considering $|\x_i|> r_\eps$, notice that $\sigma$ is $\alpha$-Lipschitz and that $\sigma(0) = 0$ 
, as well as that we assumed $|y|\leq M$ due to \Cref{app:lem:y-bounded-by-M}, therefore, denoting $\bu_{\wstar} = \wstar/\norm{\wstar}_2$, it holds:
        \begin{align*}
             \Ey{(\sigma(\wstar\cdot\x) - y)^2\x_i^2\1\{|\x_i| > r_\eps\}}
             &\leq 2\Ey{((\sigma(\wstar\cdot\x))^2 + y^2)\x_i^2\1\{|\x_i| > r_\eps\}}\\
             &\leq 2\E{(\alpha^2 (\wstar\cdot\x)^2+M^2)\x_i^2\1\{|\x_i| > r_\eps\}}\\
             &\leq 2\alpha^2 \norm{\wstar}_2^2\E{(\bu_{\wstar}\cdot\x)^2\x_i^2\1\{|\x_i| > r_\eps\}} + 2M^2 H_2(r_\eps)\;,
        \end{align*}
        where in the last inequality we used \Cref{app:def:H-2&H-4}. For the first term above, note that $\bu_{\wstar}$ is also a unit vector, so by \Cref{assmpt:concentration} the probability mass of $|\bu_{\wstar}\cdot\x|> r_\eps$ is also small, thus, we can show that $\E{(\bu_{\wstar}\cdot\x)^2\x_i^2\1\{|\x_i| > r_\eps\}}$ is dominated by $r_\eps^2\E{\x_i^2\1\{|\x_i| > r_\eps\}}$, which can then be bounded above by $H_2$ and $H_4$. In detail, we split the expectation by conditioning on the events that $|\bu_{\wstar}\cdot\x|> r_\eps$ and $|\bu_{\wstar}\cdot\x|\leq r_\eps$, then noticing that $\1\{|\x_i| > r_\eps, |\bu_{\wstar}\cdot\x|\leq r_\eps\}\leq \1\{|\x_i| \geq r_\eps\}$, we get: \begin{align}\label{app:eq:bound-(u_w^* x)^2 x_i^2}
         \E{(\bu_{\wstar}\cdot\x)^2\x_i^2\1\{|\x_i|> r_\eps\}}
        & \leq \E{r_\eps^2\x_i^2\1\{|\x_i|> r_\eps, |\bu_{\wstar}\cdot\x|\leq r_\eps\}} \nonumber\\
        &\quad + \E{(\bu_{\wstar}\cdot\x)^2\x_i^2\1\{|\x_i| > r_\eps, |\bu_{\wstar}\cdot\x| > r_\eps\}} \nonumber\\
        &\leq \E{r_\eps^2\x_i^2\1\{|\x_i|> r_\eps\}} \nonumber\\
        &\quad + \sqrt{\E{(\bu_{\wstar}\cdot\x)^4 \1\{|\bu_{\wstar}\cdot\x| > r_\eps\}}\E{\x_i^4\1\{|\x_i|> r_\eps\}}} \nonumber\\
        &\leq r_\eps^2 H_2(r_\eps) + H_4(r_\eps),
    \end{align}
    where the second inequality comes from Cauchy-Schwarz and in the last inequality we applied $H_4(r_\eps)\geq \E{(\bu\cdot\x)^4 \1\{|\bu\cdot\x| \geq r_\eps\}}$ for any $\bu\in\ball(1)$ by \Cref{app:def:H-2&H-4}. Now plugging \Cref{app:eq:bound-(u_w^* x)^2 x_i^2} to the bound we get for $\Ey{\sigma(\wstar\cdot\x) - y)^2\x_i^2\1\{|\x_i| \geq r_\eps\}}$, we have:
    \begin{equation*}
        \Ey{(\sigma(\wstar\cdot\x) - y)^2 \x_i^2\1\{|\x_i| \geq r_\eps\}}\leq 2 \alpha^2 \norm{\wstar}_2^2 (r_\eps^2 H_2(r_\eps) + H_4(r_\eps)) + 2M^2H_2(r_\eps).
    \end{equation*}
    Further recall that by definition:
    $$H_4(r) = \max_{\bu\in\ball(1)}\E{(\bu\cdot\x)^4\1\{|\bu\cdot\x|\geq r\}}\geq \max_{\bu\in\ball(1)} r^2\E{(\bu\cdot\x)^2\1\{|\bu\cdot\x|\geq r\}} = r^2 H_2(r),$$ 
    hence $H_4(r)\geq H_2(r)$ when $r\geq 1$. Then applying these facts along with the fact that $\norm{\w^*}_2\leq M$ simplifies the inequality above to the following:
    \begin{equation}\label{app:eq:(sigma(w^* x)-y)^2x_i^2-x_i>r_1}
        \Ey{(\sigma(\wstar\cdot\x) - y)^2 \x_i^2\1\{|\x_i| \geq r_\eps\}}\lesssim \alpha^2 M^2 H_4(r_\eps).
    \end{equation}

        Combining \Cref{app:eq:(sigma(w^* x)-y)^2x_i^2-x_i>r_1} and \Cref{app:eq:(sigma(w^* x)-y)^2x_i^2-x_i<=r_1} with \Cref{app:eq:(sigma(w^*x)-y)^2x_i^2}, we get:
        \begin{equation*}
        \begin{split}
            \Ey{(\sigma(\wstar\cdot\x) - y)^2\norm{\x}_2^2}&\lesssim d(r_\eps^2\opt + \alpha^2 M^2 H_4(r_\eps)),
        \end{split}
        \end{equation*}
        proving the desired claim.
    \end{proof}

    Plugging \Cref{app:claim: bound on opt^2||x||^2} above back to \Cref{app:eq:g^*-approx-nabla-f^*}, we immediately get:
    \begin{equation*}
        \Ey{\norm{\g^*-\nabla\Lsur(\wstar)}_2^2}\lesssim\frac{d}{N}(r_\eps^2\opt + \alpha^2 M^2 \epsilon),
    \end{equation*}
    given that $H_4(r_\eps)\lesssim \epsilon$. Then choosing $\xi \gtrsim \sqrt{\frac{d}{\delta N}(r_\eps^2\opt + \alpha^2 M^2\epsilon)}$, we get \Cref{app:eq:||g*-nabla L*||_2}:
\begin{equation*}
        \pr{\norm{\g^*-\nabla\Lsur(\wstar)}_2\gtrsim \sqrt{\frac{d}{\delta N}(r_\eps^2 + \alpha^2 M^2)\opt}\,}\leq \delta.
    \end{equation*}

    For \Cref{app:eq:||tilde g*-nabla tilde L*||_2}, we repeat the steps when proving \Cref{app:eq:||g*-nabla L*||_2}. Using Markov inequality again, we have
    \begin{equation*}
    \begin{split}
        \pr{\norm{\Bar{\g}^{(t)}-\nabla\bLsur(\w^{(t)})}_2 \geq \zeta}  &= \pr{\norm{\Bar{\g}^{(t)}-\nabla\bLsur(\w^{(t)})}_2^2 \geq \zeta^2} \\
        &\leq \frac{1}{\zeta^2}\Ex_{\x\sim\D_\x}\bigg[\norm{\Bar{\g}^{(t)}-\nabla\bLsur(\w^{(t)})}^2_2\bigg].
    \end{split}
    \end{equation*}
    The goal is to bound the expectation of the squared norm. Notice that $(\x(j), y(j))\sim\D $ are i.i.d.\ samples, therefore, it holds:
    \begin{align*}
        &\quad \E{\norm{\Bar{\g}^{(t)}-\nabla\bLsur(\w^{(t)})}_2^2} \\
        &= \frac{1}{N^2}\Ex_{\x\sim\D_\x}\bigg[\bigg\| \sum_{j=1}^N \bigg((\sigma(\w^{(t)}\cdot\x(j)) - \sigma(\wstar\cdot\x(j)))\x(j) - \E{(\sigma(\w^{(t)}\cdot\x(j)) - \sigma(\wstar\cdot\x(j)))\x(j)} \bigg) \bigg\|_2^2\bigg] \\
        &= \frac{1}{N} \Ex_{\x\sim\D_\x}\bigg[\norm{(\sigma(\w^{(t)}\cdot\x)-\sigma(\wstar\cdot\x))\x - \E{(\sigma(\w^{(t)}\cdot\x)-\sigma(\wstar\cdot\x))\x}}_2^2\bigg],
    \end{align*}
    because for any i.i.d.\ zero-mean random variables $\bz(j)$ it holds $\EE[||\sum_{j}\bz(j)||_2^2] = \sum_{j}\EE[||\bz(j)||_2^2]$. Note that $\EE[||\bz - \EE[\bz]||_2^2]\leq \EE[||\bz||_2^2]$, therefore, we can further bound the variance of $\Bar{\g}^t-\nabla\bLsur(\w^t)$ as:
    \begin{align}\label{app:eq:variance-of-bar-g^t}
         \E{\norm{\Bar{\g}^{(t)}-\nabla\bLsur(\w^{(t)})}_2^2}&\leq \frac{1}{N}\E{(\sigma(\w^{(t)} \cdot \x) - \sigma(\wstar \cdot \x))^2\norm{\x}_2^2} \nonumber\\
         & = \frac{1}{N}\sum_{i=1}^d\E{(\sigma(\w^{(t)} \cdot \x) - \sigma(\wstar \cdot \x))^2 \x_i^2} \nonumber\\
         &\leq \frac{\alpha^2}{N}\sum_{i=1}^d\E{ (\w^{(t)} \cdot \x - \wstar \cdot \x)^2 \x_i^2},
    \end{align}
    where in the last inequality we used $|\sigma(\w^{(t)} \cdot \x) - \sigma(\wstar \cdot \x)|\leq \alpha|\w^{(t)} \cdot \x - \wstar \cdot \x|$, as $\sigma$ is $\alpha$-Lipschitz.

    It remains to bound $\E{(\w^{(t)} \cdot \x - \wstar \cdot \x)^2\x_i^2}$. Denote $\bu_{\w^{(t)}} = (\w^{(t)} - \wstar)/\norm{\w^{(t)} - \wstar}$, which is a unit vector. Abstracting $\norm{\w^t - \wstar}_2$ from the expectation then applying Cauchy-Schwarz, we get:
    \begin{align*}
        \E{(\w^{(t)} \cdot \x - \wstar \cdot \x)^2 \x_i^2} &= \norm{\w^{(t)} - \wstar}_2^2\E{(\bu_{\w^{(t)}}\cdot\x)^2 \x_i^2}\\
        & \leq \norm{\w^{(t)} - \wstar}_2^2\sqrt{\E{(\bu_{\w^{(t)}}\cdot\x)^4}\E{\x_i^4}}\\
        &\leq \norm{\w^{(t)} - \wstar}_2^2 H_4(0),
    \end{align*}
    where the last inequality comes from $H_4(0) = \max_{\bu\in\ball(1)} \E{(\bu\cdot\x)^4}$, which holds by definition. Further from \Cref{claim:bound-H(0)}, $H_4(0)\leq 5B/\rho$, thus, we get
    \begin{equation}\label{app:eq:bound-on-(w^tx - w^*x)^2x_i^2}
        \E{(\w^{(t)} \cdot \x - \wstar \cdot \x)^2\x_i^2}\leq \frac{5B}{\rho}\norm{\w^{(t)} - \wstar}_2^2.
    \end{equation}
    
    To sum up, plugging \Cref{app:eq:bound-on-(w^tx - w^*x)^2x_i^2} back to \Cref{app:eq:variance-of-bar-g^t}, we have:
    \begin{equation*}
        \E{\norm{\Bar{\g}^{(t)}-\nabla\bLsur(\w^{(t)})}_2^2}\lesssim\frac{\alpha^2 d B}{\rho N}\norm{\w^{(t)} - \wstar}_2^2.
    \end{equation*}
    Finally, choosing $\zeta$ to be a sufficiently small multiple of $\sqrt{\frac{\alpha^2 d B}{\delta \rho N}}\norm{\w^{(t)} - \wstar}_2$, \Cref{app:eq:||tilde g*-nabla tilde L*||_2} follows.

\end{proof}

\begin{corollary}\label{app:cor:empirical-grad-variance}
    Suppose $N$ samples $\{(\x{(j)}, y{(j)})\}_{j=1}^N$ are drawn from $\D$ independently and suppose \Cref{assmpt:activation,assmpt:margin,assmpt:concentration} hold. 
   Let $\g\tth$ be the empirical gradient of $\Lsur(\w\tth)$.
     Moreover, let $H_4(r)$ be defined as in \Cref{app:def:H-2&H-4}. Then for a fixed positive real number $r_\eps$ satisfying $H_4(r_\eps)\lesssim \epsilon$ and $r_\eps\geq 1$, with probability at least $1 - \delta$ it holds
    \begin{equation}\label{app:eq-variance-of-the-gradient}
        \norm{\g\tth - \nabla\Lsur(\w\tth)}_2 \lesssim \sqrt{\frac{d\alpha^2 B}{\delta\rho N}}\left(\norm{\w^{(t)} - \wstar}_2+ \sqrt{r_\eps^2\opt +  M^2\epsilon}\right)\;.
    \end{equation}

\end{corollary}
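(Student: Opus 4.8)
The plan is to obtain this corollary directly from \Cref{app:lem:empirical-grad-approx-population} via the natural split of the empirical surrogate gradient into a noise-free part plus a part anchored at $\wstar$. Recall that, by the definitions of the three empirical gradients computed from the same batch, $\g^{(t)} = \Bar{\g}^{(t)} + \g^*$, where $\Bar{\g}^{(t)}$ is the empirical gradient of the \emph{noise-free} surrogate at $\w^{(t)}$ and $\g^*$ is the empirical surrogate gradient at $\wstar$. On the population side, the only place $\w$ enters $\Lsur(\w) - \bLsur(\w)$ is through the affine term $\mathbf{E}_{(\x,y)\sim\D}[(\sigma(\wstar\cdot\x) - y)(\w\cdot\x)]$, whose gradient is $\nabla\Lsur(\wstar) = \mathbf{E}_{(\x,y)\sim\D}[(\sigma(\wstar\cdot\x) - y)\x]$; hence $\nabla\Lsur(\w^{(t)}) = \nabla\bLsur(\w^{(t)}) + \nabla\Lsur(\wstar)$.

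First I would subtract these two identities and apply the triangle inequality,
\[
\norm{\g^{(t)} - \nabla\Lsur(\w^{(t)})}_2 \;\le\; \norm{\Bar{\g}^{(t)} - \nabla\bLsur(\w^{(t)})}_2 + \norm{\g^* - \nabla\Lsur(\wstar)}_2 ,
\]
and then plug in the two estimates of \Cref{app:lem:empirical-grad-approx-population}: $\norm{\g^* - \nabla\Lsur(\wstar)}_2 \lesssim \sqrt{d\,(r_\eps^2\opt + \alpha^2 M^2\eps)/(\delta N)}$ and $\norm{\Bar{\g}^{(t)} - \nabla\bLsur(\w^{(t)})}_2 \lesssim \sqrt{\alpha^2 d B/(\delta\rho N)}\,\norm{\w^{(t)} - \wstar}_2$. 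Each holds with probability at least $1-\delta$; applying the lemma instead with failure probability $\delta/2$ in each (which only rescales the bounds by $\sqrt 2$, absorbed into $\lesssim$) and taking a union bound makes both hold simultaneously with probability at least $1-\delta$. This union bound is unproblematic because both events are measurable with respect to the same single batch, so no independence considerations are needed.

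Finally I would fold the $\g^*$-term into the target form using the parameter conventions $\alpha \ge 1$ and $B \ge 1 \ge \rho$: from $r_\eps^2\opt + \alpha^2 M^2\eps \le \alpha^2(r_\eps^2\opt + M^2\eps)$ and $1 \le B/\rho$ one gets
\[
\sqrt{\frac{d\,(r_\eps^2\opt + \alpha^2 M^2\eps)}{\delta N}} \;\le\; \sqrt{\frac{d\alpha^2 B}{\delta\rho N}}\,\sqrt{r_\eps^2\opt + M^2\eps},
\]
and combining this with the $\norm{\w^{(t)} - \wstar}_2$ term produces exactly \eqref{app:eq-variance-of-the-gradient}. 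I do not anticipate a genuine obstacle: the statement is essentially a repackaging of \Cref{app:lem:empirical-grad-approx-population}, and the only things requiring a moment of care are the rescaling/union-bound bookkeeping above and matching the precise form of the parameter-dependent prefactor.
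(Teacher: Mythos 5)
Your proposal is correct and is exactly the argument the paper intends (the corollary is stated without proof as an immediate consequence of \Cref{app:lem:empirical-grad-approx-population}): decompose $\g^{(t)}-\nabla\Lsur(\w^{(t)})$ into the noise-free and $\wstar$-anchored parts, apply the triangle inequality and the two bounds of the lemma with a union bound, and absorb constants using $\alpha\geq 1$, $B\geq 1\geq\rho$. No gaps.
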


\begin{corollary}\label{app:cor:empirical-grad-variance-2}
    Let $\D$ be a distribution in $\R^d\times \R$ and suppose \Cref{assmpt:activation,assmpt:margin,assmpt:concentration} hold. 
     Moreover, let $H_4(r)$ be defined as in \Cref{app:def:H-2&H-4}. Fix a positive real number $r_\eps$ satisfying $H_4(r_\eps)\lesssim \epsilon$ and $r_\eps\geq 1$. It holds that
    \begin{equation}\label{app:eq-variance-of-the-gradient-2}
        \Exx{\norm{\nabla\Lsur(\w\tth)}_2^2} \lesssim \frac{d\alpha^2 B}{\rho}\left(\norm{\w^{(t)} - \wstar}_2^2+ {r_\eps^2\opt +  M^2\epsilon}\right)\;.
    \end{equation}

\end{corollary}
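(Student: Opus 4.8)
The plan is to obtain this by repackaging the variance estimates already computed in the proof of \Cref{app:lem:empirical-grad-approx-population} as a full second‑moment bound. First I would observe that, conditionally on $\w^{(t)}$, the empirical gradient $\g^{(t)}$ is an unbiased estimator of $\nabla\Lsur(\w^{(t)})$ over the minibatch drawn at step $t$, so Jensen's inequality gives $\norm{\nabla\Lsur(\w^{(t)})}_2^2\le\E{\norm{\g^{(t)}}_2^2}$; hence it suffices to bound $\E{\norm{\g^{(t)}}_2^2}$ (if $\w^{(t)}$ is treated as random, taking an outer expectation only preserves the inequality). Using the decomposition $\g^{(t)}=\Bar{\g}^{(t)}+\g^*$, where $\g^*=\frac1N\sum_j(\sigma(\wstar\cdot\x(j))-y(j))\x(j)$ is the empirical gradient of $\Lsur$ at $\wstar$ and $\Bar{\g}^{(t)}$ is the noise‑free empirical gradient, I would then bound $\norm{\g^{(t)}}_2^2\le 2\norm{\Bar{\g}^{(t)}}_2^2+2\norm{\g^*}_2^2$ termwise.

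For the noise‑free term: since $\Bar{\g}^{(t)}$ is, given $\w^{(t)}$, an average of $N\ge 1$ i.i.d.\ vectors, the elementary identity $\E{\norm{\frac1N\sum_j Z_j}_2^2}=\frac1N\E{\norm{Z}_2^2}+\frac{N-1}{N}\norm{\E{Z}}_2^2\le\E{\norm{Z}_2^2}$ gives $\E{\norm{\Bar{\g}^{(t)}}_2^2}\le\E{\norm{(\sigma(\w^{(t)}\cdot\x)-\sigma(\wstar\cdot\x))\x}_2^2}$. Expanding $\norm{\x}_2^2=\sum_i\x_i^2$, using $\alpha$‑Lipschitzness of $\sigma$, and bounding each coordinate term $\E{(\w^{(t)}\cdot\x-\wstar\cdot\x)^2\x_i^2}=\norm{\w^{(t)}-\wstar}_2^2\,\E{(\bu\cdot\x)^2\x_i^2}$ (with $\bu$ a unit vector) by $\norm{\w^{(t)}-\wstar}_2^2\sqrt{\E{(\bu\cdot\x)^4}\,\E{\x_i^4}}\le(5B/\rho)\norm{\w^{(t)}-\wstar}_2^2$ via \CS and the fourth‑moment bound $\max_{\bu\in\ball(1)}\E{(\bu\cdot\x)^4}\le 5B/\rho$ of \Cref{claim:bound-H(0)} --- which is the same chain of inequalities as in the proof of \Cref{app:lem:empirical-grad-approx-population} --- yields $\E{\norm{\Bar{\g}^{(t)}}_2^2}\lesssim\frac{\alpha^2dB}{\rho}\norm{\w^{(t)}-\wstar}_2^2$.

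For the noise term the same i.i.d.\ averaging bound gives $\E{\norm{\g^*}_2^2}\le\E{\norm{(\sigma(\wstar\cdot\x)-y)\x}_2^2}$; after truncating labels so that $|y|\le M$ (justified by \Cref{app:lem:y-bounded-by-M}), the coordinate‑wise estimate established inside the proof of \Cref{app:lem:empirical-grad-approx-population} bounds the right‑hand side by $O\big(d(r_\eps^2\opt+\alpha^2M^2H_4(r_\eps))\big)$, and the choice of $r_\eps$ with $H_4(r_\eps)\lesssim\eps$ makes this $\lesssim d(r_\eps^2\opt+\alpha^2M^2\eps)$. Adding the two contributions and using the standing conventions $\alpha\ge 1$, $B\ge 1$, $\rho\le 1$ (so that $\alpha^2B/\rho\ge 1$) to absorb the bare factor $d$ into $\frac{d\alpha^2B}{\rho}$, I obtain $\E{\norm{\g^{(t)}}_2^2}\lesssim\frac{d\alpha^2B}{\rho}\big(\norm{\w^{(t)}-\wstar}_2^2+r_\eps^2\opt+M^2\eps\big)$, which is the asserted bound.

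There is no genuine obstacle here: the statement is essentially a restatement of the variance computations behind \Cref{app:lem:empirical-grad-approx-population} in the form of a full second‑moment bound. The only points requiring a little care are (i) recording that for $N\ge 1$ the second moment (not merely the variance) of an i.i.d.\ average is dominated by that of a single summand, and (ii) the coordinate‑wise split at level $r_\eps$ (separating the events $|\x_i|\le r_\eps$ and $|\x_i|>r_\eps$, with the tail controlled through \CS together with the tail bound of \Cref{assmpt:concentration}); both are already carried out in the proof of \Cref{app:lem:empirical-grad-approx-population}.
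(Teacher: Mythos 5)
Your proposal is correct and follows essentially the route the paper intends: the corollary is left as an immediate consequence of the second-moment computations inside the proof of \Cref{app:lem:empirical-grad-approx-population} (the coordinate-wise bound $\E{(\sigma(\wstar\cdot\x)-y)^2\norm{\x}_2^2}\lesssim d(r_\eps^2\opt+\alpha^2M^2H_4(r_\eps))$ and the bound $\E{(\sigma(\w^{(t)}\cdot\x)-\sigma(\wstar\cdot\x))^2\norm{\x}_2^2}\lesssim (\alpha^2 dB/\rho)\norm{\w^{(t)}-\wstar}_2^2$), combined exactly as you do via the decomposition at $\wstar$, Jensen, and the absorption of constants using $\alpha\geq 1$, $B\geq 1$, $\rho\leq 1$.
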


We further show that the norm of empirical gradients $\g^*$ and $\Bar{\g}^{(t)}$ can be bounded with respect to $\opt$, $\eps$ and $\|\w^{(t)} - \w^*\|_2$.

\begin{corollary}\label{app:cor:bound-norm-g*-gt}
    Suppose the conditions in \Cref{app:lem:empirical-grad-approx-population} are satisfied. Fix $r_\eps\geq 1$ such that $H_4(r_\eps)$ is a sufficiently small multiple of $\epsilon$. Then with probability at least $1-\delta$, we have:
    \begin{equation}
        \norm{\g^*}_2\lesssim \sqrt{(B/\rho)\opt} + \sqrt{\frac{d (r_\eps^2\opt + \alpha^2 M^2 \epsilon)}{\delta N}},
    \end{equation}
    and
    \begin{equation}
        \norm{\bar{\g}^{(t)}}_2 \lesssim  \frac{\alpha B}{\rho}\lp( 1 + \sqrt{\frac{d \rho}{\delta B N}}\rp)\norm{\w^{(t)} - \wstar}_2.
    \end{equation}
\end{corollary}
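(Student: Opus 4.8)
The plan is to control each empirical gradient through the triangle inequality, splitting it into a population gradient plus a sampling deviation, and then to invoke the two variance estimates already established in \Cref{app:lem:empirical-grad-approx-population} together with the second-moment bound $\max_{\bu\in\ball(1)}\EE_{\x\sim\D_\x}[(\bu\cdot\x)^2]\le 5B/\rho$ from \Cref{claim:bound-H(0)}. Thus the corollary is essentially a repackaging of facts already proved, recorded here for convenient reference in the proof of \Cref{app:thm:l2-fast-rate-thm}.

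First I would handle $\g^*$ by writing $\norm{\g^*}_2\le\norm{\nabla\Lsur(\wstar)}_2+\norm{\g^*-\nabla\Lsur(\wstar)}_2$. For the population term, $\nabla\Lsur(\wstar)=\EE_{(\x,y)\sim\D}[(\sigma(\wstar\cdot\x)-y)\x]$, so for any unit vector $\bu$, Cauchy-Schwarz gives $\bu\cdot\nabla\Lsur(\wstar)\le\sqrt{\EE[(\sigma(\wstar\cdot\x)-y)^2]}\,\sqrt{\EE[(\bu\cdot\x)^2]}\le\sqrt{\opt}\cdot\sqrt{5B/\rho}$ (using the label truncation of \Cref{app:lem:y-bounded-by-M} so that $\EE[(\sigma(\wstar\cdot\x)-y)^2]\le\opt+\eps$, the $\eps$ being absorbed into $\lesssim$); maximizing over $\bu$ yields $\norm{\nabla\Lsur(\wstar)}_2\lesssim\sqrt{(B/\rho)\opt}$. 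The deviation term is exactly \Cref{app:eq:||g*-nabla L*||_2}, which is $\lesssim\sqrt{d(r_\eps^2\opt+\alpha^2 M^2\eps)/(\delta N)}$ with probability at least $1-\delta$. Adding the two pieces gives the first bound.

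For $\bar{\g}^{(t)}$ I would proceed identically, using $\norm{\bar{\g}^{(t)}}_2\le\norm{\nabla\bLsur(\w^{(t)})}_2+\norm{\bar{\g}^{(t)}-\nabla\bLsur(\w^{(t)})}_2$. Since $\nabla\bLsur(\w^{(t)})=\EE_{\x\sim\D_\x}[(\sigma(\w^{(t)}\cdot\x)-\sigma(\wstar\cdot\x))\x]$, the $\alpha$-Lipschitzness of $\sigma$ and Cauchy-Schwarz give, for any unit $\bu$, $\bu\cdot\nabla\bLsur(\w^{(t)})\le\alpha\sqrt{\EE[((\w^{(t)}-\wstar)\cdot\x)^2]}\,\sqrt{\EE[(\bu\cdot\x)^2]}\le\alpha(5B/\rho)\norm{\w^{(t)}-\wstar}_2$, again by \Cref{claim:bound-H(0)}; hence $\norm{\nabla\bLsur(\w^{(t)})}_2\lesssim(\alpha B/\rho)\norm{\w^{(t)}-\wstar}_2$. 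The deviation term is \Cref{app:eq:||tilde g*-nabla tilde L*||_2}, namely $\lesssim\alpha\sqrt{dB/(\delta\rho N)}\,\norm{\w^{(t)}-\wstar}_2$, which one rewrites as $(\alpha B/\rho)\sqrt{d\rho/(\delta B N)}\,\norm{\w^{(t)}-\wstar}_2$ (the two expressions are equal after squaring); summing gives the second bound.

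The only point needing care is the probability budget: the corollary asserts both displays simultaneously, so I would apply \Cref{app:lem:empirical-grad-approx-population} with failure probability $\delta/2$ for each of \Cref{app:eq:||g*-nabla L*||_2} and \Cref{app:eq:||tilde g*-nabla tilde L*||_2} and take a union bound; since $\delta$ enters the right-hand sides only through a $1/\delta$ factor, halving it costs only an absolute constant, absorbed into $\lesssim$. There is no substantive obstacle here: the content lies entirely in \Cref{app:lem:empirical-grad-approx-population} and \Cref{claim:bound-H(0)}, and the proof is pure bookkeeping of constants.
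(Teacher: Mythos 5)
Your proposal is correct and follows essentially the same route as the paper: a triangle-inequality split into the population gradient (bounded by Cauchy--Schwarz together with the moment bound $H_2(0)\leq 5B/\rho$ from \Cref{claim:bound-H(0)}) plus the sampling deviation from \Cref{app:lem:empirical-grad-approx-population}. Your remark about splitting the failure probability as $\delta/2$ per event and union-bounding is a small point of care the paper glosses over, and it is handled correctly.
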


\begin{proof}
We first estimate the norm of $\nabla\Lsur(\wstar)$ and $\nabla\bLsur(\w^{(t)})$. For the former, applying the Cauchy-Schwarz inequality, we get:
\begin{align*}
    \norm{\nabla\Lsur(\wstar)}_2 &= \norm{\EE[(\sigma(\wstar\cdot\x) - y)\x]}_2\\
    & = \max_{\norm{\bu}_2 = 1} \EE[(\sigma(\wstar\cdot\x) - y)\bu\cdot \x]\\
    & \leq \max_{\norm{\bu}_2 = 1} \sqrt{\EE[(\sigma(\wstar\cdot\x) - y)^2] \EE[(\bu\cdot \x)^2]}\\
    & \leq \sqrt{\frac{5B}{\rho}\opt}\;,
\end{align*}
where we used that $H_2(0)\leq 5B/\rho$ from \Cref{claim:bound-H(0)}.
In addition, by \Cref{app:lem:empirical-grad-approx-population}, with probability at least $1-\delta$, we have:
\begin{equation*}
    \norm{\g^*-\nabla\Lsur(\wstar)}_2\lesssim \sqrt{\frac{d}{\delta N}(r_\eps^2\opt + \alpha^2 M^2\epsilon)},
\end{equation*}
given that $r_\eps$ is chosen large enough so that $H_4(r_\eps)\lesssim \epsilon$. Then combining with the bound of $\norm{\nabla \Lsur(\wstar)}_2$ above, it holds:
\begin{equation*}
    \norm{\g^*}_2\lesssim \sqrt{\frac{B}{\rho}\opt} + \sqrt{\frac{d (r_\eps^2\opt + \alpha^2 M^2 \epsilon)}{\delta N}}.
\end{equation*}

For the second claim, following the exact same approach and utilizing the fact that $\sigma$ is $\alpha$-Lipschitz continuous again, we have:
\begin{align*}
    \norm{\nabla\bLsur(\w^{(t)})}_2 &= \norm{\E{(\sigma(\w^{(t)}\cdot\x) - \sigma(\wstar\cdot\x))\x}}_2\\
    &= \max_{\norm{\bu}_2 = 1}\E{|\sigma(\w^{(t)}\cdot\x) -\sigma(\wstar\cdot\x)|\bu\cdot \x}\\
    &\leq \alpha\max_{\norm{\bu}_2 = 1} \E{|(\w^{(t)} - \wstar)\cdot \x|\bu\cdot \x}.
\end{align*}
Applying Cauchy-Schwarz inequality, we have
\begin{equation*}
    \norm{\nabla\bLsur(\w^{(t)})}_2\leq \alpha\max_{\norm{\bu}_2=1}\sqrt{\E{((\w^{(t)} - \wstar)\cdot \x)^2}\E{(\bu\cdot\x)^2}} \leq \frac{5\alpha B}{\rho}\norm{\w^{(t)} - \wstar}_2.
\end{equation*}

Then combining with \Cref{app:eq:||tilde g*-nabla tilde L*||_2}, we get the desired claim:
\begin{equation*}
    \norm{\bar{\g}^{(t)}}_2 \lesssim \frac{\alpha B}{\rho}\lp( 1 + \sqrt{\frac{d \rho}{\delta B N}}\rp)\norm{\w^{(t)} - \wstar}_2.
\end{equation*}

\end{proof}

Finally, we can turn to the proof of \Cref{app:thm:l2-fast-rate-thm}.
\begin{proof}[Proof of \Cref{app:thm:l2-fast-rate-thm}]

Recall that for a vector $\hat{\w}$, we have
   \begin{align*}
       \Ltwo(\hat{\w}) = \Ey{(\sigma(\hat{\w}\cdot\x) - y)^2}
        &\leq 2\Ltwo(\wstar) +2\E{(\sigma(\hat{\w}\cdot\x) - \sigma(\w^*\cdot\x))^2} \\
        &\leq 2\Ltwo(\wstar) + 10B\alpha^2/\rho\norm{\hat{\w} - \w^*}_2^2,
   \end{align*}
   where in the last inequality we used the fact that $\sigma$ is $\alpha$-Lipschitz and $\EE_{\x\sim\D_\x}[\x\x^\top]\preceq (5B/\rho)I$ according to \Cref{claim:bound-H(0)}. Thus when the algorithm generates some $\hat{\w}$ such that $\|\hat{\w} - \w^*\|_2^2\leq \eps'$, it holds
   \begin{equation}\label{eq:app:basic-O(opt)}
        \Ltwo(\vec w^{(T)})  \leq 2\opt + (10B\alpha^2/\rho)\eps'
\end{equation}
    yielding a $C$-approximate solution to the \Cref{def:agnostic-learning}.
    Therefore, our ultimate goal is to minimize $\|\w - \w^*\|_2$ efficiently.
To this aim, we study the difference of $\|\w^{(t+1)}-\wstar\|_2^2$ and $\|\w^{(t)}-\wstar\|_2^2$. 
     We remind the reader that for convenience of notation, we denote the empirical gradients as the following
     \begin{gather*}
         \g^{(t)} = \frac{1}{N}\sum_{j=1}^N (\sigma(\w^{(t)} \cdot \x(j)) - y(j))\x(j),\\
         \g^* = \frac{1}{N}\sum_{j=1}^N (\sigma(\w^* \cdot \x(j)) - y(j))\x(j).
     \end{gather*}
     Moreover, we denote the ``noise-free'' empirical gradient by $\Bar{\g}^{(t)}$, i.e.,
     \begin{equation*}
         \Bar{\g}^{(t)} = \g^{(t)} - \g^* = \frac{1}{N}\sum_{j=1}^N (\sigma(\w^{(t)} \cdot \x(j)) - \sigma(\w^* \cdot \x(j)))\x(j) .
     \end{equation*}
     Plugging in the iteration scheme $\w^{(t+1)} = \w^{(t)} - \eta \g^{(t)}$ while expanding the squared norm, we get
\begin{align*}
        \norm{\w^{(t+1)}-\wstar}^2_2 &= \norm{\w^{(t)}-\wstar}^2_2 - 2\eta\g^{(t)}\cdot(\w^{(t)}-\wstar) + \eta^2 \norm{\g^{(t)}}_2^2\\
        & \leq \underbrace{\norm{\w^{(t)}-\wstar}^2_2 - 2\eta\nabla \Lsur(\w\tth)\cdot(\w^{(t)}-\wstar)}_{Q_1} \\&\quad \underbrace{-2\eta(\g^{(t)}-\nabla \Lsur(\w\tth))\cdot(\w^{(t)}-\wstar)+\eta^2\norm{\g^{(t)}}^2_2}_{Q_2}\;.
    \end{align*}
Observe that we decomposed the right-hand side into two parts, the true contribution of the gradient $(Q_1)$ and the estimation error $(Q_2)$.

Note that in order to utilize the sharpness property of surrogate loss at the point $\w^{(t)}$, the conditions
    \begin{gather}
        \w^{(t)}\in\ball(2||\w^*||_2)\,\text{ and }\nonumber\\
        \,\w^{(t)}\in\{\w:\|\w^{(t)} - \w^*\|_2^2\geq 20B/(\Bar{\mu}^2\rho)\opt\}\label{app:eq:condition-w^t-in-ball(2||w*||_2)}
    \end{gather}
    need to be satisfied. For the first condition, recall that we initialized $\vec w^{(0)}=\vec 0$, hence \Cref{app:eq:condition-w^t-in-ball(2||w*||_2)} is valid for $t=0$. By induction rule, it suffices to show that assuming $\w^{(t)}\in\ball(2||\w^*||_2)$ holds, we have $||\w^{(t+1)} - \w^*||_2\leq (1-C)||\w^{(t)} - \w^*||_2$ for some constant $0<C<1$. Thus, we assume temporarily \Cref{app:eq:condition-w^t-in-ball(2||w*||_2)} is true at iteration $t$, and we will show in the remainder of the proof that $||\w^{(t+1)} - \w^*||_2\leq  (1-C)||\w^{(t)} - \w^*||_2$ until we arrived at some final iteration $T$. Then by induction, the first part of \Cref{app:eq:condition-w^t-in-ball(2||w*||_2)} is satisfied at each step $t\leq T$. For the second condition, note that if it is violated at some iteration $T$, then $\|\w^{(T)} - \w^*\|_2\leq O(\opt)$ implying that this would be the solution we are looking for and the algorithm could be terminated at $T$. Therefore, whenever $\|\w^{(t)}-\w^*\|_2$ is far away from $\opt$, the prerequisites of \Cref{app:prop:sharpness-to-true-loss} are satisfied and the sharpness property of $\Lsur$ is allowed to use.

    Now for the first term $(Q_1)$, using the fact that $\Lsur(\w^{(t)})$ is $\mu(\gamma,\lambda,\beta,\rho,B)$-sharp according to \Cref{app:cor:true-sharpness},
we immediately get a sufficient decrease at each iteration: $||\w^{(t+1)} - \w^*||_2^2\leq (1 - C)||\w^{(t)} - \w^*||_2^2$. Namely, denote $\mu(\gamma,\lambda,\beta,\rho,B)$ as $\mu$ for simplicity, applying \Cref{app:cor:true-sharpness} we have
    \begin{align*}
        (Q_1) &= \norm{\w^{(t)}-\wstar}^2_2 - 2\eta\nabla\Lsur(\w^{(t)})\cdot(\w^{(t)}-\wstar) \leq (1 - 2\eta \mu)\norm{\w^{(t)} - \w^*}_2^2\;,
    \end{align*}
    where $\mu = 1/2\Bar{\mu}$, and $\Bar{\mu} = C\lambda^2\gamma\beta\rho/B$ for some sufficiently small constant $C$.

    Now it suffices to show that $(Q_2)$ can be bounded above by $C'||\w^{(t)} - \w^*||_2^2$, where $C'$ is a parameter depending on $\eta$ and $\mu$ that can be made comparatively small. Formally, we show the following claim. 
    \begin{claim}\label{app:claim:bound-Q2}
      Suppose $\eta\leq 1$. Fix $r_\epsilon\geq 1$ such that $H_4(r_\epsilon)$ is a sufficiently small multiple of $\eps$. 
Choosing $N$ to be a sufficiently large constant multiple of $\frac{d}{\delta}(r_\eps^2 + \alpha^2 M^2)$, then we have with probability at least $1-\delta$
        \begin{equation*}
            (Q_2) \leq \lp(\frac{3}{2}\eta \mu + \frac{8\eta^2\alpha^2B^2}{\rho^2}\rp)\norm{\w^{(t)} - \wstar}_2^2 + \frac{4\eta}{\mu}\bigg(\frac{2B}{\rho}\opt + \epsilon\bigg)\;.
        \end{equation*}
    \end{claim}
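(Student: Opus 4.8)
The plan is to bound each piece of $(Q_2)$ separately, calibrating a Young-type inequality to the sharpness parameter $\mu$ and then feeding in the variance bounds of \Cref{app:lem:empirical-grad-approx-population} and the gradient-norm bounds of \Cref{app:cor:bound-norm-g*-gt}. Throughout I would use the decompositions $\g^{(t)} = \Bar{\g}^{(t)} + \g^*$ and $\nabla\Lsur(\w^{(t)}) = \nabla\bLsur(\w^{(t)}) + \nabla\Lsur(\wstar)$, the second of which holds because $\bLsur$ and $\Lsur$ differ by a term independent of $\w$ whose gradient is $\nabla\Lsur(\wstar) = \E{(\sigma(\wstar\cdot\x)-y)\x}$.

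First I would split the inner product appearing in $(Q_2)$ by $\bx\cdot\by \le (\mu/2)\norm{\bx}_2^2 + (1/(2\mu))\norm{\by}_2^2$ applied with $\bx = \g^{(t)}-\nabla\Lsur(\w^{(t)})$ and $\by = \w^{(t)}-\wstar$, together with $\norm{\g^{(t)}}_2^2 \le 2\norm{\Bar{\g}^{(t)}}_2^2 + 2\norm{\g^*}_2^2$, which yields
\[
(Q_2) \le \eta\mu\norm{\w^{(t)}-\wstar}_2^2 + \frac{\eta}{\mu}\norm{\g^{(t)}-\nabla\Lsur(\w^{(t)})}_2^2 + 2\eta^2\norm{\Bar{\g}^{(t)}}_2^2 + 2\eta^2\norm{\g^*}_2^2 .
\]
I would then split the middle term once more, via $\norm{\g^{(t)}-\nabla\Lsur(\w^{(t)})}_2^2 \le 2\norm{\Bar{\g}^{(t)}-\nabla\bLsur(\w^{(t)})}_2^2 + 2\norm{\g^*-\nabla\Lsur(\wstar)}_2^2$, leaving four quantities to estimate.

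Next I would plug in the concentration estimates, taking $N$ to be a sufficiently large constant multiple of $\frac{d}{\delta}(r_\eps^2+\alpha^2M^2)$, large enough in particular that $\frac{\alpha^2 dB}{\delta\rho N}$ is a small enough constant multiple of $\mu^2$. By \eqref{app:eq:||tilde g*-nabla tilde L*||_2} this gives $\norm{\Bar{\g}^{(t)}-\nabla\bLsur(\w^{(t)})}_2^2\le(\mu^2/4)\norm{\w^{(t)}-\wstar}_2^2$, and combined with $\norm{\nabla\bLsur(\w^{(t)})}_2\le(5\alpha B/\rho)\norm{\w^{(t)}-\wstar}_2$ (established in the proof of \Cref{app:cor:bound-norm-g*-gt}, using $\mu\le1\le\alpha B/\rho$) it gives $\norm{\Bar{\g}^{(t)}}_2^2\le(4\alpha^2 B^2/\rho^2)\norm{\w^{(t)}-\wstar}_2^2$. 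By \eqref{app:eq:||g*-nabla L*||_2} and $H_4(r_\eps)\lesssim\eps$, $\norm{\g^*-\nabla\Lsur(\wstar)}_2^2\lesssim\frac{d(r_\eps^2\opt+\alpha^2M^2\eps)}{\delta N}$, which for $N$ as above is an arbitrarily small multiple of $\opt+\eps$ (since $\frac{r_\eps^2\opt+\alpha^2M^2\eps}{r_\eps^2+\alpha^2M^2}\le\opt+\eps$), and with $\norm{\nabla\Lsur(\wstar)}_2^2\le(5B/\rho)\opt$ this gives $\norm{\g^*}_2^2\lesssim\frac{B}{\rho}\opt+\eps$; each such estimate holds with probability at least $1-\delta/2$, so a union bound gives $1-\delta$ overall. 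Substituting, the coefficient of $\norm{\w^{(t)}-\wstar}_2^2$ becomes at most $\eta\mu+\frac{\eta\mu}{2}+\frac{8\eta^2\alpha^2B^2}{\rho^2}=\frac{3}{2}\eta\mu+\frac{8\eta^2\alpha^2B^2}{\rho^2}$, while the remaining $\opt$/$\eps$ contributions $\frac{2\eta}{\mu}\norm{\g^*-\nabla\Lsur(\wstar)}_2^2+2\eta^2\norm{\g^*}_2^2$, using $\eta\le1$ and $\mu\le1$ (so $2\eta^2\le 2\eta/\mu$), amount to a constant multiple of $\frac{\eta}{\mu}(\frac{B}{\rho}\opt+\eps)$, which with the hidden constants chosen is at most $\frac{4\eta}{\mu}(\frac{2B}{\rho}\opt+\eps)$.

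I do not expect a genuine obstacle here: the substantive work is already done in \Cref{app:lem:empirical-grad-approx-population} and \Cref{app:cor:bound-norm-g*-gt}, whose point is precisely that the stochastic-gradient variance scales with $\norm{\w^{(t)}-\wstar}_2^2+\opt+\eps$ rather than being an absolute constant, so a batch size that is only a (problem-parameter-dependent) constant multiple of $\frac{d}{\delta}(r_\eps^2+\alpha^2M^2)$ already drives $\norm{\Bar{\g}^{(t)}-\nabla\bLsur(\w^{(t)})}_2$ below the $O(\mu)\norm{\w^{(t)}-\wstar}_2$ level that keeps pace with the sharpness-driven contraction. The only thing that needs care is the bookkeeping of which part of each residual is charged to $\opt$ versus $\eps$, which is exactly what the choices $M=\alpha W H_2^{-1}(\eps/(4\alpha^2W^2))$ and $H_4(r_\eps)\lesssim\eps$ are designed to handle.
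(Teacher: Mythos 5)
Your proposal is correct and follows essentially the same route as the paper's proof: the same Young-type split calibrated to $\mu$, the same decomposition $\g^{(t)}=\Bar{\g}^{(t)}+\g^*$ with $\norm{\g^{(t)}-\nabla\Lsur(\w^{(t)})}_2^2\le 2\norm{\Bar{\g}^{(t)}-\nabla\bLsur(\w^{(t)})}_2^2+2\norm{\g^*-\nabla\Lsur(\wstar)}_2^2$, and the same invocation of \Cref{app:lem:empirical-grad-approx-population} and \Cref{app:cor:bound-norm-g*-gt} (including the same implicit dependence of the batch-size "constant" on $B,\alpha,\rho,\mu$, which the paper makes explicit as $N\gtrsim B\alpha^2 d/(\rho\mu^2\delta)$).
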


     \begin{proof}[Proof of \Cref{app:claim:bound-Q2}]
        Observe that by applying the Arithmetic-Geometric Mean inequality and Cauchy-Schwarz inequality, we get $\bx\cdot\by\leq (a/2)\norm{\bx}_2^2 + (1/2a)\norm{\by}_2^2$ for any vector $\bx$ and $\by$, 
thus applying this inequality to the inner product $(\g^{(t)}-\nabla \Lsur(\w\tth))\cdot(\w^{(t)}-\wstar)$ with coefficient $a = \mu$, we get
\begin{align}
            (Q_2)&=-2\eta(\g^{(t)}-\nabla \Lsur(\w\tth))\cdot(\w^{(t)}-\wstar)+2\eta^2\norm{{\g}^{(t)}}^2_2\nonumber
            \\&\leq -2\eta(\g^{(t)}-\nabla \Lsur(\w\tth))\cdot(\w^{(t)}-\wstar)+2\eta^2\norm{\bar{\g}^{(t)}}^2_2 + 2\eta^2\norm{\g^*}^2_2\nonumber\\
            &\leq \frac{\eta}{\mu}\norm{\g^{(t)}-\nabla \Lsur(\w\tth)}_2^2 + \eta\mu\norm{\w^{(t)} - \wstar}_2^2+2\eta^2\norm{\bar{\g}^{(t)}}^2_2 + 2\eta^2\norm{\g^*}^2_2 \;,\nonumber  
        \end{align}
        where $\mu$ is the sharpness parameter and we used the definition that $\Bar{\g}\tth = \g\tth - \g^*$ in the first inequality. Note that 
        \begin{align*}
            \norm{\g^{(t)}-\nabla \Lsur(\w\tth)}_2^2& = \norm{\g^{(t)}-\g^* - (\nabla \Lsur(\w\tth) - \nabla \Lsur(\w^*)) + \g^* - \nabla \Lsur(\w^*)}_2^2\\
            &\leq 2\norm{\bar{\g}^{(t)} - \nabla\bLsur(\w^{(t)})}_2^2+2\norm{\g^{*} - \nabla\Lsur(\wstar)}_2^2,
        \end{align*}
        since we have $\bLsur(\w\tth) = \Lsur(\w\tth) - \Lsur(\w^*)$. Thus, it holds
        \begin{equation}\label{app:eq:main-thm-claim-I2}
            \begin{split}
                (Q_2)\leq& \frac{2\eta}{\mu}\norm{\bar{\g}^{(t)} - \nabla\bLsur(\w^{(t)})}_2^2 + \frac{2\eta}{\mu}\norm{\g^{*} - \nabla\Lsur(\wstar)}_2^2 + \eta\mu\norm{\w^{(t)} - \wstar}_2^2\\
                &+2\eta^2\norm{\bar{\g}^{(t)}}^2_2 + 2\eta^2\norm{\g^*}^2_2
            \end{split}
        \end{equation}

        Furthermore, recall that as shown in \Cref{app:lem:empirical-grad-approx-population} and \Cref{app:cor:bound-norm-g*-gt}, $\|\bar{\g}^{(t)} - \nabla\bLsur(\w^{(t)})\|_2^2$, $\|\nabla\bLsur(\w^{(t)})\|_2^2$, $||\g^*||_2^2$ and $\|\bar{\g}^{(t)}\|^2_2$
        can be made small by increasing the batch size $N$. In particular, when $r_\eps$ satisfies $H_4(r_\eps)\lesssim \epsilon$, we have proved that with probability at least $1 - \delta$, it holds 
        $$
            \norm{\bar{\g}^{(t)} - \nabla\bLsur(\w^{(t)})}_2^2\lesssim \frac{\alpha^2 d B}{\delta \rho N}\norm{\w^{(t)} - \w^*}_2^2, \norm{\bar{\g}^{(t)}}_2^2 \lesssim \frac{\alpha^2B^2}{\rho^2} \lp(1 + \sqrt{\frac{d \rho}{\delta B N}}\rp)^2\norm{\w^{(t)} - \wstar}_2^2,$$ 
            $$\norm{\g^*-\nabla\Lsur(\wstar)}_2^2\lesssim \frac{d}{\delta N}(r_\eps^2\opt +  \alpha^2 M^2\epsilon),$$
        and
        \begin{equation*}
            \norm{\g^*}_2^2\lesssim \lp(\sqrt{(B/\rho)\opt} + \sqrt{\frac{d (r_\eps^2\opt + \alpha^2 M^2 \epsilon)}{\delta N}}\rp)^2\lesssim  \frac{B}{\rho}\lp(1 + \frac{r_\eps^2 d}{\delta N}\rp)\opt + \frac{\alpha^2 M^2 d}{\delta N}\epsilon\;.
        \end{equation*}
        
        Therefore, choosing 
        \begin{equation}\label{app:eq:lower-bound-on-batch-size-N}
            N\geq C \max\bigg\{\frac{d r_\eps^2}{\delta}, \frac{\alpha^2 M^2 d}{\delta}, \frac{ B\alpha^2 d}{\rho\mu^2 \delta}\bigg\}, \end{equation}
        where $C$ is a sufficiently large absolute constant, then with probability at least $1-\delta$, it holds
        \begin{equation*}
            \norm{\bar{\g}^{(t)} - \nabla\bLsur(\w^{(t)})}_2^2\leq \frac{\mu^2}{4}\norm{\w^{(t)} - \w^*}_2^2, \;\norm{\bar{\g}^{(t)}}_2^2 \leq \frac{4\alpha^2B^2}{\rho^2} \norm{\w^{(t)} - \wstar}_2^2,
        \end{equation*}
        and 
        \begin{equation*}
        \norm{\g^*-\nabla\Lsur(\wstar)}_2^2\leq\opt +\epsilon\;,\quad    \norm{\g^*}_2^2\leq \frac{2B}{\rho}\opt + \eps.
        \end{equation*}
        Plugging these bounds back to \Cref{app:eq:main-thm-claim-I2}, we get
        \begin{equation*}
        \begin{split}
            (Q_2) &\leq \lp(\frac{3}{2}\eta \mu + 8\frac{\eta^2\alpha^2B^2}{\rho^2}\rp)\norm{\w^{(t)} - \wstar}_2^2 + 2\eta\bigg(\eta + \frac{1}{\mu}\bigg)\bigg(\frac{2B}{\rho}\opt + \epsilon\bigg)\\
            &\leq \lp(\frac{3}{2}\eta \mu + 8\frac{\eta^2\alpha^2B^2}{\rho^2}\rp)\norm{\w^{(t)} - \wstar}_2^2 + \frac{4\eta}{\mu}\bigg(\frac{2B}{\rho}\opt + \epsilon\bigg)\;,
        \end{split}
        \end{equation*}
        where in the last inequality we used the assumption that $\eta\leq 1$ and $\mu\leq 1$. The proof is now complete.
    \end{proof}

    Now combining the upper bounds on $(Q_1)$ and $(Q_2)$ and choosing $\eta = \frac{\mu\rho^2}{32 \alpha^2B^2}$, we have:
    \begin{align}
        \norm{\w^{(t+1)} - \wstar}_2^2 &\leq \lp(1 - \frac{1}{2}\eta \mu + \frac{8\eta^2\alpha^2B^2}{\rho^2} \rp)\norm{\w^{(t)} - \w^*}_2^2 + \frac{4\eta}{\mu}\bigg(\frac{2B}{\rho}\opt + \epsilon\bigg)\nonumber\\
        & \leq \lp(1 - \frac{\mu^2\rho^2}{128\alpha^2B^2}\rp)\norm{\w^{(t)} - \w^*}_2^2 + \frac{\rho}{4\alpha^2 B}\bigg(\opt + \eps\bigg)
        \;. \label{app:eq:final-eq}
    \end{align}
    
    When $\|\w^{(t)} - \w^*\|_2^2\geq (64B/(\rho\mu^2))(\opt + \epsilon)$,
in other words when $\w^{(t)}$ is still away from the minimizer $\w^*$, it further holds with probability $1-\delta$:
   \begin{equation}\label{app:eq:||w^(t+1)-w*|| <= (128/mu^2 + 1/alpha^2)opt-when-t<T}
       \norm{\w^{(t+1)}-\wstar}^2_2 \leq \lp(1 - \frac{\mu^2\rho^2}{256\alpha^2B^2}\rp)\norm{\w^{(t)} - \wstar}_2^2,
   \end{equation}
 which proves the sufficient decrease of $||\w^{(t)} - \w^*||_2^2$ that we proposed at the beginning. 

Let $T$ be the first iteration such that $\w^{(T)}$ satisfies $\|\w^{(T)} - \w^*\|_2^2\leq (64B/(\rho\mu^2))(\opt + \epsilon)$. Recall that we need \Cref{app:eq:condition-w^t-in-ball(2||w*||_2)} for every $t\leq T$ to be satisfied to implement sharpness. The first condition is satisfied naturally for $\|\w^{(t+1)}-\wstar\|^2_2\leq \norm{\wstar}_2^2$ as a consequence of \Cref{app:eq:||w^(t+1)-w*|| <= (128/mu^2 + 1/alpha^2)opt-when-t<T} (recall that $\w^{(0)} = 0$). For the second condition, when $t+1\leq T$, since $\mu = 1/2\Bar{\mu}$, we have
\begin{equation*}
    \norm{\w^{(t+1)} - \w^*}_2^2\geq \frac{64B}{\rho\mu^2}(\opt + \epsilon)\geq \frac{20B}{\rho\Bar{\mu}^2}\opt,
\end{equation*}
hence the second condition is also satisfied.

When $t\leq T$, the contraction of $\|\w^{(t)} - \w^*\|_2^2$ indicates a linear convergence rate of stochastic gradient descent. Since $\w^{(0)}=0$, $\|\w^*\|_2\leq W$, it holds $\|\w^{(t)} - \w^*\|_2^2\leq (1 - \mu^2\rho^2/(256\alpha^2B^2))^t\|\w^{(0)} - \w^*\|_2^2\leq \exp(-t\mu^2\rho^2/(256\alpha^2B^2)) W^2$. Thus, to generate a point $\w^{(T)}$ such that $\|\w^{(T)} - \w^*\|_2^2\leq (64B/(\rho\mu^2))(\opt + \epsilon)$, it suffices to run \Cref{app:alg:gd-3} for
   \begin{equation}\label{app:eq:T-expression}
T= \wt{\Theta}\bigg(\frac{B^2 \alpha^2}{\rho^2 \mu^2}\log\lp(\frac{W}{\epsilon}\rp)\bigg)
\end{equation}
   iterations, where the logarithmic dependence on parameters $\alpha$, $B$, $\rho$ and $\mu$ are hidden in the $\wt{\Theta}(\cdot)$ notation. Further, recall that at each step $t$ the contraction $\|\w^{(t+1)}-\wstar\|^2_2 \leq (1 - \frac{\mu^2\rho^2}{256\alpha^2B^2})\|\w^{(t)} - \wstar\|_2^2$ holds with probability $1 - \delta$, thus the union bound inequality implies $\|\w^{(T)} - \w^*\|_2^2\leq (64B/(\rho\mu^2))(\opt + \epsilon)$ holds with probability $1 - T\delta$. Let $\delta = 1/(3T)$, we get with probability at least $2/3$, $\|\w^{(T)} - \w^*\|_2^2\leq (64B/(\rho\mu^2))(\opt + \epsilon)$, and thus from \Cref{eq:app:basic-O(opt)},
\begin{align*}
        \Ltwo(\vec w^{(T)})  \leq 2\opt + \frac{640\alpha^2 B^2}{\rho^2\mu^2}(\opt + \eps) = O\bigg(\bigg(\frac{B\alpha}{\rho\mu}\bigg)^2\opt \bigg)+ \eps,
\end{align*}
    and the proof is now complete. \end{proof}

In the final part of this section we apply \Cref{app:thm:l2-fast-rate-thm} to sub-exponential and $k$-heavy tail distributions. Before we dig into the details, some upper bounds on $H_2(r)$ and $H_4(r)$ are needed. We provide the following simple fact.

\begin{fact}\label{app:lem:H_i(r)<r^ih(r)}
    Let $H_2(r)$ and $H_4(r)$ be as in \Cref{app:def:H-2&H-4}. Then, we have the following bounds:
    \begin{align*}
        H_2(r) &\leq r^2 \min\{1, h(r)\} + \int_{r}^\infty 2 s \min\{1, h(s)\}\diff{s},\\
        H_4(r)&\leq r^4\min\{1, h(r)\} + \int_{r}^\infty 4 s^3 \min\{1, h(s)\}\diff{s}.
    \end{align*}
    Moreover, if $\D_\x$ is sub-exponential with $h(r) = \exp(-r/B)$ or $k$-heavy tail with $h(r) = B/r^k$, $k>4+\rho$, $\rho>0$, and $r \geq \max\{1, B^{-4-\rho}\}$ then

\begin{equation*}
        H_2(r)\lesssim   r^2 h(r)\quad \text{and} \quad H_4(r)\lesssim r^4 h(r).
    \end{equation*}
\end{fact}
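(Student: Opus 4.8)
The plan is to prove the first, distribution-free pair of inequalities by a layer-cake computation that uses only \Cref{assmpt:concentration}, and then to read off the two special cases by evaluating or estimating the resulting tail integral. First I would fix $i\in\{2,4\}$ and a unit vector $\bu$, set $Z=|\bu\cdot\x|$, and apply the tail-sum identity $\E{U}=\int_{0}^{\infty}\pr{U\geq t}\,\diff{t}$ to the nonnegative random variable $U=Z^{i}\1\{Z\geq r\}$. Splitting the integral at $t=r^{i}$ and observing that $\{Z^{i}\1\{Z\geq r\}\geq t\}$ coincides with $\{Z\geq r\}$ for $0<t\leq r^{i}$ and with $\{Z\geq t^{1/i}\}$ for $t>r^{i}$, the substitution $s=t^{1/i}$ gives
\[
\E{Z^{i}\1\{Z\geq r\}}=r^{i}\,\pr{Z\geq r}+i\int_{r}^{\infty}s^{i-1}\,\pr{Z\geq s}\,\diff{s}.
\]
Since \Cref{assmpt:concentration} gives $\pr{Z\geq s}\leq h(s)$ for every $s\geq 1$ while $\pr{Z\geq s}\leq 1$ always, we have $\pr{Z\geq s}\leq\min\{1,h(s)\}$ for all $s\geq r\geq 1$; taking the maximum over $\bu\in\ball(1)$ then yields exactly the stated bounds on $H_{2}(r)$ and $H_{4}(r)$.

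For the two tail regimes the task is to control $i\int_{r}^{\infty}s^{i-1}\min\{1,h(s)\}\,\diff{s}$. In the $k$-heavy-tail case $h(s)=Bs^{-k}$ with $k>4+\rho\geq i$, for $r$ above the stated threshold we have $h(s)\leq 1$ on $[r,\infty)$, so the minimum is inactive and the integral is exact: $i\int_{r}^{\infty}Bs^{\,i-1-k}\,\diff{s}=\frac{iB}{k-i}\,r^{\,i-k}=\frac{i}{k-i}\,r^{i}h(r)$, whence $H_{i}(r)\leq\frac{k}{k-i}\,r^{i}h(r)\lesssim r^{i}h(r)$ (the constant $\frac{k}{k-i}\leq 1+\frac{4}{\rho}$ is absorbed into $\lesssim$). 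In the sub-exponential case $h(s)=e^{-s/B}$ the minimum is again inactive, and one is left with $i\int_{r}^{\infty}s^{i-1}e^{-s/B}\,\diff{s}$. Integrating by parts $i$ times — equivalently, using $\int_{r}^{\infty}s^{i-1}e^{-s/B}\,\diff{s}=B^{i}\Gamma(i,r/B)$ together with the closed form of the upper incomplete gamma function — produces a sum of $i$ terms of the shape $c_{j}\,B^{\,i-j}r^{\,j}e^{-r/B}$ with $0\leq j\leq i-1$ and absolute constants $c_{j}$; once $r\geq B$ each such term is at most $c_{j}\,r^{i}e^{-r/B}$ (bound the $B^{\,i-j}$ factor by $r^{\,i-j}$), so the whole sum — and hence $H_{i}(r)$ — is $\lesssim r^{i}e^{-r/B}=r^{i}h(r)$.

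The genuinely delicate part will be the bookkeeping in the sub-exponential estimate: unlike the heavy-tail integral, which is an exact constant multiple of $r^{i}h(r)$, the sub-exponential tail integral carries polynomial-in-$B$ prefactors, so one must verify that the lower bound on $r$ in the statement is enough to force $r$ to be large relative to the scale $B$ so that the linear factor absorbs those prefactors — this is the only place the precise form of the threshold (and, in the regime of interest, the fact that $B$ may be treated as an absolute constant) is used. The one other point to be careful about is that \Cref{assmpt:concentration} supplies $\pr{|\bu\cdot\x|\geq s}\leq h(s)$ only for $s\geq 1$, which is precisely why both parts of the statement are phrased for $r\geq 1$ and why the coarser quantity $\min\{1,h(\cdot)\}$, rather than $h(\cdot)$, appears in the general bound.
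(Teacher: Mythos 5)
Your proof is correct and follows essentially the same route as the paper: the layer-cake identity split at $t=r^i$ to get $r^i\pr{Z\geq r}+i\int_r^\infty s^{i-1}\pr{Z\geq s}\,\diff{s}$, followed by direct evaluation of the tail integrals in the two regimes. The one wrinkle is in the sub-exponential case: your device of bounding $B^{i-j}\leq r^{i-j}$ needs $r\geq B$, which the stated threshold $r\geq\max\{1,B^{-4-\rho}\}$ does not supply (since $B\geq 1$ it reduces to $r\geq 1$); the paper sidesteps this exactly as you conjectured, by letting the implicit constant depend on $B$ — its explicit bounds are $H_2(r)\leq(1+2B)r^2h(r)$ and $H_4(r)\leq(1+64B^4)r^4h(r)$, obtained by bounding each term $B^{i-j}r^j\leq B^{i-j}r^i$ for $r\geq 1$.
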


\begin{proof}
    To prove the fact, we bound the expectation $\E{|\bu\cdot\x|^i \1\{|\bu\cdot\x|\geq r\}}$ for any vector $\bu\in\ball(1)$, where $i=2,4$. To calculate the expectation, observe that when $t<r^i$, it holds   
$$\pr{|\bu\cdot\x|^i\1\{|\bu\cdot\x|\geq r\} \geq t} = \pr{|\bu\cdot\x|\geq r}.$$ 
Thus, we have
\begin{align*}
    \E{|\bu\cdot\x|^i \1\{|\bu\cdot\x|\geq r\}}& = \int_{0}^\infty \pr{|\bu\cdot\x|^i\1\{|\bu\cdot\x|\geq r\} \geq t}\diff{t}\\
    & = \pr{|\bu\cdot\x|\geq r}\int_0^{r^i} 1\diff{t} + \int_{r^i}^\infty \pr{|\bu\cdot\x|^i\1\{|\bu\cdot\x|\geq r\} \geq t}\diff{t}\\
    & = r^i\pr{|\bu\cdot\x|\geq r} + \int_{r}^\infty i\pr{|\bu\cdot\x|^i\1\{|\bu\cdot\x|\geq r\} \geq s^i}s^{i-1}\diff{s}.
\end{align*}
Since (by \Cref{assmpt:concentration}) $\pr{|\bu\cdot\x|\geq r}\leq \min\{1, h(r)\}$, and further note that when $s\geq r$ it holds
\begin{equation*}
    \pr{|\bu\cdot\x|^i\1\{|\bu\cdot\x|\geq r\} \geq s^i} = \pr{|\bu\cdot\x|\1\{|\bu\cdot\x|\geq r\} \geq s} = \pr{|\bu\cdot\x| \geq s}\leq \min\{1, h(s)\},
\end{equation*}
then we get
\begin{equation*}
    \E{|\bu\cdot\x|^i \1\{|\bu\cdot\x|\geq r\}}\leq r^i \min\{1, h(r)\} + \int_{r}^\infty i s^{i-1} \min\{1, h(s)\}\diff{s},  
\end{equation*}
which holds for any $\bu\in\ball(1)$. Therefore, we proved the first part of the claim by taking the maximum over $\bu \in \ball(1)$ on both sides of the inequality.

Now, consider $r\geq \max\{1, B^{-4-\rho}\}$. Then for sub-exponential distributions, as $h(s) = \exp(-\frac{s}{B})\leq 1$ when $s\geq r$, we have:
    \begin{equation*}
        \int_{r}^\infty s h(s)\diff{s}= \int_{r}^\infty s\exp(-s/B)\diff{s} = B(r - B)\exp(-r/B)\leq Br^2h(r),
    \end{equation*}
    and
    \begin{equation*}
        \int_{r}^\infty s^3 h(s)\diff{s}= \int_r^\infty s^3\exp(-s/B)\diff{s} = B^4((r/B)^3 + 3(r/B)^2 +6r/B + 6)\exp(-r/B)\leq 16B^4r^4 h(r),
    \end{equation*}
    where we assumed without loss of generality that $c\leq 1$. Hence $H_2(r)\leq (1 + 2B)r^2h(r)$ and $H_4(r)\leq (1 + 64B^4)r^4h(r)$, proving the desired claim.

    Finally, for $k$-Heavy tail distributions with $k>4+\rho$, $\rho > 0$, $h(r) = B/r^{k}$. Since $h(r)\leq 1$ when $r\geq \max\{1, B^{-4-\rho}\}$, we have:
    \begin{equation*}
        H_2(r)\leq r^2h(r) + \int_{r}^\infty \frac{2B}{s^{k-1}}\diff{s} \leq (1 + 2B)r^2h(r),
    \end{equation*}
    and in addition,
    \begin{equation*}
        H_4(r)\leq r^4h(r) + \int_{r}^\infty \frac{4B}{s^{k-3}}\diff{s} \leq \lp(1 + \frac{4B}{\rho}\rp)r^4h(r).
    \end{equation*}
    The claim is now complete.
\end{proof}

Applying \Cref{app:thm:l2-fast-rate-thm} to sub-exponential distributions yields an $L_2^2$ error of order $O(\opt)+\eps$ with $\Tilde{\Theta}(\log(1/\eps))$ convergence rate, using $\Tilde{\Omega}(\polylog(1/\eps))$ samples. Formally, we have the following corollaries.

\begin{corollary}[Sub-Exponential Distributions]\label{app:cor:sub-expo-results}
Fix $\eps>0$ and $W>0$ and suppose \Cref{assmpt:activation,assmpt:margin} hold. Moreover, assume that \Cref{assmpt:concentration} holds for $h(r)=\exp(-r/B)$ for some $B\geq 1$.
    Let $\opt$ denote the minimum value of the $L_2^2$, i.e.,  $
        \opt = \min_{\w\in\ball(W)} \Ey{(\sigma(\w\cdot\x) - y)^2}$.
   Let $\mu:=\mu(\lambda,\gamma,\beta,B)$ be a sufficiently small multiple of $\lambda^2\gamma\beta$, and let $M = O(\alpha W B\log\big(\frac{\alpha W}{\eps}\big))$.
   Then after
    \begin{equation*}
        T = \wt{\Theta}\left(\frac{B^2\alpha^2}{\mu^2}\log\lp(\frac{W}{ \eps}\rp)\right)
    \end{equation*}
    iterations with batch size 
    $$N = \wt{\Omega}\bigg(\frac{dB^4\alpha^6 W^2}{\mu^2}\polylog(1/\eps)\bigg)\;,$$
    \Cref{app:alg:gd-3} converges to a point $\w^{(T)}$ such that $\Ltwo(\vec w^{(T)}) = O\big(\big(\frac{B\alpha}{\mu}\big)^2\opt \big)+\eps$ with probability at least $2/3$. 
\end{corollary}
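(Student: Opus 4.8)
The plan is to derive \Cref{app:cor:sub-expo-results} as a direct instantiation of \Cref{app:thm:l2-fast-rate-thm}; the only real work is to evaluate, in closed form, the abstract parameters $M=\alpha W\,H_2^{-1}(\eps/(4\alpha^2W^2))$ and $r_\eps$ (the value with $H_4(r_\eps)\lesssim\eps$) for the tail $h(r)=\exp(-r/B)$, and then substitute into the bounds on $T$, $N$, and the $L_2^2$-error. First I would observe that $\exp(-r/B)$ satisfies \Cref{assmpt:concentration} with $\rho=1$ and some $B'=\mathrm{poly}(B)$, since $r^5e^{-r/B}$ is bounded on $r\ge1$, so that \Cref{app:thm:l2-fast-rate-thm} applies verbatim. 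To match the $B$-dependence displayed in the corollary I would additionally record the \emph{direct} moment estimates $\max_{\bu\in\ball(1)}\E{(\bu\cdot\x)^2}=O(B^2)$ and $\max_{\bu\in\ball(1)}\E{(\bu\cdot\x)^4}=O(B^4)$ (obtained by integrating $s\,e^{-s/B}$ and $s^3e^{-s/B}$), since these are the quantities that actually appear --- as the ``$\kappa$'' of \Cref{app:thm:landscape-for-sur-loss} and in the gradient-norm bounds of \Cref{app:cor:bound-norm-g*-gt} --- and using them in place of the cruder polynomial bound from \Cref{claim:bound-H(0)} is what keeps the $B$-dependence tight. Rerunning the argument of \Cref{lem:sharpness-well-behaved} with the exponential tail then lets one take $r_0=O(B\log(1/\lambda))$, giving a sharpness parameter $\mu=\wt{\Omega}(\lambda\gamma\beta/B)$, which in particular is at least a constant multiple of $\lambda^2\gamma\beta$ once the $B$-dependence is folded into the constant; this is the $\mu(\lambda,\gamma,\beta,B)$ of the statement.

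Next I would compute $M$ and $r_\eps$ via \Cref{app:lem:H_i(r)<r^ih(r)}, which for $r\ge\max\{1,B^{-4-\rho}\}=1$ gives $H_2(r)\lesssim r^2e^{-r/B}$ and $H_4(r)\lesssim r^4e^{-r/B}$. To bound $H_2^{-1}(\delta)$ with $\delta=\eps/(4\alpha^2W^2)$ it suffices to verify that $r=CB\log(\alpha W/\eps)$, for a large absolute constant $C$, satisfies $r^2e^{-r/B}= C^2B^2\log^2(\alpha W/\eps)\,(\eps/(\alpha W))^{C}\le\delta$; this yields $M=\alpha W\cdot r=O(\alpha W B\log(\alpha W/\eps))$, as claimed. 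The identical estimate on $H_4(r_\eps)\lesssim r_\eps^4e^{-r_\eps/B}\lesssim\eps$ gives $r_\eps=O(B\log(1/\eps))$, hence $r_\eps^2=\wt{O}(B^2)$; along the way I would check the side conditions $r,r_\eps\ge1$ required by \Cref{app:lem:H_i(r)<r^ih(r)}.

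Finally I would substitute into \Cref{app:thm:l2-fast-rate-thm}. The iteration count is immediate: $T=\wt{\Theta}((B^2\alpha^2/\mu^2)\log(W/\eps))$. For the batch size, $r_\eps^2+\alpha^2M^2=\wt{O}(\alpha^4W^2B^2)$ since the $\alpha^2M^2$ term dominates, so $N=\Omega(dT(r_\eps^2+\alpha^2M^2))=\wt{\Omega}\big(d\cdot\tfrac{B^2\alpha^2}{\mu^2}\cdot\alpha^4W^2B^2\big)=\wt{\Omega}\big(\tfrac{dB^4\alpha^6W^2}{\mu^2}\,\polylog(1/\eps)\big)$, with all remaining $\eps$-dependence polylogarithmic through $\log(W/\eps)$ and the logarithmic factors hidden in $M$ and $r_\eps$. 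The error guarantee $\Ltwo(\w^{(T)})=O((B^2\alpha^2/\mu^2))\opt+\eps=O((B\alpha/\mu)^2)\opt+\eps$ and the $2/3$ success probability carry over verbatim from \Cref{app:thm:l2-fast-rate-thm}. I expect the only slightly delicate points to be the inversion of $H_2$ --- that is, controlling the transcendental equation $r^2e^{-r/B}=\delta$ and keeping the $\log$-factors tidy --- and the bookkeeping of which ``$B$-type'' constant (the tail scale $B$, the moment bound $\kappa=O(B^2)$, or the sharpness constant $\mu$) enters at each step of the proof of \Cref{app:thm:l2-fast-rate-thm}; neither is conceptually hard.
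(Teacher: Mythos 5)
Your proposal is correct and follows essentially the same route as the paper's proof: instantiate \Cref{app:thm:l2-fast-rate-thm} with $\rho=1$, invert $H_2$ and $H_4$ for the exponential tail via \Cref{app:lem:H_i(r)<r^ih(r)} by solving $r^2e^{-r/B}\le\delta$ and $r^4e^{-r/B}\lesssim\eps$ to get $M=O(\alpha WB\log(\alpha W/\eps))$ and $r_\eps=O(B\log(1/\eps))$, and substitute into $T$ and $N=\Omega(dT(r_\eps^2+\alpha^2M^2))$. Your extra bookkeeping of the $B$-dependence (using the direct moment bounds $O(B^2)$, $O(B^4)$ rather than passing through $h(r)\le B'/r^{5}$ with $B'=\mathrm{poly}(B)$) is a sound refinement that the paper elides by folding such factors into the ``sufficiently small multiple'' defining $\mu$ and into the $\wt{O}(\cdot)$ notation, but it does not change the argument's structure.
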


\begin{proof}[Proof of \Cref{app:cor:sub-expo-results}]
    Since by assumption it holds $h(r) = \exp(-r/B)\leq B/r^{4+\rho}$ for $\rho = 1$, we can set $\rho = 1$ in \Cref{app:thm:l2-fast-rate-thm}. Thus, a direct application of \Cref{app:thm:l2-fast-rate-thm} with parameter $\rho = 1$ gives the desired $L_2^2$ error and the required number of iterations. It remains to determine the batch size with respect to the sub-exponential distributions. To this aim, note that $N = \Omega(dT(r_\eps^2 + \alpha^2 M^2))$, thus we need to find the truncation bound $M$, which is defined in \Cref{app:lem:y-bounded-by-M}, and calculate $r_\eps$ such that $H_4(r_\eps)\lesssim \eps$.

    Denote $\kappa = \frac{\epsilon}{4\alpha^2 W^2}$. Recall that $M=\alpha W H_2^{-1}(\kappa)$. To determine $H_2^{-1}(\kappa)$, note that $H_2(r)$ is a non-increasing function, therefore it suffices to find a $r_\kappa$ such that $H_2(r_\kappa)\leq \kappa$, then it holds $H_2^{-1}(\kappa)\leq r_\kappa$. For sub-exponential distributions where $h(r) = \exp(-r/B)$, choosing $r_\kappa = B\log(1/\kappa^2)$ satisfies
    $$r_\kappa^2h(r_\kappa) = 4B^2\log^2\bigg(\frac{1}{\kappa}\bigg)\kappa^2\leq 4B^2\kappa,$$
    since $\log^2(1/\kappa)\kappa\leq 1$ as $\kappa = \epsilon/(4\alpha^2 W^2)\leq 1$. Further note that in \Cref{app:lem:H_i(r)<r^ih(r)} we showed $H_2(r)\leq (1+2B)r^2h(r)$, thus $H_2(r_\kappa)\lesssim \kappa$ hence $M = O(\alpha W B\log((\alpha W)/\eps))$.

    For $r_\eps$, by the same idea one can show that for $r_\eps = B\log(1/\eps^2)$, it holds
    $$H_4(r_\epsilon)\leq \lp(1 + 64B^4 \rp)r_\epsilon^4 \exp(-r_\epsilon/B) = \lp(1 + 64B^4\rp)16B^4\log^4(1/\eps)\epsilon^2 \leq \lp(1 + 64B^4\rp)80B^4\epsilon,$$
    where the first inequality is due to \Cref{app:lem:H_i(r)<r^ih(r)} and in the last inequality we used the fact that $\log^4(1/\eps)\eps\leq 5$ when $\eps\leq 1$.

    Therefore, combining the bounds on $M$ and $r_\eps$, we get
    \begin{equation*}
        N = \wt{\Omega}(dT(r_\eps^2 + \alpha^2 M^2)) = \wt{\Omega}\bigg(\frac{dB^4\alpha^6W^2}{\rho^2\mu^2}\log^3\bigg(\frac{1}{\epsilon}\bigg)\bigg).
    \end{equation*}
\end{proof}

Next, we apply \Cref{app:thm:l2-fast-rate-thm} to heavy-tail distributions.

\begin{corollary}[Heavy-Tail Distributions]\label{app:cor:heavy-tail-results}
Fix $\eps>0$ and $W>0$ and suppose \Cref{assmpt:activation,assmpt:margin} hold. Moreover, assume that \Cref{assmpt:concentration} holds for $h(r)=B/r^{k}$ for some $k > 4 + \rho$ where $\rho > 0$ and $B \geq 1$.
    Let $\opt$ denote the minimum value of the $L_2^2$, i.e.,  $
        \opt = \min_{\w\in\ball(W)} \Ey{(\sigma(\w\cdot\x) - y)^2}$.
   Let $\mu:=\mu(\lambda,\gamma,\beta,\rho,B)$ be a sufficiently small multiple of $\lambda^2\gamma\beta\rho/B$, and let $M = \Theta(\alpha W \big(\frac{\alpha W B}{\eps}\big)^{1/(k-2)})$.
   Then after
    \begin{equation*}
        T = \wt{\Theta}\left(\frac{B^2\alpha^2}{\rho^2\mu^2}\log\lp(\frac{W}{\eps}\rp)\right)
    \end{equation*}
    iterations with batch size 
    $$N = \widetilde{\Omega}\bigg(\frac{dB^2\alpha^6W^2}{\rho^2\mu^2}\bigg(\frac{B}{\eps}\bigg)^{\frac{2}{k-4}}\bigg)\;,$$
    \Cref{app:alg:gd-3} converges to a point $\w^{(T)}$ such that $\Ltwo(\vec w^{(T)}) = O\big(\big(\frac{B\alpha}{\rho\mu}\big)^2\opt \big)+\eps$ with probability at least $2/3$.
\end{corollary}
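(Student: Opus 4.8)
The plan is to derive this corollary as a direct specialization of \Cref{app:thm:l2-fast-rate-thm}. Since the assumed tail bound $h(r) = B/r^{k}$ with $k > 4+\rho$ is exactly of the form required by \Cref{assmpt:concentration} (with these values of $B$ and $\rho$), I can invoke \Cref{app:thm:l2-fast-rate-thm} verbatim. This already yields the claimed $L_2^2$-error bound $\Ltwo(\w^{(T)}) = O((B\alpha/(\rho\mu))^2\opt) + \eps$ and the stated iteration count $T = \wt{\Theta}((B^2\alpha^2/(\rho^2\mu^2))\log(W/\eps))$. The only remaining work is to make the generic batch-size requirement $N = \Omega(dT(r_\eps^2 + \alpha^2 M^2))$ explicit, which amounts to estimating the truncation radius $M = \alpha W H_2^{-1}(\eps/(4\alpha^2 W^2))$ and producing a valid choice of $r_\eps$.

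For both quantities I would use \Cref{app:lem:H_i(r)<r^ih(r)}: for a $k$-heavy-tailed $\D_\x$ and any $r \geq \max\{1, B^{-4-\rho}\} = 1$ (the equality using $B \geq 1$), one has $H_2(r) \lesssim r^2 h(r) = B/r^{k-2}$ and $H_4(r) \lesssim r^4 h(r) = B/r^{k-4}$. Writing $\kappa = \eps/(4\alpha^2 W^2)$ and using that $H_2$ is non-increasing, it suffices to exhibit $r_\kappa \geq 1$ with $H_2(r_\kappa) \leq \kappa$; solving $B/r_\kappa^{k-2} \lesssim \kappa$ gives $r_\kappa$ of order $(B/\kappa)^{1/(k-2)}$, hence $M \leq \alpha W r_\kappa = O(\alpha W(\alpha W B/\eps)^{1/(k-2)})$ (the bound $r_\kappa \geq 1$ holds since $\kappa \leq 1$). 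Analogously, taking $r_\eps$ of order $(B/\eps)^{1/(k-4)}$ makes $H_4(r_\eps) \lesssim B/r_\eps^{k-4}$ a sufficiently small multiple of $\eps$, while $r_\eps \geq 1$.

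Finally I would substitute these into $N = \Omega(dT(r_\eps^2 + \alpha^2 M^2))$. Here $r_\eps^2$ is of order $(B/\eps)^{2/(k-4)}$ while $\alpha^2 M^2$ is of order $\alpha^4 W^2 (\alpha W B/\eps)^{2/(k-2)}$; since $2/(k-2) < 2/(k-4)$ and $\alpha, W, B, 1/\eps \geq 1$, the factor $(1/\eps)^{2/(k-4)}$ dominates the dependence on $\eps$, so $r_\eps^2 + \alpha^2 M^2 = \wt{O}(\alpha^6 W^2 (B/\eps)^{2/(k-4)})$ after absorbing the polynomial-in-parameters prefactors. Combining with $dT = \wt{\Theta}(dB^2\alpha^2/(\rho^2\mu^2))$ yields the advertised $N = \wt{\Omega}((dB^2\alpha^6 W^2/(\rho^2\mu^2))(B/\eps)^{2/(k-4)})$.

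There is no genuinely hard step: the corollary is purely a matter of unwinding \Cref{app:thm:l2-fast-rate-thm} against the heavy-tail moment estimates of \Cref{app:lem:H_i(r)<r^ih(r)}. The only points requiring care are bookkeeping ones — checking that the regime $r \geq 1$ under which \Cref{app:lem:H_i(r)<r^ih(r)} applies is respected (which it is, since $B \geq 1$ and one may assume $\eps \leq 1$), and correctly observing that, because $k - 4 < k - 2$, it is the $H_4$-driven radius $r_\eps$ rather than the $H_2$-driven truncation radius that controls the polynomial-in-$1/\eps$ blow-up of the sample complexity.
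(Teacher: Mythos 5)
Your proposal is correct and follows essentially the same route as the paper's proof: invoke \Cref{app:thm:l2-fast-rate-thm} for the error and iteration bounds, then use \Cref{app:lem:H_i(r)<r^ih(r)} to set $r_\kappa \asymp (B/\kappa)^{1/(k-2)}$ (giving $M$) and $r_\eps \asymp (B/\eps)^{1/(k-4)}$, and substitute into $N = \Omega(dT(r_\eps^2 + \alpha^2 M^2))$. The observation that the $H_4$-driven radius dominates the $\eps$-dependence because $2/(k-4) > 2/(k-2)$ matches the paper's bookkeeping exactly.
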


\begin{proof}[Proof of \Cref{app:cor:heavy-tail-results}]
    Applying \Cref{app:thm:l2-fast-rate-thm} directly we get the desired convergence rate and $L_2^2$ loss. Now for batch size, we need to determine the truncation bound $M = \alpha W H_2^{-1}(\eps/(4\alpha^2 W^2))$ (see \Cref{app:lem:y-bounded-by-M}) as well as $r_\eps$ such that $H_4(r_\eps)\lesssim \eps$. 

    First, denote $\kappa = \frac{\epsilon}{4\alpha^2 W^2}$. Let $r_\kappa = (\frac{2B}{\kappa})^{1/(k-2)}$. By \Cref{app:lem:H_i(r)<r^ih(r)}, $H_2(r_\kappa)\lesssim r_\kappa^2h(r_\kappa) = \frac{\kappa}{2}$. Since $H_2(r)$ is non-increasing, we know $H_2^{-1}(\kappa)\lesssim r_\kappa$. Thus, $M = O(\alpha W (\frac{\alpha W B}{\eps})^{1/(k-2)})$. Next, choose $r_\eps = (\frac{B}{\eps})^{1/(k-4)}$, then it holds $H_4(r_\eps)\lesssim r_\eps^4h(r_\eps) = \eps$ satisfying the condition.

    Combining the bounds on $r_\eps$ and $M$, we get the batch size 
    \begin{align*}
 N=\Omega(dT(r_\eps^2 + \alpha^2 M^2))
        &= \wt{\Omega}\bigg(\frac{dB^2\alpha^2}{\rho^2\mu^2}\log\bigg(\frac{W}{\eps}\bigg)\bigg(\bigg(\frac{B}{\eps}\bigg)^{\frac{2}{k-4}} + \alpha^4W^2\bigg(\frac{B}{\eps}\bigg)^{\frac{2}{k-2}}\bigg)\bigg)\\
        &= \widetilde{\Omega}\bigg(\frac{dB^2\alpha^6W^2}{\rho^2\mu^2}\bigg(\frac{B}{\eps}\bigg)^{\frac{2}{k-4}}\bigg).
    \end{align*}
\end{proof}

Thus, \Cref{app:alg:gd-3} yields an $L_2^2$ error of $O(\opt)+\eps$ in $\wt{\Theta}(\log(1/\eps))$ iterations with batch size 
$\wt{\Omega}((1/\eps)^{2/(k-4)})$ when applied on $k$-heavy tail distributions.

\section{Distributions Satisfying Our Assumptions}\label{sec:distri}

In this section, we show that many natural distributions satisfy 
\Cref{assmpt:margin,assmpt:concentration}.

\subsection{Well-Behaved Distributions from \cite{DKTZ22}} \label{ssec:wb}

We first consider the class of distributions defined by \cite{DKTZ22} 
and termed ``well-behaved''. 
This distribution class contains many natural distributions like log-concave and $s$-concave distributions.

\begin{definition}[Well-Behaved Distributions] \label{def:well-behaved}
Let $L, R >0$.
An isotropic (i.e., zero mean and identity covariance) distribution $\D_{\bx}$ on $\R^d$ is called
$(L,R)$-well-behaved if for any projection $(\D_{\bx})_V$ of $\D_{\bx}$ onto a subspace $V$ of
dimension at most two, the corresponding pdf $\phi_V$ on $\R^2$ satisfies the following:
\begin{itemize}
\item   For all $\vec x \in V$ such that $\snorm{\infty}{\vec x} \leq R$ it holds $\phi_V(\vec x)
\geq L$  (anti-anti-concentration). 
\item  For all $\x \in V$ it holds that  $\phi_V(\x) \leq (1/L)(e^{-L \| \x \|_2 })$
(anti-concentration and concentration).
\end{itemize}
\end{definition}

The distribution class that is $(L,B)$-well-behaved satisfies \Cref{assmpt:concentration}. Therefore, we need to show that the distributions in this class satisfy \Cref{assmpt:margin}.
\begin{lemma}
    Let $\D_\x$ be a $(L,B)$-well-behaved distribution. Then, $\D_\x$ satisfies \Cref{assmpt:margin}, for $\gamma=R/2$ and $\lambda= L R^4/16$.
\end{lemma}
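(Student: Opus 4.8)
The plan is to reduce the matrix inequality in \Cref{assmpt:margin} to a one‑dimensional estimate in each direction and then to one concrete planar integral. First I would observe that both the matrix $\EE_{\x\sim\D_\x}[\x\x^\top\1\{\wstar\cdot\x\geq\gamma\|\wstar\|_2\}]$ and the event defining it depend on $\wstar$ only through the unit vector $\bu:=\wstar/\|\wstar\|_2$ (the degenerate case $\wstar=\vec 0$ is immediate, as $\D_\x$ is isotropic); in particular the bound I am about to prove holds for every $\wstar\in\cWstar$, so the ``there exists'' of \Cref{assmpt:margin} will be satisfied. Since a symmetric matrix dominates $\lambda\vec I$ iff its quadratic form is at least $\lambda$ on every unit vector, it then suffices to prove, for $\gamma=R/2$ and $\lambda=LR^4/16$, that $\EE_{\x\sim\D_\x}[(\vec v\cdot\x)^2\1\{\bu\cdot\x\geq R/2\}]\geq\lambda$ for an arbitrary unit vector $\vec v$.

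The core step is to localize this expectation to a region controlled by the anti‑anti‑concentration clause of \Cref{def:well-behaved}. Fixing $\vec v$, I would set $V=\mathrm{span}(\bu,\vec v)$ (enlarged to an arbitrary two‑dimensional subspace if $\vec v=\pm\bu$); because $\bu,\vec v\in V$, the pair $(\bu\cdot\x,\vec v\cdot\x)$ is a function of the orthogonal projection $\x_V$ of $\x$ onto $V$, whose law has density $\phi_V$ by the well‑behaved assumption (which covers projections onto subspaces of dimension at most two). I would then restrict the integral to the planar cap $A:=\{\x_V\in V:\ \|\x_V\|_2\leq R,\ \bu\cdot\x_V\geq R/2\}$; since $\|\x_V\|_\infty\leq\|\x_V\|_2\leq R$ on $A$, the anti‑anti‑concentration bound gives $\phi_V\geq L$ throughout $A$, so
\[
\EE_{\x\sim\D_\x}\bigl[(\vec v\cdot\x)^2\1\{\bu\cdot\x\geq R/2\}\bigr]\ \geq\ L\int_A(\vec v\cdot\x_V)^2\,d\x_V\ =\ L\,\vec v^\top\vec M\,\vec v,\qquad \vec M:=\int_A\x_V\x_V^\top\,d\x_V,
\]
and it remains to show $\vec M\succeq(R^4/16)\vec I$.

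This last point is the only computation I expect to require care, and it is the main (minor) obstacle. Working in orthonormal coordinates $\x_V=a\bu+b\vec w$ with $\vec w\perp\bu$ inside $V$, the cap becomes $A=\{a^2+b^2\leq R^2,\ a\geq R/2\}$, which is symmetric under $b\mapsto -b$, so the cross term $\int_A ab\,da\,db$ vanishes and $\vec M$ is diagonal; the remaining two moments $\int_A a^2\,da\,db$ and $\int_A b^2\,da\,db$ can be evaluated by the substitution $a=R\sin\psi$ over $\psi\in[\pi/6,\pi/2]$, and I expect to verify that each exceeds $R^4/16$ (numerically they come out $\approx 0.32\,R^4$ and $\approx 0.10\,R^4$). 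Granting this, $\vec M\succeq(R^4/16)\vec I$ gives $L\,\vec v^\top\vec M\,\vec v\geq LR^4/16=\lambda$, and since $\vec v$ was arbitrary the claim follows.

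Finally I would record that the normalization $\gamma,\lambda\in(0,1]$ comes for free: one may assume $R\leq 2$, since an $(L,R)$‑well‑behaved distribution is $(L,R')$‑well‑behaved for every $R'\leq R$ (only the anti‑anti‑concentration clause involves the radius, and it shrinks monotonically), and the fact that $\phi_V\geq L$ on a square of area $4R^2$ forces $L\leq 1/(4R^2)$, whence $\lambda\leq R^2/64\leq 1$. The conceptual content — passing to the two‑dimensional span of $\bu$ and $\vec v$ and invoking the uniform density lower bound on a cap that lies inside the desired halfspace — is routine once the region $A$ is chosen; the only thing to double‑check carefully is the bookkeeping of the planar integral (vanishing cross term, and both diagonal moments clearing the $R^4/16$ threshold).
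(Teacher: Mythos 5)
Your proof is correct, and it is close in spirit to the paper's but not identical in execution. The paper also passes to the two‑dimensional projection and invokes the density lower bound $\phi_V\geq L$, but it localizes to the box $\{R/2\leq \vec u\cdot\x\leq R,\ R/2\leq \vec v\cdot\x\leq R\}$ for $\vec u$ \emph{orthonormal} to $\vec v=\wstar/\|\wstar\|_2$, lower‑bounds $(\vec u\cdot\x)^2$ by $R^2/4$ pointwise on that box, and multiplies by the box's area $R^2/4$ to get $LR^4/16$; it then checks the direction $\vec u=\vec v$ separately. You instead localize to the half‑disk cap $A=\{\|\x_V\|_2\leq R,\ \bu\cdot\x_V\geq R/2\}$ and compute the full second‑moment matrix $\vec M=\int_A\x_V\x_V^\top$, showing it is diagonal (cross term vanishing by the $b\mapsto -b$ symmetry of the cap) with both diagonal entries above $R^4/16$. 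Your route buys something real: the matrix inequality $\succeq\lambda\vec I$ requires the quadratic form bound for \emph{every} unit test vector, not only for vectors parallel or orthogonal to $\wstar$, and a priori the off‑diagonal moment $\EE[(\vec v\cdot\x)(\vec w\cdot\x)\1\{\vec v\cdot\x\geq R/2\}]$ could spoil the bound for intermediate directions; your symmetric choice of region kills that cross term explicitly, whereas the paper's argument leaves this step implicit. Your closing remark on why one may take $\gamma,\lambda\in(0,1]$ (shrinking $R$ and the bound $L\leq 1/(4R^2)$) is also a legitimate point the paper does not address. The integrals you defer are routine and your numerical values ($\approx 0.32R^4$ and $\approx 0.10R^4$, both exceeding $R^4/16$) check out.
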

\begin{proof}
    Let $\vec u, \vec v\in \R^d$ be any two orthonormal vectors and let $V$ be the subspace spanned by $\vec u,\vec v$. We have that 
    \begin{align*}
        \Exx{ (\vec u\cdot\x)^2\1\{\vec v\cdot\x \geq R/2\}}&\geq   \Exx{ (\vec u\cdot\x)^2\1\{R\geq \vec u\cdot \x\geq R/2,R\geq \vec v\cdot \x\geq R/2\}}
        \\&\geq (R^2/4)\Exx{ \1\{R\geq \vec u\cdot \x\geq R/2,R\geq \vec v\cdot \x\geq R/2\}}
        \\& (R^2/4)\int_{R/2}^R\int_{R/2}^R \phi_V(\x) \d \x \geq (R^2/4)L\int_{R/2}^R\int_{R/2}^R  \d \x = L R^4/16\;. 
    \end{align*}
    Furthermore, similarly, we have that $\Exx{ (\vec v\cdot\x)^2\1\{\vec v\cdot\x \geq R/2\}}\geq LR^4/16$. Therefore, $\D_\x$ satisfies \Cref{assmpt:margin} with $\gamma=R/2$ and $\lambda= LR^4/16$.
\end{proof}

\subsection{Symmetric Product Distributions with Strong Concentration}
\label{ssec:sym-prod}

\subsubsection{$k$-Heavy Tailed Symmetric Distributions, $k\geq 7$}

Here we show that symmetric product distributions with sufficiently large polynomial tails satisfy our assumptions.

\begin{proposition}\label{prop:heavy-tail-satisfy-assumptions}
   Let $\D_\x$ be a $k$-Heavy Tailed symmetric distribution with $k\geq 7$ and i.i.d.\ coordinates, i.e., it satisfies $\pr{|\bu\cdot\x|\geq r}\leq B/r^k$ for some absolute constant $B\geq 1$.  Let $\alpha= \E{\x_i^2}$ and $\beta=\E{\x_i^4}$. Suppose $\beta - \alpha^2\geq c\alpha^2$, where $c>0$ is an absolute constant and let $C$ to be suffciently small absolute multiple of $(k-6)c^2\alpha^4/B$. Then, $\D_\x$ satisfies \Cref{assmpt:margin,assmpt:concentration} with $\gamma=\frac{C}{2}(\frac{C^3}{16B})^{1/k}$ and $\lambda= \frac{C^5}{64}(\frac{C^3}{16B})^{2/k}$.
\end{proposition}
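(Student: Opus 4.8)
The plan is to verify the two assumptions separately; the concentration bound is essentially immediate, and the margin bound is the technical substance.

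\textbf{Concentration.} Since $k\geq 7>5$, for every $r\geq 1$ and every unit $\bu$ the hypothesis gives $\prpr[|\bu\cdot\x|\geq r]\leq B/r^k\leq B/r^{5}$, and for $\bu\in\ball(1)$ with $\|\bu\|_2<1$ one rescales by homogeneity, $|\bu\cdot\x|=\|\bu\|_2\,|(\bu/\|\bu\|_2)\cdot\x|$. Hence \Cref{assmpt:concentration} holds with $h(r)=B/r^k$ and $\rho=1$ (in general one may take $\rho=\min\{1,k-4\}$, clipped to $(0,1]$). For later use I would also record, by integrating the tail bound, the moment estimates $\beta=\Ex[\x_i^4]\leq 1+\tfrac{4B}{k-4}$ and $\Ex[\x_i^6]\leq 1+\tfrac{6B}{k-6}$; the second is finite precisely because $k\geq 7$, and this is where the factor $k-6$ in the final parameters originates.

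\textbf{Margin: reduction to a direction-free statement.} Since $\D_\x$ is symmetric, $\x\stackrel{d}{=}-\x$, so for every $\w$ one has $\Ex[\x\x^\top\1\{\w\cdot\x\geq\gamma\|\w\|_2\}]=\tfrac12\Ex[\x\x^\top\1\{|\w\cdot\x|\geq\gamma\|\w\|_2\}]=\tfrac12\big(\alpha\vec I-\Ex[\x\x^\top\1\{|\w\cdot\x|<\gamma\|\w\|_2\}]\big)$, using the product/symmetric structure which gives $\Ex[\x\x^\top]=\alpha\vec I$, $\Ex[(\bu\cdot\x)^2]=\alpha$ and $\Ex[(\bu\cdot\x)^4]=3\alpha^2+(\beta-3\alpha^2)\sum_i\bu_i^4\leq 3\beta$ for unit $\bu$ (here $\alpha^2\le\beta$ by Cauchy--Schwarz). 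It therefore suffices to show that for \emph{every} pair of unit vectors $\bu,\w$, $\Ex[(\bu\cdot\x)^2\1\{|\w\cdot\x|\geq\gamma\}]\geq 2\lambda$; since this holds in every direction it holds for any minimizer $\w^*\in\cWstar$, which is what \Cref{assmpt:margin} requires. Note the fourth-moment gap $\beta-\alpha^2\geq c\alpha^2$ is genuinely needed: without it the coordinates could be two-valued (e.g.\ Rademacher), and then for suitable orthogonal $\bu,\w$ the quantity $\bu\cdot\x$ vanishes exactly on $\{\w\cdot\x\neq 0\}$, so the margin condition fails.

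\textbf{Margin: the main argument.} I would symmetrize the signs, writing $\x=\operatorname{diag}(\epsilon)\,|\x|$ with $\epsilon$ i.i.d.\ Rademacher independent of $|\x|=(|\x_1|,\dots,|\x_d|)$, and condition on $|\x|$; with $a_i=\w_i|\x_i|$, $b_i=\bu_i|\x_i|$, $\rho=\|b\|_2^2$, $\sigma=\|a\|_2^2$, the conditional law of $(\bu\cdot\x,\w\cdot\x)$ is that of the Rademacher sums $(\langle b,\epsilon\rangle,\langle a,\epsilon\rangle)$. Three ingredients then combine. (i) Paley--Zygmund, using $\Ex[\rho^2],\Ex[\sigma^2]\leq\beta$, gives $\prpr[\rho\geq\alpha/2]\geq\alpha^2/(4\beta)$ and $\prpr[\sigma\geq\alpha/2]\geq\alpha^2/(4\beta)$, and a short variance-truncation argument from $\mathrm{Var}(\x_i^2)\geq c\alpha^2$ together with the heavy-tail control of $\Ex[\x_i^4\1\{\x_i^2\geq T\}]$ quantifies that conditioning away from $0$ cannot destroy the second moment of $\bu\cdot\x$ — this is the step that introduces the $c$-dependence. (ii) FKG/Harris positive association (both $\rho$ and $\sigma$ are coordinatewise-increasing functions of $(|\x_i|^2)_i$) gives $\prpr[\rho\geq\alpha/2,\ \sigma\geq\alpha/2]\geq\alpha^4/(16\beta^2)$. (iii) Conditional anti-concentration of $\langle a,\epsilon\rangle$: if some $|a_{i^*}|\geq\gamma$ then conditioning on $\epsilon_{i^*}$ gives $\prpr_\epsilon[|\langle a,\epsilon\rangle|<\gamma]\leq\tfrac12$, while if all $|a_i|<\gamma$ and $\sigma\geq\alpha/2\gg\gamma^2$ a Berry--Esseen bound gives $\prpr_\epsilon[|\langle a,\epsilon\rangle|<\gamma]=O(\gamma/\sqrt\alpha)$. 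Assembling these, and using the heavy-tail inversion $h(r_0)\le C^3/16\iff r_0\ge(16B/C^3)^{1/k}$ — which fixes the scale $\gamma\asymp C/r_0$ and $\lambda\asymp\gamma^2C^3$ once $C$ is taken a sufficiently small multiple of $(k-6)c^2\alpha^4/B$ — yields the claimed bound with the stated $\gamma$ and $\lambda$.

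\textbf{Main obstacle.} The hard part is assembling the conditional estimates of (i) and (iii) into an \emph{unconditional} lower bound on $\Ex[(\bu\cdot\x)^2\1\{|\w\cdot\x|\geq\gamma\}]$ when $\bu$ and $\w$ are nearly orthogonal \emph{and} the coordinates can concentrate near $0$, so that $\alpha$ is small while $\beta/\alpha^2$ is large (heavy-tailed, spiky marginals). A naive union bound fails (the Paley--Zygmund masses of $\bu\cdot\x$ and $\w\cdot\x$ need not sum to more than $1$), and a crude Cauchy--Schwarz bound on $\Ex[(\bu\cdot\x)^2\1\{|\w\cdot\x|<\gamma\}]$ is too lossy exactly in the spiky regime (where the event $\{|\w\cdot\x|<\gamma\}$ can have probability close to $1$ yet contribute almost no second moment). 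The resolution requires the delicate case split of (iii) — roughly, separating configurations where the conditional law of $\w\cdot\x$ is Gaussian-like from those dominated by a single coordinate, and running a leave-one-out computation in the latter — leveraging the fourth-moment gap quantitatively; this is precisely what forces $\gamma$ and $\lambda$ to scale with the awkward powers of $C=\Theta((k-6)c^2\alpha^4/B)$ appearing in the statement.
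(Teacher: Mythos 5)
Your verification of \Cref{assmpt:concentration} and your reduction of \Cref{assmpt:margin} to a statement about a pair of directions are fine, but the heart of the margin argument is not actually proved: you flag it yourself as the ``main obstacle.'' Concretely, in your step (iii) a bound of the form $\prpr_\epsilon[|\langle a,\epsilon\rangle|<\gamma]\leq 1/2$ does not by itself lower bound $\Ex_\epsilon[\langle b,\epsilon\rangle^2\1\{|\langle a,\epsilon\rangle|\geq\gamma\}]$, since conditionally on $|\x|$ the mass of $\langle b,\epsilon\rangle^2$ can sit entirely on the complementary event; your proposed fix (Berry--Esseen when no coordinate of $a$ dominates, a leave-one-out computation when one does) is exactly the delicate part, it is not carried out, and the Berry--Esseen branch requires third-moment control of the $a_i$ relative to $\sigma^{3/2}$ that fails precisely in the spiky regime you are worried about. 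As written, the proposal is a program with the decisive step missing, and it is not clear it would produce the specific constants $\gamma=\frac{C}{2}(\frac{C^3}{16B})^{1/k}$ and $\lambda=\frac{C^5}{64}(\frac{C^3}{16B})^{2/k}$.

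The paper's proof sidesteps the correlation problem entirely by applying Paley--Zygmund to the \emph{product} $V=|\bu\cdot\x||\bv\cdot\x|$ rather than to the two factors separately. One first lower bounds $\E{V}\geq\E{V^2}^2/\E{V^3}$ (Cauchy--Schwarz applied to $V^2=\sqrt{V}\,V^{3/2}$); the product and orthonormality structure give $\E{V^2}=(\beta-3\alpha^2)\sum_i\bu_i^2\bv_i^2+\alpha^2\geq (c/2)\alpha^2$, using $\sum_i\bu_i^2\bv_i^2\leq 1/2$ for orthonormal $\bu,\bv$, while $\E{V^3}\leq\max_{\bu\in\ball(1)}\E{(\bu\cdot\x)^6}\leq 1+6B/(k-6)$ --- this is where $k\geq 7$ and the factor $(k-6)$ enter, as you correctly guessed. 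Paley--Zygmund on $V$ then gives $\pr{V\geq C/2}\geq C^3/4$, and since each factor individually exceeds $t\sqrt{C/2}$ with probability at most $C^3/16$ for $t=\sqrt{2/C}(16B/C^3)^{1/k}$, on an event of probability at least $C^3/16$ both factors are simultaneously at least $\frac{1}{t}\sqrt{C/2}=\frac{C}{2}(\frac{C^3}{16B})^{1/k}$; this yields $\gamma$ and $\lambda$ directly. If you want to salvage your route, the lesson is that the joint localization should be extracted from a single scalar functional of the pair (here the product), not reassembled from marginal Paley--Zygmund and anti-concentration statements.
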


\begin{proof}[Proof of \Cref{prop:heavy-tail-satisfy-assumptions}]
    First, observe that by definition, $\D_\x$ satisfies \Cref{assmpt:concentration} with $h(r) = B/r^k$. We show that $\D_\x$ satisfies \Cref{assmpt:margin} with some absolute constants $\gamma$ and $\lambda$. Let $\vec u, \vec v\in \R^d$ be any two orthonormal vectors. We have that 
    \[
    \Exx{|\vec u\cdot\x|\vec v\cdot \x}=0\;,
    \]
    since the distribution is symmetric.
    Therefore, we have that $ \Exx{|\vec u\cdot\x||\vec v\cdot \x|\1\{\vec v\cdot \x\geq 0\}}=\Exx{|\vec u\cdot\x||\vec v\cdot \x|}/2$.
Let $V=|\vec u\cdot\x||\vec v\cdot \x|$. We show that $\E{V}\gtrsim  c^2\alpha^4(k-6)/B$.
\begin{claim}
    \label{claim:sec5-heavy-tail}
    Let $V=|\vec u\cdot\x||\vec v\cdot \x|$. Assume that $\x$ has i.i.d.\ zero mean coordinates and that $\x$ is $k$-Heavy tailed with parameter $k\geq 7$, $B\geq 1$.   Let $\alpha= \E{\x_i^2}$ and $\beta=\E{\x_i^4}$. If $\beta -\alpha^2\geq c\alpha^2$, where $c>0$ is an absolute constant then, $\E{V}\gtrsim  c^2\alpha^4(k-6)/B$.
\end{claim}
\begin{proof}[Proof of \Cref{claim:sec5-heavy-tail}]
    First, note that we can write $V^2$ as $\sqrt{V}V^{3/2}$, because $V\geq 0$. Therefore, by applying the Cauchy-Schwarz inequality, we have that  
    \begin{align*}
        \E{V^2}^2&\leq \E{V}\E{V^{3}}\;.
        \end{align*}
        Therefore, we have that $\E{V}\geq (\E{V^2})^2/\E{V^{3}}$. We first bound $\E{V^2}$ from below. 
        Let $\alpha= \E{\x_i^2}$ and $\beta=\E{\x_i^4}$.
Observe that
\begin{align*}
\E{V^2}&=\sum_{i_1,i_2,i_3,i_4}\E{\vec u_{i_1} \vec u_{i_2} \vec v_{i_3}\vec v_{i_4} \x_{i_1} \x_{i_2} \x_{i_3} \x_{i_4}}
\\&= \sum_{i_1,i_2,i_1\neq i_2}\E{\vec u_{i_1}^2  \vec v_{i_2}^2 \x_{i_1}^2 \x_{i_2}^2 + 2\vec u_{i_1}\vec v_{i_1}  \vec u_{i_2}\vec v_{i_2} \x_{i_1}^2 \x_{i_2}^2 }
+ \sum_{i}\E{\vec u_{i}^2  \vec v_{i}^2 \x_{i}^4 }
\\&= \alpha^2\sum_{i_1,i_2,i_1\neq i_2}\vec u_{i_1}^2  \vec v_{i_2}^2  + 2\alpha^2\sum_{i_1,i_2,i_1\neq i_2}\vec u_{i_1}\vec v_{i_1}  \vec u_{i_2}\vec v_{i_2} 
+ \beta \sum_{i}\vec u_{i}^2  \vec v_{i}^2 
\\&= \alpha^2\sum_{i_1,i_2,i_1\neq i_2}\vec u_{i_1}^2  \vec v_{i_2}^2  - 2\alpha^2\sum_{i}\vec u_{i}^2\vec v_{i}^2  
+ \beta \sum_{i}\vec u_{i}^2  \vec v_{i}^2 
\\&= \alpha^2\sum_{i}\vec u_{i}^2  (1-\vec v_{i}^2)  - 2\alpha^2\sum_{i}\vec u_{i}^2\vec v_{i}^2  
+ \beta \sum_{i}\vec u_{i}^2  \vec v_{i}^2 
\\&=(\beta-3\alpha^2)\sum_{i}\vec u_{i}^2  \vec v_{i}^2 +\alpha^2
\;,
\end{align*}where we used that $\sum_{i=1}^d \vec v_i \vec u_i=0$ and $\|\bu\|_2 = \|\bv\|_2 = 1$, because $\vec v,\vec u$ are orthonormal. Next, we show that $\sum_{i}\vec u_{i}^2  \vec v_{i}^2$ is less than $1/2$.
\begin{claim}\label{app:claim:supplementary-heavy-tail-satisfy-assumption}
    Let $\vec v,\vec u$ be two orthonormal vectors. Then, $\sum_{i}\vec u_{i}^2  \vec v_{i}^2\leq 1/2$.
\end{claim}
\begin{proof}[Proof of \Cref{app:claim:supplementary-heavy-tail-satisfy-assumption}]
Note that since $\sum_i \bu_i^2 = \sum_i \bv_i^2 = 1$ and $\sum_i \bu_i\bv_i = 0$, it holds
    \begin{gather*}
        1 = \bigg(\sum_i \bu_i^2\bigg)\bigg(\sum_i \bv_i^2\bigg) = \sum_i \bu_i^2\bv_i^2 + \sum_{1\leq i < j\leq d} (\bu_i^2\bv_j^2 + \bu_j^2\bv_i^2)\;,\\
        0 = \bigg(\sum_i \bu_i\bv_i\bigg)^2 = \sum_i \bu_i^2\bv_i^2 + 2\sum_{1\leq i < j\leq d} \bu_i\bv_i\bu_j\bv_j\;.
    \end{gather*}
    Thus, summing the equalities above, we get
    \begin{equation*}
        1 = 2\sum_i \bu_i^2\bv_i^2 + \sum_{1\leq i < j\leq d}(\bu_i\bv_j + \bu_j\bv_i)^2\geq 2 \sum_i \bu_i^2\bv_i^2,
    \end{equation*}
    therefore, we have $\sum_{i}\vec u_{i}^2  \vec v_{i}^2\leq 1/2$.
\end{proof}
Therefore, we have that if $c\geq 2$ then $\beta-3\alpha^2\geq 0$ hence $\E{V^2}\geq \alpha^2$. If $c\leq 2$, then it holds $\E{V^2}\geq (c/2)\alpha^2$. In summary we have $\E{V^2}\geq (c/2)\alpha^2$ for any $c>0$.
 Furthermore, we can bound $\E{V^3}$ from above as the following.
Using Cauchy-Schwarz, we have that $\E{V^3}\leq \max_{\vec u\in \mathcal{B}(1)}\E{(\vec u\cdot\x)^6}$. Recall that $\x$ is a $k$-Heavy Tailed random variable with $k\geq 7$, hence $\E{(\vec u\cdot\x)^6} \leq 1 + 6B/(k-6)$. Therefore, we have that 
\begin{equation}\label{eq:strict-lower-bound-of-EV}
    \E{V} = \E{|\vec u\cdot\x||\vec v\cdot \x|}\geq \frac{c^2\alpha^4}{4 + \frac{24B}{k-6}}.
\end{equation}
This completes the proof of \Cref{claim:sec5-heavy-tail}.
\end{proof}

\begin{lemma}\label{lem:heavy-tail-orthonormal-positive}
    Let $Z=|\vec u\cdot\x||\vec v\cdot \x|\1\{\vec v\cdot \x\geq 0\}$. Assume that, there exists a constant $1>C>0$, so that $\E{Z}\geq C$ and $\E{Z^2}\leq 1/C$. Then it holds
    $$\EE_{\x\sim\D_\x}\bigg[(\vec u\cdot \x)^2\1\bigg\{\vec v\cdot \x\geq \frac{C}{2}\bigg(\frac{C^3}{16B}\bigg)^{1/k}\bigg\}\bigg]\geq \frac{C^5}{64}\bigg(\frac{C^3}{16B}\bigg)^{2/k},$$
    and
    $$\EE_{\x\sim\D_\x}\bigg[(\vec v\cdot \x)^2\1\bigg\{\vec v\cdot \x\geq \frac{C}{2}\bigg(\frac{C^3}{16B}\bigg)^{1/k}\bigg\}\bigg]\geq \frac{C^5}{64}\bigg(\frac{C^3}{16B}\bigg)^{2/k}.$$
\end{lemma}

\begin{proof}[Proof of \Cref{lem:heavy-tail-orthonormal-positive}]
    Using Paley-Zigmund inequality, we have that
    \begin{align*}
        \pr{Z\geq \zeta \E{Z}}&\geq (1-\zeta)^2\frac{\E{Z}^2}{\E{Z^2}}\;.
    \end{align*}
    Therefore, we have that $ \pr{Z\geq C/2}\geq C^3/4$, where $Z=|\vec u\cdot\x||\vec v\cdot \x|\1\{\vec v\cdot \x\geq 0\}$. For simplicity of notation, let's denote $|\vec u\cdot\x|$ and $|\vec v\cdot \x|\1\{\vec v\cdot \x\geq 0\}$ as $a$ and $b$ respectively. Then fixing some $t>0$, it holds:
    \begin{align}
        \frac{C^3}{4}&\leq \pr{ab\geq \frac{C}{2}}\nonumber\\
        & = \pr{ab\geq \frac{C}{2}, a\geq t\sqrt{\frac{C}{2}}, b\leq t\sqrt{\frac{C}{2}}} + \pr{ab\geq \frac{C}{2}, a\leq t\sqrt{\frac{C}{2}}, b\geq t\sqrt{\frac{C}{2}}}\nonumber\\
        &+ \pr{ab\geq \frac{C}{2}, a\geq t\sqrt{\frac{C}{2}}, b\geq t\sqrt{\frac{C}{2}}} + \pr{ab\geq \frac{C}{2}, \frac{1}{t}\sqrt{\frac{C}{2}}\leq a\leq t\sqrt{\frac{C}{2}}, \frac{1}{t}\sqrt{\frac{C}{2}}\leq b\leq t\sqrt{\frac{C}{2}}}.\nonumber
    \end{align}

Note that $\D_\x$ is $k$-Heavy Tailed, thus, when $t\geq \sqrt{\frac{2}{C}}\big(\frac{16B}{C^3}\big)^{1/k}$, it holds 
    $$\pr{a\geq t\sqrt{C/2}} = \pr{|\vec u\cdot\x|\geq t\sqrt{C/2}}\leq \frac{B}{(t\sqrt{C/2})^k}\leq\frac{C^3}{16}.$$
    Similarly, for $b=|\bv\cdot\x|\1\{\bv\cdot\x\geq 0\}\leq |\bv\cdot\x|$ it holds $\pr{b\geq t/\sqrt{C/2}}\leq C^3/16$. Therefore, $\pr{ab\geq C/2}$ can be upper-bounded by
    \begin{equation*}
        \frac{C^3}{4}\leq \pr{ab\geq \frac{C}{2}}\leq \frac{3C^3}{16} + \pr{ab\geq \frac{C}{2}, \frac{1}{t}\sqrt{\frac{C}{2}}\leq a\leq t\sqrt{\frac{C}{2}}, \frac{1}{t}\sqrt{\frac{C}{2}}\leq b\leq t\sqrt{\frac{C}{2}} }.
    \end{equation*}
    Hence, we get
    \begin{equation*}
        \pr{a\geq \frac{1}{t}\sqrt{\frac{C}{2}}, b\geq \frac{1}{t}\sqrt{\frac{C}{2}}}\geq \pr{ab\geq \frac{C}{2}, \frac{1}{t}\sqrt{\frac{C}{2}}\leq a\leq t\sqrt{\frac{C}{2}}, \frac{1}{t}\sqrt{\frac{C}{2}}\leq b\leq t\sqrt{\frac{C}{2}} }\geq \frac{C^3}{16},
    \end{equation*}
    where we choose $t = \sqrt{\frac{2}{C}}\big(\frac{16B}{C^3}\big)^{1/k}$. As a result,
    \begin{equation*}
    \begin{split}
        &\quad \E{(\vec u\cdot \x)^2\1\bigg\{\vec v\cdot \x\geq \frac{C}{2}\bigg(\frac{C^3}{16 B}\bigg)^{1/k}\bigg\}}\\
        &\geq \E{(\vec u\cdot \x)^2\1\bigg\{\vec v\cdot \x\geq \frac{C}{2}\bigg(\frac{C^3}{16 B}\bigg)^{1/k}, |\bu\cdot\x|\geq \frac{C}{2}\bigg(\frac{C^3}{16 B}\bigg)^{1/k}\bigg\}}\\
        &\geq \frac{C^5}{64}\bigg(\frac{C^3}{16 B}\bigg)^{2/k}.
    \end{split}
    \end{equation*}
    Similarly, we also have $\EE_{\x\sim\D_\x}[(\vec v\cdot \x)^2\1\{\vec v\cdot \x\geq \frac{C}{2}(\frac{C^3}{16B})^{1/k}\}]\geq \frac{C^5}{64}(\frac{C^3}{16B})^{2/k}$.
\end{proof}

From \Cref{claim:sec5-heavy-tail} we know that $\EE[Z] = \E{|\vec u\cdot\x||\vec v\cdot \x|\1\{\vec v\cdot \x\geq 0\}} = \frac{1}{2}\E{|\vec u\cdot\x||\vec v\cdot \x|}\gtrsim (k-6)c^2\alpha^4/B$. In addition, using \CS  we have $\EE[Z^2]\leq \max_{\bu\in\ball(1)}\E{(\bu\cdot\x)^4}\leq 5B$, where the last inequality comes from \Cref{claim:bound-H(0)} (note that $\rho = 1$ suffices in when $\D_\x$ is $k$-Heavy Tailed, $k\geq7$). Thus, choosing $C$ to be suffciently small absolute multiple of $(k-6)c^2\alpha^4/B$, it holds $\EE[Z]\geq C$ and $\EE[Z^2]\leq 1/C$. Let $\bv = \w^*/\norm{\w^*}_2$ and let $\bu$ be any vector that is orthonormal to $\bv$. Then by the results of \Cref{lem:heavy-tail-orthonormal-positive}, we know that choosing $\gamma = \frac{C}{2}(\frac{C^3}{16B})^{1/k}$ and $\lambda = \frac{C^5}{64}(\frac{C^3}{16B})^{2/k}$, it holds $\E{\x\x^\top\1\{\w^*\cdot\x\geq \gamma\norm{\w^*}_2\}}\succeq \lambda \vec I$.
\end{proof}

\subsubsection{Discrete Gaussians} \label{ssec:disc-Gaus}

Here we show that our assumptions are satisfied for discrete multivariate Gaussians.

We will use the following standard definition of a discrete Gaussian.

\begin{definition}[Discrete Gaussian]
    We define the discrete standard Gaussian distribution as follows: Fix $\theta\in \R_+$ with $
    \theta>0$. Then, the pmf of the discrete Gaussian distribution is given by
    \begin{align*}
        p(z) = \frac{1}{Z}\exp\left(-\frac{z^2}{2}\right)\1\{z\in \theta\Z\}\;,
        \end{align*}
        
        where $Z$ is a normalization constant. Similarly, we define the high dimensional analogous as follows, we say that a random vector $\vec x\in \R^d$ follows the $d$-dimensinal discrete Gaussian distribution if $\vec x$ is a vector of $d$ independent random variables, each of which follows the discrete Gaussian distribution.
\end{definition}

\begin{corollary}\label{cor:discrete-gaussian-astisfy-assum}
    Let $\theta\in (0,1]$ and let $\D_\x$ be a $d$-dimensional discrete Gaussian distribution with parameter $\theta$. Then, there exists an absolute constant $C>0$, so that $\D_\x$ satisfies \Cref{assmpt:margin,assmpt:concentration} with $\rho = 1$, $B = e^9$, $\gamma = C(C^3/B)^{1/7}$ and $\lambda = C^5(C^3/B)^{2/7}$.
\end{corollary}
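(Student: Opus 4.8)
The plan is to derive this from \Cref{prop:heavy-tail-satisfy-assumptions} applied with $k=7$ and $B=e^{9}$; everything then reduces to checking that the $d$-dimensional discrete Gaussian $\D_\x$ with parameter $\theta\in(0,1]$ meets that proposition's hypotheses. It has i.i.d.\ coordinates and is symmetric by definition, so only two things remain: (a) the polynomial tail bound $\prpr[|\bu\cdot\x|\geq r]\leq e^{9}/r^{7}$ for every $\bu\in\ball(1)$ and $r\geq 1$ (which also yields \Cref{assmpt:concentration} with $\rho=1$, since $e^{9}/r^{7}\leq e^{9}/r^{5}$ for $r\geq1$), and (b) the moment condition: writing $\alpha=\E{\x_1^2}$, $\beta=\E{\x_1^4}$, we need $\beta-\alpha^2\geq c\,\alpha^2$ for an absolute constant $c>0$, with $\alpha$ lying between two absolute positive constants (the latter is needed so that $(k-6)c^2\alpha^4/B$, which governs the constant $C$ in \Cref{prop:heavy-tail-satisfy-assumptions}, is itself an absolute constant). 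Granting (a) and (b), \Cref{prop:heavy-tail-satisfy-assumptions} produces \Cref{assmpt:margin,assmpt:concentration} with margin parameters of the stated shape: \Cref{assmpt:margin} only becomes easier under shrinking $\gamma$ and $\lambda$ (shrinking $\gamma$ enlarges the conditioning event and PSD-increases the matrix), so one may take the absolute constant in the corollary small enough that the asserted $\gamma=C(C^3/B)^{1/7}$ and $\lambda=C^5(C^3/B)^{2/7}$ are dominated by — and in particular lie in $(0,1]$ like — the values output by the proposition.

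Both (a) and (b) I would handle with Poisson summation. For a single coordinate, $\x_1=\theta n$ with $n$ having pmf proportional to $e^{-an^2}$, $a=\theta^2/2\leq \tfrac{1}{2}$, and the Jacobi identity gives $\sum_{n\in\Z}e^{-a(n-s)^2}=\sqrt{\pi/a}\,\sum_{m\in\Z}e^{-\pi^2m^2/a}e^{2\pi i m s}$ for any $s\in\R$; in particular the left side is at most its value at $s=0$, namely $\sum_n e^{-an^2}$. Completing the square in the moment generating function, $\E{e^{\lambda \x_1}}=e^{\lambda^2/2}\,\frac{\sum_n e^{-a(n-\lambda/\theta)^2}}{\sum_n e^{-an^2}}\leq e^{\lambda^2/2}$, so each coordinate is $1$-subgaussian; independence and $\|\bu\|_2\leq1$ make $\bu\cdot\x$ $1$-subgaussian as well, whence $\prpr[|\bu\cdot\x|\geq r]\leq 2e^{-r^2/2}$. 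A one-line calculus check then gives (a): $2e^{-r^2/2}r^{7}\leq e^{9}$ for all $r\geq 1$, because $7\log r-\tfrac{r^2}{2}$ is maximized at $r=\sqrt{7}$ with value $\tfrac{7}{2}(\log 7-1)<9-\log 2$.

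For (b), put $S_j=\sum_{n\in\Z}n^j e^{-an^2}$, so $\alpha=\theta^2 S_2/S_0$ and $\beta=\theta^4 S_4/S_0$, and note that $S_2$ and $S_4$ are, up to sign, the first and second derivatives in $a$ of $S_0=\sqrt{\pi/a}\,\big(1+2\sum_{m\geq1}e^{-\pi^2m^2/a}\big)$. Since $a\leq\tfrac{1}{2}$ over the whole range $\theta\in(0,1]$, the factor $1+2\sum_{m\geq1}e^{-\pi^2m^2/a}$ and its first two $a$-derivatives differ from $1,0,0$ by quantities that are $O(\mathrm{poly}(1/\theta)\,e^{-2\pi^2/\theta^2})$, bounded by a tiny absolute constant (say $10^{-7}$) uniformly in $\theta\in(0,1]$, the worst case being $\theta=1$. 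Consequently $S_0=\tfrac{\sqrt{2\pi}}{\theta}(1+\delta_0)$, $S_2=\tfrac{\sqrt{2\pi}}{\theta^3}(1+\delta_2)$, $S_4=\tfrac{3\sqrt{2\pi}}{\theta^5}(1+\delta_4)$ with $|\delta_i|\leq 10^{-7}$, so $\alpha=\tfrac{1+\delta_2}{1+\delta_0}\in[\tfrac{1}{2},2]$ and $\beta=3\cdot\tfrac{1+\delta_4}{1+\delta_0}\geq 2.9\geq 2\alpha^2$, i.e.\ $\beta-\alpha^2\geq\alpha^2$, giving (b) with $c=1$. I expect the main obstacle to be exactly this uniformity in (b): one must confirm that the Poisson-summation corrections — and, crucially, their two derivatives in $a$, which feed $S_2$ and $S_4$ — stay negligible across all of $\theta\in(0,1]$ rather than only as $\theta\to0$, so that $\alpha$ and $\beta$ can be compared to the continuous-Gaussian moments $1$ and $3$ with no case analysis on $\theta$.
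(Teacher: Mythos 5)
Your proposal is correct and follows the same overall strategy as the paper --- reduce everything to \Cref{prop:heavy-tail-satisfy-assumptions} with $k=7$, $B=e^9$ by verifying the tail bound and the moment gap $\beta-\alpha^2\gtrsim\alpha^2$ --- but the two verifications are carried out differently. For the tail, the paper bounds $\prpr[|X|\geq t]$ for a \emph{single coordinate} (getting subgaussian parameter $2\sqrt 2$ via an integral comparison of the pmf) and then asserts the bound for arbitrary projections $\bu\cdot\x$; your MGF argument via the Jacobi identity ($\E{e^{\lambda\x_1}}\leq e^{\lambda^2/2}$, then tensorize over independent coordinates) is the cleaner route and is in fact what is needed to legitimately pass from per-coordinate tails to tails of $\bu\cdot\x$ for general unit $\bu$, a step the paper glosses over. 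For the moments, the paper mixes methods: Poisson summation to get $\E{X^2}\leq 1$, but a direct Riemann-sum/monotonicity comparison to get $\E{X^4}\geq 4.4/(1+\sqrt{2\pi})\geq 1.25$, yielding $c=0.25$; you use Poisson summation uniformly for $S_0,S_2,S_4$ and compare to the continuous Gaussian moments $1$ and $3$, yielding the stronger $c=1$. Your worry about uniformity of the theta-function corrections over $\theta\in(0,1]$ is well placed and resolves correctly (the worst case is $\theta=1$, i.e.\ $a=1/2$, and all corrections decay like $e^{-2\pi^2}$), though your stated bound $|\delta_4|\leq 10^{-7}$ is slightly optimistic --- the second-derivative correction carries a factor $\pi^4/a^4$ and is closer to $3\times 10^{-6}$ at $\theta=1$ --- which changes nothing in the conclusion $\beta\geq 2\alpha^2$. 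Your remark that \Cref{assmpt:margin} is monotone under shrinking $\gamma$ and $\lambda$ correctly handles the absorption of the proposition's explicit constants ($\tfrac12$, $\tfrac1{64}$, the $16$ inside the root) into the corollary's single absolute constant $C$.
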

\begin{proof}[Proof of \Cref{cor:discrete-gaussian-astisfy-assum}]
    We first show that the discrete Gaussian distribution is subgaussian with an appropriate parameter.
\begin{lemma}\label{lem:discrete-gaussian-sub-gaussian-parameter}
    Let $\theta\in (0,1]$ then the discrete Gaussian distribution is subgaussian with parameter $2\sqrt{2}$.
\end{lemma}
\begin{proof}[Proof of \Cref{lem:discrete-gaussian-sub-gaussian-parameter}]
    By definition, a random variable $X$ is $D$-subgaussian if 
    \[
    \pr{|X|\geq t}\leq \exp(-t^2 /D^2)\;.
    \]
    Let $X\sim \D_\x$, where $\D_\x$ is a discrete Gaussian distribution with parameter $\theta$.
    Fix $t\geq \theta$, Then, we have that
    \begin{align*}
        \pr{|X|\geq t}&\leq \frac{1}{Z}\sum_{z\in \theta \Z, |z|\geq t} \exp\left(-\frac{z^2}{2}\right)
        \leq \frac{2}{Z}\int_{t/\theta-1}^{\infty} \exp\left(-\frac{z^2\theta^2}{2}\right)\d z
        \\&=\frac{2}{\theta Z}\int_{t-\theta}^{\infty} \exp\left(-\frac{z^2}{2}\right)\d z \leq \frac{2}{\theta Z}\int_{t/2}^{\infty}\exp\left(-\frac{x^2}{2}\right)\d x\;,
    \end{align*}
    where for the first inequality, we used the integral mean value theorem and the monotonicity, i.e., $\int_{k}^{k+1}f(t) \d t= f(\xi)$ for $\xi\in(k,k+1)$ and that $f(k+1)\leq f(\xi)\leq f(k)$ because $f$ is a decreasing function. Further note that $2\int_{t/2}^\infty \exp(-x^2/2)\diff{x} = \sqrt{2\pi}\mathrm{erfc}\big(\frac{t}{2\sqrt{2}}\big)\leq \sqrt{2\pi}\exp(-t^2/8)$.
It remains to show that $\theta Z$ is lower-bouned. Note that $\exp(-x^2/2)$ is a decreasing function when $x\geq 0$, therefore for any $z\in\Z_+$ it holds $\exp(-(\theta z)^2/2)\geq \int_{z}^{z+1}\exp(-(\theta x)^2/2)\diff{x}$. Thus, by definition,
\begin{align*}
    \theta Z &= \theta + 2\theta\sum_{z\in\Z_+} \exp((\theta z)^2/2)\\
    &\geq \theta\int_0^1\exp(-(\theta x)^2/2)\diff{x} + 2\theta\sum_{z\in\Z_+} \int_{z}^{z+1}\exp(-(\theta x)^2/2)\diff{x}\\
    &\geq \int_0^\infty\exp(-t^2/2)\diff{t} + \int_1^\infty \exp(-t^2/2)\diff{t} = \sqrt{\frac{\pi}{2}}(2-\mathrm{erf}(1/\sqrt{2}))\geq \sqrt{\frac{\pi}{2}}.
\end{align*}
Thus, combining these results, we get $\pr{|X|\geq t}\leq 4\exp(-t^2/8)$, thus discrete Gaussian is sub-Gaussian with parameter $D=2\sqrt{2}$.

\end{proof}

It remains to show that the discrete Gaussian distribution satisfies the requirements of \Cref{prop:heavy-tail-satisfy-assumptions}. To be specific, we show that it holds
\begin{claim}\label{claim:discrete-gaussian-claim}
    Let $X$ be a discrete Gaussian random variable. Denote $\EE[X^2]$ as $\alpha$ and $\EE[X^4]$ as $\beta$. Then it holds $\alpha\leq 1$ and $\beta\geq 1.25$.
\end{claim}

\begin{proof}[Proof of \Cref{claim:discrete-gaussian-claim}]
By Poisson summation formula, we know that it holds $\sum_{z\in\Z} f(z) = \sum_{z\in\Z} \hat{f}(z)$ where $\hat{f}(t)$ is the fourier transform of $f$, i.e., $\hat{f}(z) = \int_{-\infty}^{+\infty} f(x)e^{-2\pi i x t}\d x$. It is easy to calculate that for $f(z) = \theta^2 z^2\exp\big(-\frac{\theta^2 z^2}{2}\big)$ we have 
$$\hat{f}(t) = \frac{\sqrt{2\pi}}{\theta^3}(\theta^2 - 4\pi^2t^2)\exp\bigg(-\frac{2\pi^2 t^2}{\theta^2}\bigg),$$
and for $g(z) = \exp\big(-\frac{\theta^2 z^2}{2}\big)$, we have
$$\hat{g}(t) = \frac{\sqrt{2\pi}}{\theta}\exp\bigg(-\frac{2\pi^2t^2}{\theta^2}\bigg).$$
Thus, by definition,
\begin{align*}
    \alpha = \EE[X^2] = \frac{\sum_{z\in\Z}f(z)}{\sum_{z\in\Z}g(z)} = \frac{\sum_{z\in\Z}\hat{f}(z)}{\sum_{z\in\Z}\hat{g}(z)} = 1-\frac{\sum_{z\in\Z} \frac{4\pi^2 z^2}{\theta^2}\exp\big(-\frac{2\pi^2 z^2}{\theta^2}\big)}{\sum_{z\in\Z} \exp\big(-\frac{2\pi^2 z^2}{\theta^2}\big)}\leq 1.
\end{align*}

For $\EE[X^4]$, note that $t^4\exp(-t^2/2)$ is an increasing function when $t\in(0,2)$ and is decreasing when $t\in(2,\infty)$. Thus, denote $\Delta z = 1$, then by the property of integral, it holds
    \begin{align*}
        \E{X^4}&=\frac{1}{\theta Z}\sum_{z\in\Z} (\theta z)^4\exp(-(\theta z)^2/2)\theta(\Delta z)\\
        & \geq \frac{2}{\theta Z} \bigg(\int_0^{2-\theta} t^4\exp(-t^2/2)\diff{t} + \int_2^\infty t^4\exp(-t^2/2)\diff{t}\bigg)\\
        &\geq \frac{2}{\theta Z}\bigg(\int_0^1 t^4\exp(-t^2/2)\diff{t} + 2.06\bigg)\geq \frac{4.4}{\theta Z},
    \end{align*}
    where the second inequality is due to the fact that $\int_2^\infty t^4\exp(-t^2/2)\diff{t}\geq 2.06$ and $\theta\in(0,1]$. Further, in the last inequality we used the fact that $\int_0^1 t^4\exp(-t^2/2)\diff{t}\geq 0.14$.

    Now, note that for $\theta Z$, it holds
    \begin{align*}
        \theta Z = \theta + 2\sum_{z\in\Z_+}\exp(-(\theta z)^2/2)\theta(\Delta z)\leq \theta + 2\int_0^\infty \exp\bigg(-\frac{t^2}{2}\bigg)\diff{t}\leq 1 + \sqrt{2\pi}.
    \end{align*}

    Therefore, combining with upper-bound on $\theta Z$, it holds $\EE[X^4]\geq 4.4/(1 + \sqrt{2\pi})\geq 1.25$.
\end{proof}

Now since $\alpha\leq 1$, $\beta\geq 1.25\alpha^2$ and discrete Gaussian is $2\sqrt{2}$-sub-Gaussian as proved in \Cref{lem:discrete-gaussian-sub-gaussian-parameter}, we have $\pr{|\bu\cdot\x|\leq r}\leq \exp(-r^2/8)\leq e^{9}/r^7$. Thus, the conditions in \Cref{prop:heavy-tail-satisfy-assumptions} are satisfied with parameters $B=e^{9}$, $c = 0.25$, $k=7$. Thus, choosing $C$ to be a small multiple of $c^2\alpha^4/B$, then since according to \Cref{claim:sec5-heavy-tail}, $\E{Z}\gtrsim c^2\alpha^4/B\geq C$ and $\E{Z^2}\leq 5B\leq 1/C$, thus, by \Cref{prop:heavy-tail-satisfy-assumptions}, we know that \Cref{assmpt:margin} is satisfied with parameters $\gamma = \frac{C}{2}(\frac{C^3}{16B})^{1/7}$ and $\lambda = \frac{C^5}{64}(\frac{C^3}{16B})^{2/7}$
.

\end{proof}

\subsubsection{Uniform Distribution on $\{-1,0,1\}^d$}

Finally, we show that the uniform distribution on a hyper-grid satisfies our assumptions.

\begin{corollary}\label{cor:uniform-discrete-satify-assum}
     Let $\D_\x$ be a $d$-dimensional uniform distribution over the $\{-1,0,1\}^d$. Then, there exists an absolute constant $C>0$, so that $\D_\x$ satisfies \Cref{assmpt:margin,assmpt:concentration} with $B=1$, $\rho =1$, $\gamma = C(C^3/B)^{1/7}$ and $\lambda = C^5(C^3/B)^{2/7}$.
\end{corollary}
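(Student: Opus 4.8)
The plan is to treat this as the ``bounded discrete'' counterpart of \Cref{cor:discrete-gaussian-astisfy-assum} and to deduce it directly from \Cref{prop:heavy-tail-satisfy-assumptions}. The uniform law on $\{-1,0,1\}^d$ has i.i.d.\ coordinates, each coordinate $\x_i$ is symmetric around $0$, and each coordinate is bounded by $1$; hence $\D_\x$ is a symmetric product distribution of the type covered by \Cref{prop:heavy-tail-satisfy-assumptions}. It therefore suffices to (i) verify the tail bound $\pr{|\bu\cdot\x|\ge r}\le B/r^k$ for $k=7$ with an absolute constant $B$, (ii) check the variance--kurtosis gap $\beta-\alpha^2\ge c\alpha^2$ required by that proposition, and (iii) read off $\gamma,\lambda$ from its conclusion with $k=7$.

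For (ii), note that since $\x_i\in\{-1,0,1\}$ we have $\x_i^{2m}=\x_i^2$ for every $m\ge 1$, so $\E{\x_i^{2m}}=\tfrac23$ for all $m$. In particular $\alpha:=\E{\x_i^2}=\tfrac23$ and $\beta:=\E{\x_i^4}=\tfrac23$, whence $\beta-\alpha^2=\tfrac23-\tfrac49=\tfrac29=\tfrac12\alpha^2$; thus the hypothesis of \Cref{prop:heavy-tail-satisfy-assumptions} holds with the absolute constant $c=\tfrac12$. For (i), since the coordinates are mean-zero and bounded by $1$, for any unit vector $\bu$ the sum $\bu\cdot\x=\sum_i\bu_i\x_i$ is sub-Gaussian with variance proxy $\sum_i\bu_i^2=1$ by Hoeffding, giving $\pr{|\bu\cdot\x|\ge r}\le 2e^{-r^2/2}$, which is bounded by $Br^{-7}$ for all $r\ge 1$ with an absolute constant $B$ (recorded as $B=1,\rho=1$ in the statement). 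In particular $\D_\x$ is $7$-heavy tailed in the sense of \Cref{prop:heavy-tail-satisfy-assumptions} and \Cref{assmpt:concentration} holds; this parallels the sub-Gaussianity step \Cref{lem:discrete-gaussian-sub-gaussian-parameter} used in the discrete Gaussian case.

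Given (i) and (ii), apply \Cref{prop:heavy-tail-satisfy-assumptions} with $k=7$, $c=\tfrac12$, $\alpha=\tfrac23$: taking $C$ a sufficiently small absolute multiple of $(k-6)c^2\alpha^4/B=\tfrac{4}{81B}$, a universal constant, the proposition produces \Cref{assmpt:margin} with $\bv=\w^*/\|\w^*\|_2$ and any $\bu\perp\bv$, together with $\gamma=\tfrac C2\big(\tfrac{C^3}{16B}\big)^{1/7}$ and $\lambda=\tfrac{C^5}{64}\big(\tfrac{C^3}{16B}\big)^{2/7}$, which is the claimed form after absorbing the absolute factors $\tfrac12,\tfrac1{16},\tfrac1{64}$ into $C$. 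The internal hypotheses of \Cref{prop:heavy-tail-satisfy-assumptions} on $Z=|\bu\cdot\x||\bv\cdot\x|\1\{\bv\cdot\x\ge 0\}$, namely $\E{Z}\ge C$ and $\E{Z^2}\le 1/C$, follow from \Cref{claim:sec5-heavy-tail} (which gives $\E{Z}=\tfrac12\E{|\bu\cdot\x||\bv\cdot\x|}\gtrsim (k-6)c^2\alpha^4/B$) and \Cref{claim:bound-H(0)} (which gives $\E{Z^2}\le\max_{\bu\in\ball(1)}\E{(\bu\cdot\x)^4}\le 5B$), for the chosen $C$.

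Since this is a direct reduction to an already-proved proposition, there is no genuine mathematical obstacle: the only real work is the routine moment computation $\alpha=\beta=\tfrac23$ and pinning down the absolute constant $B$ in the polynomial tail bound, after which the parameters $\gamma,\lambda$ drop out. The one step deserving a second look is checking that the chosen $C$ is small enough that $\E{Z}\ge C$ and $\E{Z^2}\le 1/C$ hold simultaneously — but this is exactly the bookkeeping carried out for \Cref{cor:discrete-gaussian-astisfy-assum}, now with the cleaner moment values $\alpha=\beta=\tfrac23$ in place of the discrete-Gaussian estimates.
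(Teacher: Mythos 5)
Your proposal is correct and follows essentially the same route as the paper: both reduce the claim to \Cref{prop:heavy-tail-satisfy-assumptions} by computing $\alpha=\E{\x_i^2}=\beta=\E{\x_i^4}=2/3$ (hence $c=1/2$), verifying the tail condition via sub-Gaussianity of bounded coordinates, and reading off $\gamma,\lambda$ with $k=7$. Your Hoeffding justification of the tail bound is slightly more explicit than the paper's one-line assertion that the distribution is $1$-sub-Gaussian, but the argument is the same.
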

\begin{proof}[Proof of \Cref{cor:uniform-discrete-satify-assum}]
    Note that the distribution is 1-sub-Gaussian. Now since $\beta = \E{\x_i^4} = 2/3$ and $\alpha = \E{\x_i^2} = 2/3$, therefore, $\beta = 1.5\alpha^2$. Thus, the conditions in \Cref{prop:heavy-tail-satisfy-assumptions} are satisfied with parameters $B=1$, $\alpha = 2/3$, $c=0.5$, $k=7$. Now choosing $C$ to be a small multiple of $c^2\alpha^4/B$, then since according to \Cref{claim:sec5-heavy-tail}, $\E{Z}\gtrsim c^2\alpha^4/B\geq C$ and $\E{Z^2}\leq 5B\leq 1/C$, thus, by \Cref{prop:heavy-tail-satisfy-assumptions}, we know that \Cref{assmpt:margin} is satisfied with parameters $\gamma = \frac{C}{2}(\frac{C^3}{16B})^{1/7}$ and $\lambda = \frac{C^5}{64}(\frac{C^3}{16B})^{2/7}$.

\end{proof}

\section{Extension to Certain Non-Monotone Activations}\label{sec:non-monotone-activation-results}

In this section, we extend our algorithmic results to certain cases 
where the activation function is not monotone. 
Specifically, we will consider activations 
like GeLU~\cite{HG16}: $\sigma_{GeLU}(t) = t\Phi(t)$, 
where $\Phi(t)$ is the cdf. of the standard normal random variable $\mathcal{N}(0,1)$ and Swish~\cite{RZL17} defined by $\sigma_{Swish}(t) = \frac{t}{1 + \exp(-t)}.$

\begin{definition}[Non-Monotonic $(\alpha, \beta)$-Unbounded Activations]\label{def:non-monotone-activation}
Let $\sigma: \R \mapsto \R$ be an activation function and let $\alpha, \beta > 0$. We say that $\sigma$ is non-monotonic $(\alpha, \beta)$-unbounded if it satisfies the following assumptions:
\begin{enumerate}
    \item $\sigma(t_2) \geq 0\geq\sigma(t_1)$ for any $t_2\geq 0\geq t_1$;
    \item $\sigma$ is $\alpha$-Lipschitz; and
    \item $\sigma'(t)\geq \beta$ for all $t\in(0,\infty)$.
\end{enumerate}
\end{definition}

As mentioned above, \Cref{def:non-monotone-activation} contains GeLU and Swish.  Indeed, one can show that $\sigma_{GeLU}(t)$ is actually non-monotonic $(1.1, 1/2)$-unbounded, and $\sigma_{Swish}(t)$ is non-monotonic $(1.2, 0.4)$-unbounded. We include the following \Cref{fig:Gelu and Swish} of these activations to provide the readers with a better geometric intuition.

\begin{figure}[h]
\centering
\begin{subfigure}[b]{0.4\textwidth}
\centering
  \includegraphics[width=\textwidth]{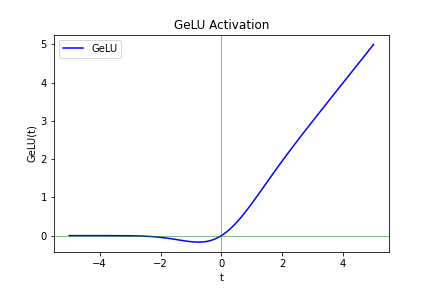}
\end{subfigure}\begin{subfigure}[b]{0.4\textwidth}
\centering
  \includegraphics[width=\textwidth]{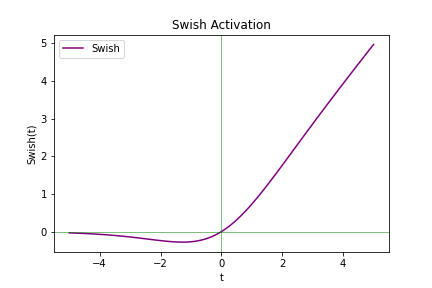}
\end{subfigure}
\caption{Non-Monotonic $(\alpha,\beta)$-Unbounded Activation Examples: GeLU and Swish}
\label{fig:Gelu and Swish}
\end{figure}

Now, we show that truncating a non-monotonic $(\alpha, \beta)$-unbounded activation $\sigma$ to $\hat{\sigma}(t) = [\sigma(t)]_+$ and cutting off the negative part of $y$ induces only a small $L_2^2$ error at point $\w^*$, i.e., $\Ey{(\hat{\sigma}(\wstar\cdot\x) - y\1\{y\geq 0\})^2}\leq \opt$, implying that we can consider running an algorithm similar to \Cref{alg:gd-3} on $\hat{\sigma}(t)$ and truncated $y$.

\begin{lemma}\label{lem:truncate-sigma-and-y}
    Let $\w^* = \argmin_{\w\in\ball(W)} \Ey{(\sigma(\w\cdot\x) - y)^2} = \argmin_{\w\in\ball(W)}\Ltwo(\w)$ and denote $\Ltwo(\w^*)$ as $\opt$. Define $\hat{y} = [y]_+$ and $\hat{\sigma}(t) = [\sigma(t)]_+$. Then:
    \begin{equation*}
        \Ey{(\hat{\sigma}(\wstar\cdot\x) - \hat{y})^2} \leq \opt\;.
    \end{equation*}
\end{lemma}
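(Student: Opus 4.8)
The plan is to show that the truncation operation only decreases the squared error, pointwise inside the expectation. The key observation is that the map $P_+(t) = [t]_+ = \max\{t, 0\}$ is a projection onto the convex set $[0, \infty) \subseteq \R$, hence it is non-expansive: $|P_+(a) - P_+(b)| \le |a - b|$ for all $a, b \in \R$. Applying this with $a = \sigma(\wstar \cdot \x)$ and $b = y$ gives
\[
(\hat{\sigma}(\wstar \cdot \x) - \hat y)^2 = \big(P_+(\sigma(\wstar \cdot \x)) - P_+(y)\big)^2 \le \big(\sigma(\wstar \cdot \x) - y\big)^2
\]
for every $(\x, y)$. Taking expectations over $(\x, y) \sim \D$ then yields
\[
\Ey{(\hat{\sigma}(\wstar\cdot\x) - \hat{y})^2} \le \Ey{(\sigma(\wstar\cdot\x) - y)^2} = \Ltwo(\w^*) = \opt,
\]
which is exactly the claim.

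First I would recall the non-expansiveness of the positive-part map (a one-line argument: $P_+$ is $1$-Lipschitz since its derivative is $0$ or $1$ wherever it exists, or alternatively it is the Euclidean projection onto the closed convex set $\R_{\ge 0}$, and projections onto convex sets are non-expansive). Then I would apply it with the two arguments above and integrate. I should note that this step does \emph{not} use Definition~\ref{def:non-monotone-activation} at all --- the conclusion of Lemma~\ref{lem:truncate-sigma-and-y} as stated holds for \emph{any} activation $\sigma$; the structural assumptions on $\sigma$ (in particular that $\sigma(t_2) \ge 0 \ge \sigma(t_1)$ for $t_2 \ge 0 \ge t_1$, which ensures $\hat\sigma(t) = [\sigma(t)]_+$ is still a sensible activation with $\hat\sigma(0) = 0$ and shares the unboundedness/Lipschitz structure needed downstream) will only be needed later when one actually runs the algorithm on $\hat\sigma$ and $\hat y$.

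There is essentially no obstacle here: the entire content is the non-expansiveness of $t \mapsto [t]_+$, and the rest is monotonicity of expectation. If anything, the only thing to be careful about is making sure the statement being proved is the pointwise-then-integrate inequality and not some sharper claim (e.g., that $\hat\sigma$ composed with a \emph{re-optimized} weight vector beats $\opt$) --- but the lemma only asserts the bound at the \emph{fixed} point $\w^*$, so the naive pointwise argument suffices and no re-optimization is involved.
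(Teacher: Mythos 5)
Your proposal is correct and is essentially identical to the paper's proof, which likewise invokes the non-expansiveness of the projection $t \mapsto [t]_+$ onto $\R_{\ge 0}$ pointwise and then takes expectations. Your added remark that the structural assumptions on $\sigma$ are not needed for this particular lemma is accurate and a nice observation, but it does not change the argument.
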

\begin{proof}
    The proof follows similar ideas in \Cref{app:lem:y-bounded-by-M}. Since $[t]_+$ is a non-expansive projection from $\R$ to $\R^+$, we have $|[t_1]_+ - [t_2]_+|\leq |t_1 - t_2|$ for any $t_1,t_2\in\R$. 
Thus, we get
    \begin{equation*}
        \Ey{(\hat{\sigma}(\wstar\cdot\x) - y')^2} = \Ey{([\sigma(\w^*\cdot\x)]_+ - [y]_+)^2} \leq \Ey{(\sigma(\w^*\cdot\x) - y)^2} = \opt.
    \end{equation*}
\end{proof}

The truncated activation function is a monotonic $(\alpha, \beta)$-unbounded since $\hat{\sigma}(t)$ is increasing when $t\geq 0$ and $\hat{\sigma}(t) = 0$ when $t\leq 0$. Thus, when \Cref{assmpt:margin,assmpt:concentration} hold with respect to distribution $\D_\x$ and $\hat{\sigma}$, we can use a slightly modified algorithm \Cref{alg:gd-4} that works as efficiently as \Cref{alg:gd-3} since \Cref{lem:sharpness-well-behaved} and \Cref{thm:l2-fast-rate-thm-m} can be applied to activation $\hat{\sigma}(t)$ with minor modifications. Formally, we have the following corollaries:

\begin{corollary}\label{cor:sharp-non-mono-activation}
Let $\sigma(t)$ be a non-monotonic $(\alpha, \beta)$-unbounded activation function satisfying \Cref{def:non-monotone-activation}. Suppose that \Cref{assmpt:margin} and \Cref{assmpt:concentration} holds. Further denote $\hat{\sigma}(t) = \sigma(t)\1\{t\geq 0\}$.
    Then the noise-free surrogate loss $\Bar{\mathcal{L}}^{\D,\hat{\sigma}}_{\mathrm{sur}}$ with respect to activation function $\hat{\sigma}(t)$ is $\Omega(\lambda^2\gamma\beta\rho/B)$-sharp in the ball $\ball(2\|\w^*\|_2)$, i.e.,
\begin{equation*}
        \nabla \Bar{\mathcal{L}}^{\D,\hat{\sigma}}_{\mathrm{sur}}(\w)\cdot(\w-\wstar)\gtrsim \lambda^2\gamma\beta\rho/B\norm{\w - \wstar}_2^2,\;\; \forall \w \in \ball(2\|\w^*\|_2).
    \end{equation*}
\end{corollary}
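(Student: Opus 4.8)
The plan is to reduce \Cref{cor:sharp-non-mono-activation} to the already-established sharpness result for monotone activations, namely \Cref{lem:sharpness-well-behaved}, by verifying that the truncated activation $\hat\sigma(t) = \sigma(t)\1\{t\geq 0\}$ is itself a (monotonic) $(\alpha,\beta)$-unbounded activation in the sense of \Cref{def:ub} with $\hat\sigma(0)=0$, i.e., that it satisfies \Cref{assmpt:activation} with the same parameters $\alpha,\beta$. Once this is checked, \Cref{lem:sharpness-well-behaved} applies verbatim to $\hat\sigma$ and the conclusion follows, since the margin and concentration assumptions (\Cref{assmpt:margin} and \Cref{assmpt:concentration}) are assumptions on $\D_\x$ alone and do not involve the activation — except through the choice of $\w^* \in \cWstar$, which is a minor point I return to below.

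The verification itself has a few short steps. First, $\hat\sigma$ is non-decreasing: on $(-\infty,0]$ it is identically zero, on $[0,\infty)$ it equals $\sigma$, which is non-decreasing there because its derivative is at least $\beta>0$ by \Cref{def:non-monotone-activation}(3); and it is continuous at $0$ because $\sigma(0)\le 0\le\sigma(0)$ forces $\sigma(0)=0$ (using \Cref{def:non-monotone-activation}(1) with $t_1=t_2=0$), so the two pieces match. Second, $\hat\sigma$ is $\alpha$-Lipschitz: on each of the two half-lines it is either constant or agrees with the $\alpha$-Lipschitz function $\sigma$, and Lipschitzness is preserved across the glue point since both pieces take the common value $0$ at $t=0$; a clean way to say this is that $\hat\sigma(t) = \max\{\sigma(t),0\} - \sigma(t)\1\{t<0\}$ is not quite the right decomposition, so instead I would just note $\hat\sigma = [\sigma]_+ \circ (\text{identity})$ restricted appropriately — more simply, for $t_1<t_2$ one checks the three cases $0\le t_1$, $t_2\le 0$, and $t_1<0<t_2$ directly, the last using $\sigma(0)=0$ and $\hat\sigma(t_1)=0$. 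Third, $\hat\sigma'(t)=\sigma'(t)\ge\beta$ for all $t>0$, which is exactly \Cref{def:non-monotone-activation}(3). Finally $\hat\sigma(0)=0$. Hence $\hat\sigma$ satisfies \Cref{assmpt:activation} with parameters $(\alpha,\beta)$.

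With $\hat\sigma$ shown to be a controlled activation, I would invoke \Cref{lem:sharpness-well-behaved} applied to $\hat\sigma$: under \Cref{assmpt:margin} and \Cref{assmpt:concentration} (with the $\w^*$ witnessing the margin condition), the noise-free surrogate $\bar{\mathcal{L}}^{\D,\hat\sigma}_{\mathrm{sur}}$ is $\Omega(\lambda^2\gamma\beta\rho/B)$-sharp on $\ball(2\|\w^*\|_2)$, i.e. $\nabla\bar{\mathcal{L}}^{\D,\hat\sigma}_{\mathrm{sur}}(\w)\cdot(\w-\w^*)\gtrsim (\lambda^2\gamma\beta\rho/B)\|\w-\w^*\|_2^2$ for all $\w\in\ball(2\|\w^*\|_2)$, which is precisely the claimed statement. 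One subtlety worth a sentence in the write-up: the statement of \Cref{cor:sharp-non-mono-activation} takes $\w^*$ to be a minimizer of the \emph{original} $L_2^2$ loss with activation $\sigma$, whereas \Cref{lem:sharpness-well-behaved} speaks of the $\w^*$ in the margin assumption; since \Cref{assmpt:margin} is stated directly in terms of a fixed $\w^*$ (required to satisfy the margin inequality), the corollary should be read as: \emph{for the $\w^*$ appearing in \Cref{assmpt:margin}}, the surrogate built from $\hat\sigma$ around that $\w^*$ is sharp — and this is exactly what the proof of the main non-monotone result needs downstream, in conjunction with \Cref{lem:truncate-sigma-and-y}. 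I expect the main (and only mild) obstacle to be getting the Lipschitz-at-the-glue-point argument airtight, which really just hinges on the identity $\sigma(0)=0$; everything else is immediate from unwinding definitions, so there is no genuine analytic difficulty here beyond bookkeeping.
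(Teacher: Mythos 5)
Your proposal is correct and follows exactly the paper's route: the paper's own proof simply observes that $\hat\sigma$ is monotonic $(\alpha,\beta)$-unbounded and invokes \Cref{lem:sharpness-well-behaved}. Your additional verification that $\hat\sigma$ satisfies \Cref{assmpt:activation} (monotonicity, $\alpha$-Lipschitzness across the glue point via $\sigma(0)=0$, the derivative lower bound on $(0,\infty)$, and $\hat\sigma(0)=0$) is a correct filling-in of details the paper leaves implicit.
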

\begin{proof}
    Since $\hat{\sigma}$ is monotonic $(\alpha, \beta)$-unbounded, we have proven in \Cref{lem:sharpness-well-behaved} for monotonic $(\alpha, \beta)$-unbounded activations and distribution $\D_\x$ satisfying \Cref{assmpt:activation} to \Cref{assmpt:concentration}, $\Bar{\mathcal{L}}^{\D,\hat{\sigma}}_{\mathrm{sur}}$ is $\Bar{\mu}$ sharp with the parameter $\Bar{\mu} = \Omega(\lambda^2\gamma\beta\rho/B)$.
\end{proof}

\begin{corollary}\label{cor:l2-fast-rate-thm-for-non-monotone}
    Let $\sigma(t)$ be a non-monotonic $(\alpha,\beta)$-unbounded activation function, satisfying \Cref{def:non-monotone-activation}. Fix $\eps>0$ and $W>0$ and suppose \Cref{assmpt:margin,assmpt:concentration} hold.
    Let $\opt$ denote the minimum value of the $L_2^2$ loss i.e., 
    $$\opt = \min_{\w\in\ball(W)}\Ey{(\sigma(\w\cdot\x) - y)^2}.$$
   Let $\mu:=\mu(\lambda,\gamma,\beta,\rho,B)$ be a sufficiently small multiple of $\lambda^2\gamma\beta\rho/B$, and let $M = \alpha W H_2^{-1}\big(\frac{\eps}{4\alpha^2 W^2}\big)$.
Further, choose parameter $r_\eps$ large enough so that $H_4(r_\eps)$ is a sufficiently small multiple of $\eps$.
Then after
    \begin{equation*}
        T = \wt{\Theta}\left(\frac{B^2\alpha^2}{\rho^2\mu^2}\log\lp(\frac{W}{ \eps}\rp)\right)
\end{equation*}
    iterations with batch size $N = \wt{\Omega}(dT(r_\eps^2 + \alpha^2M^2)),$
    \Cref{alg:gd-4} converges to a point $\w^{(T)}$ such that $\Ltwo(\vec w^{(T)}) = O\lp(\frac{B^2\alpha^2}{\rho^2 \mu^2}\opt \rp)+\eps$ with probability at least $2/3$.
\end{corollary}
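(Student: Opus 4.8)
The plan is to reduce the non-monotone case to the monotone analysis already carried out for \Cref{app:thm:l2-fast-rate-thm}. First I would fix $\hat\sigma(t)=[\sigma(t)]_+$ and $\hat y=[y]_+$, and note that \Cref{alg:gd-4} is just \Cref{app:alg:gd-3} run with $\sigma$ replaced by $\hat\sigma$ and with each label replaced by its projection $\min\{[y]_+,M\}$ onto $[0,M]$. The first step is to verify that $\hat\sigma$ is a \emph{monotonic} $(\alpha,\beta)$-unbounded activation with $\hat\sigma(0)=0$: it is non-decreasing, it is $\alpha$-Lipschitz as the composition of the non-expansive map $[\cdot]_+$ with the $\alpha$-Lipschitz $\sigma$, and $\hat\sigma'(t)=\sigma'(t)\geq\beta$ for $t>0$ by \Cref{def:non-monotone-activation}. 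Hence \Cref{assmpt:activation} holds for $\hat\sigma$, while \Cref{assmpt:margin,assmpt:concentration} concern $\D_\x$ only and are unaffected by the truncation; crucially, they hold with respect to the same $\w^*$ that minimizes the original $\Ltwo$.

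The second step is to pin down the effective noise level of the truncated problem. \Cref{lem:truncate-sigma-and-y} already gives $\Ey{(\hat\sigma(\w^*\cdot\x)-\hat y)^2}\leq\opt$, so $\w^*$ is still a near-optimal point for the surrogate built from $(\hat\sigma,\hat y)$. I would then establish the analogue of \Cref{app:lem:y-bounded-by-M}: with $M=\alpha W H_2^{-1}(\eps/(4\alpha^2W^2))$, projecting $\hat y$ further onto $[0,M]$ costs at most $\eps$ of $L_2^2$ error at $\w^*$, i.e.\ $\Ey{(\hat\sigma(\w^*\cdot\x)-\min\{[y]_+,M\})^2}\leq\opt+\eps$. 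The proof is the same two-case split as in \Cref{app:lem:y-bounded-by-M}: on $\{|\w^*\cdot\x|\leq M/\alpha\}$ one has $\hat\sigma(\w^*\cdot\x)\leq\alpha|\w^*\cdot\x|\leq M$, so the projection fixes $\hat\sigma(\w^*\cdot\x)$ and, being non-expansive, only decreases the error; on the complement the error is at most $4\alpha^2\|\w^*\|_2^2\,H_2(M/(\alpha W))\leq\eps$. The consequence is that the quantity that plays the role of ``$\opt$'' in every estimate of \Cref{app:thm:l2-fast-rate-thm} is bounded by $\opt+\eps$.

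The third step is to reuse the existing machinery essentially verbatim. By \Cref{cor:sharp-non-mono-activation} the noise-free surrogate for $\hat\sigma$ is $\Omega(\lambda^2\gamma\beta\rho/B)$-sharp on $\ball(2\|\w^*\|_2)$ with respect to the same $\w^*$, so \Cref{app:prop:sharpness-to-true-loss}, \Cref{app:cor:true-sharpness}, the refined gradient-variance bounds of \Cref{app:lem:empirical-grad-approx-population} and \Cref{app:cor:empirical-grad-variance}, and the per-iteration contraction $\|\w^{(t+1)}-\w^*\|_2^2\leq(1-C)\|\w^{(t)}-\w^*\|_2^2+C'(\opt+\eps)$ all go through unchanged. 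This yields, after $T=\wt\Theta((B^2\alpha^2/(\rho^2\mu^2))\log(W/\eps))$ iterations with batch size $N=\wt\Omega(dT(r_\eps^2+\alpha^2M^2))$, a point $\w^{(T)}$ with $\|\w^{(T)}-\w^*\|_2^2\lesssim(B/(\rho\mu^2))(\opt+\eps)$ with probability at least $2/3$. Finally I translate this back to the original loss using $(a+b)^2\leq2a^2+2b^2$, the $\alpha$-Lipschitzness of the original $\sigma$, and $\EE_{\x\sim\D_\x}[\x\x^\top]\preceq(5B/\rho)\vec I$ from \Cref{claim:bound-H(0)}:
\begin{equation*}
\Ltwo(\w^{(T)})\leq 2\,\Ltwo(\w^*)+\tfrac{10B\alpha^2}{\rho}\|\w^{(T)}-\w^*\|_2^2\leq O\!\left(\tfrac{B^2\alpha^2}{\rho^2\mu^2}\right)\opt+\eps ,
\end{equation*}
after rescaling $\eps$ by a constant.

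I expect the main obstacle to be bookkeeping rather than any individual estimate: one must run the entire proof of \Cref{app:thm:l2-fast-rate-thm} with the original $L_2^2$-minimizer $\w^*$ (the one guaranteed by \Cref{assmpt:margin}) in the role of the reference point, rather than the minimizer of the truncated surrogate, and check that every appearance of ``$\opt$'' there is really the noise level $\Ey{(\hat\sigma(\w^*\cdot\x)-\hat y')^2}$ of the truncated problem --- which the previous steps bound by $\opt+\eps$. Since neither the margin nor the concentration assumption is touched by truncating $\sigma$ or $y$, and since $\hat\sigma$ retains the $\alpha$-Lipschitz and $\beta$-lower-bounded-derivative properties, no part of the monotone analysis needs to be reproved.
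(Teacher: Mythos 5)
Your proposal is correct and follows essentially the same route as the paper's proof: truncate to $\hat\sigma=[\sigma]_+$ and $\hat y=[y]_+$, verify that $\hat\sigma$ is a monotonic $(\alpha,\beta)$-unbounded activation so that the sharpness and convergence machinery for the monotone case applies verbatim with effective noise level $\Ey{(\hat\sigma(\w^*\cdot\x)-\hat y)^2}\leq\opt$ (via \Cref{lem:truncate-sigma-and-y}), and then translate the bound on $\|\w^{(T)}-\w^*\|_2^2$ back to $\Ltwo$ using the $\alpha$-Lipschitzness of the original $\sigma$ and the second-moment bound. Your explicit attention to keeping the original $L_2^2$-minimizer $\w^*$ as the reference point throughout is a point the paper treats only implicitly, but it does not change the argument.
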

\begin{proof}
    First observe that since the $\alpha$-Lipschitz property remains for non-monotonic $(\alpha, \beta)$-unbounded functions, the following is still valid:
    \begin{align}
         \Ltwo(\w) & = \Ey{(\sigma(\w\cdot\x) - y)^2} \nonumber\\
         &\leq 2\E{(\sigma(\w\cdot\x) - \sigma(\w^*\cdot\x))^2} + 2\Ey{(\sigma(\w^*\cdot\x) - y)^2} \nonumber\\
         &\leq 2\alpha^2 \E{((\w - \w^*)\cdot\x)^2} + 2\opt \nonumber\\
         &\leq (10B\alpha^2/\rho) \norm{\w - \w^*}_2^2 + 2\opt. \label{eq:non-monotonic:L_2(w^T) <= 2opt + 20B alpha^2 ||w^T-w*||^2}
     \end{align}

     Now denote $\hat{\eps} = \Ey{(\hat{\sigma}(\w\cdot\x) - \hat{y})^2}$. If one can show that after $T = \wt{\Theta}\left(B^2\alpha^2/(\rho^2\mu^2)\log\lp(W/\eps\rp)\right)$ iterations with a large enough batch size $N = \wt{\Omega}(dT(r_\eps^2 + \alpha^2 M^2))$,
     \Cref{alg:gd-4} generates a point $\w^{(T)}$ such that it holds 
     \begin{equation}\label{eq:non-monotone-||w^T-w*||<=eps}
         \norm{\w^{(T)} - \w^*}_2^2\leq \frac{64B}{\rho\mu^2}\hat{\eps} +\epsilon,
     \end{equation}
     then, since we showed in \Cref{lem:truncate-sigma-and-y} that $\hat{\eps}\leq \opt$, combining with \Cref{eq:non-monotonic:L_2(w^T) <= 2opt + 20B alpha^2 ||w^T-w*||^2} we immediately get
     \begin{equation*}
         \Ltwo(\w)\leq \lp(2 + \frac{640B^2\alpha^2}{\rho^2\mu^2}\rp) \opt + (10B\alpha^2/\rho)\epsilon,
     \end{equation*}
     thus completing the corollary.

     In order to prove the claim above and \Cref{eq:non-monotone-||w^T-w*||<=eps}, one only needs to observe that $\hat{\sigma}(t)$ is monotonic $(\alpha,\beta)$-unobounded, and \Cref{assmpt:activation} to \Cref{assmpt:concentration} holds for $\hat{\sigma}(t)$ and the distribution $\D_\x$. Therefore, the exact same techniques for proving \Cref{thm:l2-fast-rate-thm-m} (see \Cref{app:full-version-sec-4.2}) can be applied. Results similar to \Cref{app:lem:y-bounded-by-M} and \Cref{app:lem:empirical-grad-approx-population} still hold for activation function $\hat{\sigma}(t)$ and data points $(\x,\hat{y})$, with the only difference being that we have $\Ey{(\hat{\sigma}(\w^*\cdot\x) - \hat{y})^2} = \hat{\eps}$ instead of $\opt$. Moreover, it has proven in \Cref{cor:sharp-non-mono-activation} that $\Bar{\mathcal{L}}^{\D,\hat{\sigma}}_{\mathrm{sur}}$ is $\Bar{\mu}$ sharp, therefore by \Cref{cor:true-sharpness-m} we know $\Lsur$ is $\Bar{\mu}/2$ sharp. Thus, \Cref{eq:non-monotone-||w^T-w*||<=eps} follows from the same steps and the same choice of parameters as in the proof of \Cref{thm:l2-fast-rate-thm-m} (see \Cref{app:full-version-sec-4.2}).
     
\end{proof}

\begin{algorithm}[tb]
   \caption{Stochastic Gradient Descent on Surrogate Loss For Non-Monotonic $(\alpha, \beta)$-Unbounded Activations}
   \label{alg:gd-4}
\begin{algorithmic}
   \STATE {\bfseries Input:} Iterations: $T$, sample access from $\D$, batch size $N$, step size $\eta$, bound $M$.
   \STATE Initialize $\vec w^{(0)} \gets \vec 0$.
\FOR{$t=1$ {\bfseries to} $T$}
\STATE Draw $N$ samples $\{(\x(j), y(j))\}_{j=1}^N\sim\D$.
   \STATE for each $j\in[N]$, $y(j)\gets\min([y(j)]_+,M)$ \STATE Let $\hat{\sigma}(t) = [\sigma(t)]_+$, calculate 
   $$\vec g^{(t)} \gets\frac{1}{N}\sum_{j=1}^N (\hat{\sigma}(\w^{(t)}\cdot\x(j)) - y(j))\x(j),$$ 

   \STATE $\vec w^{(t+1)} \gets \vec w^{(t)}-\eta \vec{g}^{(t)}$.
\ENDFOR
   \STATE {\bfseries Output:} The weight vector $\w^{(T)}$.
\end{algorithmic}
\end{algorithm}

\end{document}